\documentclass[twoside,11pt]{article} 
\usepackage{jmlr2e}

\usepackage{amsmath}
\usepackage{dsfont}
\usepackage{color}
\usepackage{esint}
\usepackage{enumerate}
\usepackage{amsfonts}
\usepackage{amssymb}%
\usepackage{braket}
\usepackage{bm}
\usepackage{graphicx}
\usepackage{caption}
\usepackage{subcaption}
\usepackage{longtable}
\usepackage{mathrsfs}
\usepackage{xcolor}
\usepackage{amsfonts}
\usepackage{algorithm}
\usepackage[noend]{algpseudocode}
\usepackage{float}

\makeatletter
\def\BState{\State\hskip-\ALG@thistlm}
\makeatother

\usepackage{lastpage}
\jmlrheading{20}{2019}{1-\pageref{LastPage}}{11/17; Revised
9/19}{12/19}{17-698}{Nicol\'as Garc\'ia Trillos, Zachary Kaplan, Thabo Samakhoana, Daniel Sanz-Alonso}
\ShortHeadings{On the consistency of graph-based Bayesian  semi-supervised learning}{Garc\'ia Trillos, Kaplan, Samakhoana, Sanz-Alonso}
\firstpageno{1}

\graphicspath{ {./images/} }

\providecommand{\U}[1]{\protect\rule{.1in}{.1in}}

\newtheorem{assumption}{Assumption}

\newcommand{\veps}{\varepsilon}

\renewcommand{\H}{\mathcal{H}}
\newcommand{\I}{\mathcal{I}}

\newcommand{\x}{\textbf{x}}

\newcommand{\vol}{\text{vol}_{\mathcal{M}}}
\newcommand{\M}{\mathcal{M}}

\newcommand{\length}{length}
\newcommand{\red}{\color{red}}
\newcommand{\blue}{\color{blue}}
%\definecolor{mygreen}{rgb}{0.1,0.75,0.2}
\newcommand{\grn}{\color{mygreen}}
\newcommand{\nc}{\normalcolor}
\newcommand{\E}{\mathbb{E}}

\newcommand{\Osc}{osc}
\newcommand{\argmin}{\mbox{argmin}}
\newcommand{\1}{\mathds{1}}
\newcommand{\R}{\mathds{R}}
\newcommand{\N}{\mathds{N}}
\newcommand{\B}{\mathcal{B}}

\newcommand{\Z}{\mathds{Z}}
\renewcommand{\S}{\mathcal{S}}
\newcommand{\C}{\mathcal{C}}
\newcommand{\G}{\mathcal{G}}

\renewcommand{\O}{\mathcal{O}}
\newcommand{\F}{\mathcal{F}}
\newcommand{\h}{\mathcal{H}}
\newcommand{\X}{\mathcal{X}}
\newcommand{\T}{\mathbb{T}}
\renewcommand{\P}{\mathcal{P}}
\newcommand{\converges}[1]{ \overset{#1}{\longrightarrow}}

\newcommand{\m}{\mathbf{m}}
\newcommand{\cut}{Cut}

\newcommand{\y}{\mathbf{y}}
\newcommand{\z}{\mathbf{z}}
\newcommand{\bH}{\mathbf{H}}
\renewcommand{\H}{\mathcal{H}}
\newcommand{\bnu}{\bm{\nu}}
\newcommand{\dom}{{[0,1]^d}}
\newcommand{\spn}{Span}
\newcommand{\tacka}{\,\cdot\,}
\newcommand{\tG}{\widetilde{G}}

\newcommand{\pii}{\boldsymbol{\pi}}
\newcommand{\pit}{\widetilde{\boldsymbol{{\pi}_{n}}}}
\newcommand{\mut}{\widetilde{\boldsymbol{{\mu}_{n}}}}
\newcommand{\pin}{\boldsymbol{{\pi}_{n}}}
\newcommand{\mun}{\boldsymbol{{\mu}_{n}}}
\newcommand{\muu}{\boldsymbol{\mu}}
\newcommand{\dkl}{D_{\mbox {\tiny{\rm KL}}}}
\DeclareMathOperator{\Dom}{Dom}
\DeclareMathOperator{\Cut}{Cut}
\DeclareMathOperator{\card}{card}

\DeclareMathOperator{\trace}{Tr}
\DeclareMathOperator{\Var}{Var}
\DeclareMathOperator{\inte}{int}
\DeclareMathOperator{\tangent}{tan}
\DeclareMathOperator{\id}{Id}
\DeclareMathOperator{\supp}{supp}
\DeclareMathOperator{\grad}{grad}
\DeclareMathOperator{\dist}{dist}
\DeclareMathOperator{\relint}{resint}
\DeclareMathOperator{\diam}{diam}
\DeclareMathOperator{\diamP}{\diam_\theta(\mathcal{P}(\mathcal{K}_N))}
\DeclareMathOperator{\Ric}{Ric}
\DeclareMathOperator{\rank}{rank}
\DeclareMathOperator{\Null}{null}
\DeclareMathOperator{\Bin}{Bin}
\DeclareMathOperator{\divergence}{div}
\DeclareMathOperator{\mean}{m_2}
\DeclareMathOperator{\Proj}{Proj}
\DeclareMathOperator{\Prob}{\mathbb{P}}
\DeclareMathOperator{\esssup}{esssup}
\DeclareMathOperator{\Lip}{Lip}
\DeclareMathOperator{\Span}{span}
\newcommand{\charf}{\mathds{1}}

\definecolor{mygreen}{rgb}{0.1,0.75,0.2}

\makeatletter
\newcommand{\distas}[1]{\mathbin{\overset{#1}{\kern\z@\sim}}}%
\newsavebox{\mybox}\newsavebox{\mysim}
\newcommand{\distras}[1]{%
  \savebox{\mybox}{\hbox{\kern3pt$\scriptstyle#1$\kern3pt}}%
  \savebox{\mysim}{\hbox{$\sim$}}%
  \mathbin{\overset{#1}{\kern\z@\resizebox{\wd\mybox}{\ht\mysim}{$\sim$}}}%
}
\makeatother

\newcommand{\applied}[2]{\langle #1,#2\rangle}

\begin{document}
\title{On the consistency of graph-based Bayesian  semi-supervised learning and the scalability of sampling algorithms}

\author{\name Nicol\'{a}s Garc\'{i}a Trillos \email garciatrillo@wisc.edu \\
        \addr Department of Statistics \\
        University of Wisconsin-Madison \\
        Madison, WI 53706, USA
        \AND
        \name Zachary Kaplan  \email zachary.abraham.kaplan@gmail.com \\
	\addr Division of Applied Mathematics \\
        Brown University \\
        Providence, RI 02912, USA
        \AND
        \name Thabo Samakhoana  \email thabo\_samakhoana@alumni.brown.edu \\
        \addr Division of Applied Mathematics \\
        Brown University \\
        Providence, RI 02912, USA
        \AND
        \name Daniel Sanz-Alonso  \email sanzalonso@uchicago.edu \\
        \addr Department of Statistics \\
        University of Chicago \\
        Chicago, IL 60637, USA}

\editor{Sanjoy Dasgupta}

\maketitle

\begin{abstract}%We introduce new theory that gives appropriate scalings of graph parameters that provably lead to a well-defined limiting posterior as the size of the unlabeled data set grows.
This paper considers a Bayesian approach to graph-based semi-supervised learning. 
We show that if the graph parameters are suitably scaled, the graph-posteriors converge to a continuum limit as the size of the unlabeled data set grows. This consistency result has profound algorithmic implications:  we prove that when consistency holds, carefully designed Markov chain Monte Carlo algorithms have a uniform spectral gap, independent of the number of unlabeled inputs. Numerical experiments illustrate and complement the theory.
%framework. 
\end{abstract}
%A popular approach to semi-supervised learning proceeds by endowing the input data with a graph structure in order to extract geometric information and incorporate it into a Bayesian 
%framework. We introduce new theory that gives appropriate scalings of graph parameters that provably lead to a well-defined limiting posterior as the size of the unlabeled data set grows. Furthermore, we show that these consistency results have profound algorithmic implications. When consistency holds, carefully designed graph-based Markov chain Monte Carlo algorithms are proved to have a uniform rate of convergence, independent of the number of unlabeled inputs. Several numerical experiments  illustrate and complement the theory.

\begin{keywords}
semi-supervised learning, graph-based learning, Markov chain Monte Carlo, spectral gap
\end{keywords}

\makeatletter
\def\blfootnote{\gdef\@thefnmark{$\dagger$}\@footnotetext}
\makeatother

\blfootnote{All authors contributed equally to this work.}

\section{Introduction}\label{sec:introduction}

The aim of this paper is to contribute to the theoretical and methodological understanding of graph-based semi-supervised learning and its Bayesian formulation. Semi-supervised learning makes use of \emph{labeled} and \emph{unlabeled} data for training.  Labeled data consists of pairs of inputs and outputs, while unlabeled data consists only of inputs. We focus on the inductive learning task of inferring the hidden map from inputs to outputs. We work under the classical assumption that the inputs are concentrated on a low dimensional manifold embedded in a higher dimensional ambient space. Traditional graph-based optimization methods find a suitable input/output map by minimizing an objective functional comprising of at least two terms:
\begin{enumerate}[i)]
\item A regularization term involving a graph-Laplacian built using only the input data. Regularization promotes smoothness of the recovered map along the input manifold. 
\item A data-misfit term that promotes that the recovered map is accurate over the labeled data.
\end{enumerate}
Graph-based optimization methods will be reviewed below. In this paper we study 
a graph-based \emph{Bayesian} approach that, instead of recovering a single input/output map, gives a  \emph{posterior} probability distribution over maps. The posterior contains information on the most likely maps to have produced the training data, but also on the uncertainty remaining in the recovery.
As in optimization methods, the Bayesian posterior is found by balancing a smoothness penalty and a data misfit penalty. These competing forces are encoded in a prior distribution and a likelihood function.

\begin{enumerate}[I)]
\item The prior distribution serves as a regularization that promotes maps that satisfy certain smoothness conditions. The prior covariance will be defined using a graph-Laplacian built using only the input data. 
\item The likelihood function plays the role of a data-misfit functional and promotes maps that are accurate on the labeled data.
\end{enumerate}

We investigate the convergence of posterior distributions and the scaling of sampling algorithms in the limit of training large numbers of unlabeled examples. We consider $\varepsilon$-graphs, which connect any two inputs whose distance is less than $\varepsilon$. Our results guarantee that, provided that the connectivity parameter $\varepsilon$ is suitably scaled with the number of inputs, the graph-based posteriors converge, as the size of the unlabeled data set grows, to a continuum posterior. Moreover we show that, under the existence of a continuum limit, carefully designed graph-based Markov chain Monte Carlo (MCMC) sampling algorithms have a uniform spectral gap, independent of the number of unlabeled examples.   Roughly speaking our results imply that the number of Markov chain iterations needed to achieve a given accuracy is independent of the number of unlabeled data points. However, the cost per iteration will, in general, depend on the size of the data-set.

The continuum limit theory that we bring forward is of interest in three distinct ways. First, it establishes the statistical consistency of graph-based semi-supervised learning methods in machine learning; second, it suggests suitable scalings of graph parameters of practical interest (e.g. see the conditions in the parameter $s$ in Theorem \ref{th:interpolant}, the scalings for the graph connectivity $\veps$ also in Theorem \ref{th:interpolant}, and the truncation point for the spectrum of the graph-Laplacian in \eqref{def:graphprior} used to construct the prior in Theorem \ref{th:interpolant}); and third, statistical consistency is shown to go hand in hand with algorithmic scalability: when graph-based learning problems have a continuum limit, algorithms that exploit this limit structure converge in a number of iterations that is independent of the size of the unlabeled data set. The theoretical understanding of these questions relies heavily on recently developed bounds for the asymptotic behavior of the spectra of graph-Laplacians. 

Our presentation brings together various approaches to semi-supervised learning, and highlights the similarities and differences between optimization and Bayesian formulations. We include a computational study that suggests directions for further theoretical developments, and illustrates the non-asymptotic relevance of our asymptotic results.

\subsection{Problem Description} We now provide a brief intuitive problem description; a fully rigorous account is given in section \ref{sec:contandgraphBIP}. We highlight the generality of our setting, which covers a wide class of methods for semi-supervised regression and classification, including probit and logistic Bayesian methods.

We assume to be given $n$ inputs lying on an \emph{unknown} $m$-dimensional manifold $\M\subset \R^d$, $p$ of which are labeled. The collection of input data will be denoted by $\M_n = \{\x_1, \ldots, \x_n\}\subset \M,$ and we denote by $y\in \R^p$ the vector of labels. The pairs of inputs/outputs $(\x_1,y_1), \ldots, (\x_p,y_p)$ form the labeled data and the inputs $\x_{p+1},\ldots, \x_n$ are unlabeled examples. Our goal is to use the observed data to learn a label for each point in the input space (assumed to be the unknown manifold $\M$). 

In the ideal case of known manifold $\M$, a standard Bayesian approach to such learning task proceeds by putting a Gaussian process prior $\pii = N(0, -\Delta_\M^s)$ over mappings $u: \M \rightarrow \R$ and proposing a statistical model (e.g. additive Gaussian noise, probit, logistic) for the data which is encoded in a negative log-likelihood $\Phi$ .  The data model may depend on a forward map $\mathcal{F}$ that first transforms the input/output function $u$, and on the subsequent application of an observation map $\mathcal{O}$; see section \ref{sec:ForwardObservation} for concrete choices of forward and observation maps considered in this paper.  In the above, $\Delta_\M$ denotes the Laplace Beltrami operator on $\M$ and the parameter $s>0$ determines the regularity of prior draws; more intuition on the role of the Laplace Beltrami operator and the parameter $s$ will be given below.  Combining the prior and the likelihood via Bayes' rule, one can define a posterior distribution $\muu$ over functions $u:\M \to \R$ by
\begin{equation}\label{eq:posteriorknownm}
\muu(du) \propto \exp\bigl(-\Phi(u;y)\bigr) \, \pii(du).
\end{equation}
That is, the posterior is the distribution whose density with respect to the prior is proportional to the likelihoood function. 

However, as the input space $\M$ is assumed to be unknown, the above Bayesian formulation is impractical. We follow instead an \emph{intrinsic} approach and aim first at finding suitable labels for the inputs in the given point cloud $\M_n = \{\x_1, \ldots, \x_n\}$, which are then extrapolated,   via a Voronoi extension (or 1-NN extension),  to assign a label to every point on the manifold $\M$ or in the ambient space $\R^d$.  We take a Bayesian approach to learn a discrete input/output function $u_n:\M_n\to \R$ by first building a graph Laplacian which induces a Gaussian prior distribution $\pin = N(0, \Delta_{\M_n}^{-s})$ over discrete functions $u_n$, and then introducing an approximatoin $\Phi_n$ to the negative log-likelihood function $\Phi$. In this way, geometric properties of the underlying manifold $\M$  are extracted from the point cloud $\M_n$ and incorporated both in the prior and the likelihood.  Notice that if the original data model is defined in terms of some forward and observation maps, then one should also construct appropriate graph approximations for them (see section \ref{ssec:graphforward} for the approximation of the forward and observation maps considered in section \ref{sec:ForwardObservation}). \nc The solution of the graph-based Bayesian approach is a posterior distribution over discrete functions
\begin{equation}\label{eq:posteriorunknownm}
\mun(du_n) \propto \exp\bigl(-\Phi_n(u_n;y)\bigr) \, \pin(du_n).
\end{equation}
The details on how we construct ---without use of the ambient space or $\M$--- the graph-based prior $\pin$ and likelihood  $\Phi_n$ are given in section \ref{sec:contandgraphBIP}.

Two interpretations of equations \eqref{eq:posteriorknownm} and \eqref{eq:posteriorunknownm} will be useful. The first one is to see  \eqref{eq:posteriorunknownm} as a graph-based discretization of a Bayesian inverse problem over functions on $\M$ whose posterior solution is given by equation \eqref{eq:posteriorknownm}. The second is to interpret them as classical Bayesian regression problems. In the latter interpretation, $\M$ may represent a low-dimensional manifold  sufficient to  characterize  features living in an extremely high dimensional ambient space ($m\ll d$), perhaps upon some dimensionality reduction of the given inputs; in the former, $\M$ may represent the unknown physical domain of a differential equation. We note again that our framework covers ---by the flexibility in the choice of misfit functional $\Phi$--- a wide class of classification and regression learning problems that includes Bayesian probit and logistic models.

Our first goal is to study the large $n$ limit of the posterior distribution $\mun$ after it has been pushed-forward by the interpolation map $\mathcal{I}^1_n$ (see definition \eqref{def:I}) that extends functions defined on $\M_n$ to functions defined on $\M$. Our second goal is to study the algorithmic scalability of carefully designed MCMC schemes to sample from $\mun$ (see Algorithm \ref{pCN-RRW. }).   The theory on statistical consistency and algorithmic scalability that we set forth concerns regimes with large number $n$ of input training data and moderate number $p$ of labeled examples. This is precisely the regime of interest in semi-supervised learning applications, where often labeled data is expensive to collect but unlabeled data abounds. Our consistency results guarantee that graph-based posteriors of the form \eqref{eq:posteriorunknownm} are close to a ground truth posterior of the form \eqref{eq:posteriorknownm}, while the algorithmic scalability that we establish ensures the convergence, in an $n$-independent number of iterations, of certain MCMC methods for graph posterior sampling. The computational cost per iteration may, however, grow with $n$.
These MCMC methods are in principle applicable in fully supervised learning, but their performance would typically deteriorate if both $n$ and $p$ are allowed to grow.
Finally, we note that although our exposition is focused on semi-supervised regression, our conclusions are equally relevant for semi-supervised classification.

\subsection{Literature}\label{ssec:literature}
Here we put into perspective our framework by contrasting it with optimization and extrinsic approaches to semi-supervised learning, and by relating it to other surrogate and approximate methods for  Bayesian inversion. We also give some background on MCMC algorithms.

\subsubsection{Graph-Based Semi-supervised Learning}
We refer to  \cite{zhu2005semi} for an introductory tutorial on semi-supervised learning with useful pointers to the literature. The question of when and how unlabeled data matters is addressed in \cite{sayan}.  Some key papers on graph-based methods are \cite{zhu2003semi,hartog2016nonparametric,blum2001learning}. 

As already noted, a key motivation for graph-based semi-supervised learning is that high dimensional inputs can often be represented in a low-dimensional manifold, whose local geometry may be learned by imposing a graph structure on the inputs. In practice, features may be close to but not exactly \emph{on} an underlying manifold \citep{ruiyilocalregularization}. The question of how to find suitable manifold representations has led to a vast literature on dimensionality reduction techniques and manifold learning, e.g. \cite{roweis2000nonlinear,tenenbaum2000global,donoho2003hessian,belkin2004semi}.

 The reconstruction of the hidden input/output maps from few labeled examples can be carried out by compromising between data fidelity and regularization (along the underlying manifold). Our work considers regularizations defined in terms of the graph Laplacian $\Delta_{\M_n}^{-s},$ with the power parameter $s>0$ tuning the amount of regularization  (the higher $s$ the more regularity imposed).  Although the use of such parameter is standard in the machine  learning  literature  \citep{sindhwani2005beyond} our work provides new understanding on how $s$ should be chosen in terms of the intrinsic dimension $m$ of the input manifold in order to have consistent learning in the limit of large numbers of unlabeled examples.  Our analysis builds on recent results from \cite{SpecRatesTrillos} where explicit rates of convergence for the spectra of graph Laplacians towards the spectrum of a continuum differential operator have been obtained. These results relate in a quantitative way the geometry of the underlying manifold $\M$ and that of the point cloud $\M_n$. The problem of studying the large sample limit of graph Laplacians has received much attention in the last decades. Initially, most results were of pointwise type as in \cite{HeAuvL07,bel_niy_LB,GK,Hei2006,singer06,THJ}. More recently, the focus has been given to variational and spectral convergence \cite{belkin2007convergence,SinWu13,trillosACHA,Shi2015,burago2014graph,SpecRatesTrillos,ruiyilocalregularization}.

Alternative graph $\mathtt p$-Laplacian regularizations were introduced in \cite{zhou2005regularization}.  This type of regularization is similar to the one considered in this paper, but it does not induce a Gaussian prior on the hidden input/output map; because of this, it is more difficult to implement algorithms to sample from posteriors based on $\mathtt p$-Laplacian regularization.  The statistical consistency of semi-supervised learning based on $\mathtt p$-Laplacian regularization has been studied in \cite{el2016asymptotic}, \cite{slepvcev2017analysis}. These papers have rigorously analyzed how the parameter $\mathtt  p$ ---which plays an analogous role to $s$ in our context--- should be chosen in terms of dimension so that ``labels are not forgotten" in the large data limit.

% \red Discussion on Graph-Laplacians vs p-Laplacians and the role of $s$ and $p$ \cite{el2016asymptotic}, \cite{slepvcev2017analysis} \cite{zhou2005regularization}, \cite{sindhwani2005beyond}. \nc

\subsubsection{Bayesian vs.\ Optimization, and Intrinsic vs.\ Extrinsic}
In this subsection we focus on the regression interpretation, with labels directly obtained from noisy observation of the unknown input/output function. 
The Bayesian formulation that we consider has the advantage over traditional optimization formulations in that it allows for uncertainty quantification in the recovery of the unknown function \cite{bertozziuncertainty}. Moreover, from a computational viewpoint, we shall show that certain sampling algorithms have desirable scaling properties ---these algorithms, in the form of simulated annealing, may also find application within optimization formulations \citep{geyer1995annealing}.

The Bayesian update \eqref{eq:posteriorunknownm} is intimately related to the optimization problem
\begin{equation}
\min_{u_n } \,\, \langle\Delta_{\M_n}^{s} u_n ,  u_n \rangle + \Phi_n(u_n; y).
\label{min:SA}
\end{equation}
Here $\Delta_{\M_n}$ represents the graph-Laplacian, as defined in equation \eqref{eq:graphlaplacian} below, and the minimum is taken over square integrable functions on the point cloud $\M_n.$ 
Precisely, the solution $u_n^*$ to \eqref{min:SA} is the mode (or MAP for \emph{maximum a posteriori}) of the posterior distribution $\mun$ in  \eqref{eq:posteriorunknownm} with a Gaussian prior $\pin = N(0,  \Delta_{\M_n}^{-s})$. 

The Bayesian problem \eqref{eq:posteriorunknownm} and the variational problem \eqref{min:SA} are \emph{intrinsic} in the sense that  they are constructed without reference to the ambient space (other than through its metric), working in the point cloud $\M_n.$ In order to address the \emph{generalization problem} of assigning labels to points $\x \notin \M_n$ we use interpolation maps that turn functions defined on the point cloud into functions defined on the ambient space. We will restrict our attention to the family of $k$-NN interpolation maps defined by
\begin{equation}
\bigl[\mathcal{I}^k_n(u_n)\bigr] (\x) := \frac{1}{k}\sum_{\x_i \in N_k(\x) } u_n(\x_i), \quad \x \in \R^d, 
\label{def:I}
\end{equation}
where $N_k(\x)$ is the set of $k$-nearest neighbors in $\M_n$ to $\x$;  here the distance used to define nearest neighbors is that of the ambient space. Within our Bayesian setting we consider $\mathcal{I}_{n \sharp} \mun$, the push-forward of $\mun$ by $\mathcal{I}_n$, as the fundamental object that 
allows us to assign labels to inputs $\x\notin \M_n$, and quantify the uncertainty in such inference.
The need of interpolation maps also appears in the context of intrinsic variational approaches to binary classification \citep{trillos2017new} and in the context of variational problems of the form \eqref{min:SA}: the function $u_n^*$ is only defined on $\M_n,$ and hence should be extended to the ambient space via an interpolation map $\mathcal{I}_n$.

Intrinsic approaches contrast with \emph{extrinsic} ones, such as manifold regularization \citep{belkin2005towards,belkin2006manifold}. This method solves a variational problem of the form
\begin{equation}
\label{min:Belkin}
\min_{u } \,\,  \langle \Delta_{\M_n}^{s} u|_{\M_n} , u|_{\M_n} \rangle  +  \Phi(u;y) + \zeta \lVert u \rVert^2_{\mathcal{H}_K},
\end{equation}
where now the minimum is taken over functions in a reproducing kernel Hilbert space $\H_K$ defined over the \textit{ambient} space $\R^d,$ and $u|_{\M_n}$ denotes the restriction of $u$ to $\M_n.$ The kernel $K$ is defined in $\R^d$ and the last term in the objective functional, not present in \eqref{min:SA}, serves as a regularizer in the ambient space; the parameter $\zeta \geq 0$ controls the weight given to this new term. Bayesian and extrinsic formulations may be combined in future work.

In short, \emph{extrinsic} variational approaches solve a problem of the form \eqref{min:Belkin}, and \emph{intrinsic} ones solve \eqref{min:SA} and then generalize by using an appropriate interpolation map. In the spirit of the latter, the \emph{intrinsic} Bayesian approach of this paper defines an intrinsic graph-posterior by \eqref{eq:posteriorunknownm} and then this posterior is pushed-forward by an interpolation map. What are the advantages and disadvantages of each approach? Intuitively, the intrinsic approach seems more natural for label inference of inputs \emph{on} or \emph{close to} the underlying manifold $\M.$ However, the extrinsic approach is appealing for problems where no low-dimensional manifold structure is present in the input space.

\subsubsection{Approximate and Surrogate Bayesian Learning}
Our learning problem can be seen as approximating a ground-truth Bayesian inverse problem over functions on the underlying manifold $\M$ \citep{DS15,trillos2016bayesian,harlim2019kernel}.  Traditional problem formulations and sampling algorithms require repeated evaluation of the likelihood, often making naive implementations impractical. For this reason, there has been recent interest in reduced order models \citep{sacks1989design,kennedy2001bayesian,arridge2006approximation,cui2015data},  and in defining surrogate likelihoods in terms of Gaussian processes \citep{rasmussen2006gaussian,stein2012interpolation,stuart2017posterior}, or polynomial chaos expansions \citep{xiu2010numerical,marzouk2007stochastic}. Pseudo-marginal \citep{beaumont2003estimation} and approximate Bayesian computation methods \citep{beaumont2002approximate} have become popular in intractable problems where evaluation of the likelihood is not possible. There are two distinctive aspects of the graph-based models employed here. First, they approximate both the prior and the likelihood; and second, the approximate and ground-truth posteriors live in different spaces: the former is a measure over functions on a point cloud, while the latter is a measure over functions on the continuum. The paper \cite{garcia-sanz2017continuum} studied the continuum limits of graph-posteriors to the ground-truth continuum posterior. This was achieved by using a new topology inspired by the analysis of  functionals over functions in point clouds arising in machine learning \citep{trillos,trillos2014canadian,trillosACHA,slepvcev2017analysis}.

In this paper, we rigorously make a connection between the graph Bayesian model and the continuum one, by proving that in the large number of unlabeled data limit, the extended graph posterior converges towards the posterior of the continuum Bayesian model.

\subsubsection{Markov Chain Monte Carlo}
MCMC is a popular class of algorithms for posterior sampling.
Here we consider certain Metropolis--Hastings MCMC methods that construct a Markov chain that has the posterior as its invariant distribution by sampling from a user-chosen proposal and accepting/rejecting the samples using a general recipe. Posterior expectations are then approximated by averages with respect to the chain's empirical measure. The generality of Metropolis--Hastings algorithms is a double-edged sword: the choice of proposal may have a dramatic impact on the convergence of the chain. Even for a given form of proposal, parameter tuning is often problematic. These issues are exacerbated in learning problems over functions, as traditional algorithms often break-down.

The preconditioned Crank-Nicolson  (pCN) algorithm introduced in \cite{beskos2008mcmc}  allows for scalable sampling of infinite dimensional functions provided that the target is suitably defined as a change of measure. Indeed, the key idea of the method is to exploit this change of measure structure, that arises naturally in Bayesian nonparameterics but also in the sampling of conditioned diffusions and elsewhere. Robustness is understood in the sense that, when pCN is used to sample functions projected onto a finite $D$-dimensional space, the rate of convergence of the chain is independent of $D.$ This was already observed in \cite{beskos2008mcmc} and \cite{cotter2013mcmc}, and was further understood in \cite{hairer2014spectral} by showing that projected pCN methods have a uniform spectral gap, while traditional random walk does not. 

In this paper we substantiate the use of graph-based pCN MCMC algorithms \citep{bertozziuncertainty} in semi-supervised learning. The main insight is that our continuum limit results provide the necessary change of measure structure for the robustness of pCN. This allows us to establish their uniform spectral gap in the regime where the continuum limit holds. Namely, we show that if the number $p$ of labeled data is fixed, then the rate of convergence of graph pCN methods for sampling graph posterior distributions is independent of $n$.  We remark that pCN addresses some of the challenges arising from sampling functions, but fails to address challenges arising from tall data. Some techniques to address this complementary difficulty are reviewed in \cite{doucettalldata}.

\subsection{Paper Organization and Main Contributions}

A thorough description of our setting is given in section \ref{sec:contandgraphBIP}. Algorithms for posterior sampling are presented in section \ref{sec:sampling}.  Section \ref{sec:theory} contains our main theorems on continuum limits of graph posteriors and uniform spectral gaps. Finally, a computational study is conducted in section \ref{sec:Numerics}. All proofs and technical material are collected in an appendix.

The two main theoretical contributions of this paper are Theorem \ref{th:interpolant} ---establishing statistical consistency of intrinsic graph methods generalized by means of interpolation maps--- and Theorem \ref{th:pcn} ---establishing the uniform spectral gap for graph-based pCN methods under the conditions required for the existence of a continuum limit. 
Both results require appropriate scalings of the graph connectivity with the number of inputs. An important contribution of this paper is the analysis of truncated graph-priors that retain only the portion of the spectra of the graph Laplacian that provably approximates that of the ground-truth continuum.  As it turns out, only a portion of the spectrum of the graph Laplacian contains relevant information about the underlying manifold $\M$, and thus one can disregard higher modes. See the discussion in section \ref{ssec:approxbounds} and Figure \ref{fig:Laplacian Spectrum} for an illustration of this.   \nc

From a numerical viewpoint, our experiments illustrate parameter choices that lead to successful graph-based inversion, highlight the need for a theoretical understanding of the spectrum of graph Laplacians and of regularity of functions on graphs, and show that the asymptotic consistency and scalability analysis set forth in this paper is of practical use outside the asymptotic regime.

\section{Setting}\label{sec:contandgraphBIP}
Throughout, $\M$ will denote an $m$-dimensional, compact, smooth manifold embedded in $\R^d$.
We let  $\M_n:=\{\x_1, \dots, \x_n \}$ be a collection of i.i.d. samples from the uniform distribution on  $\M$. We are interested in learning functions defined on $\M_n$ by using the inputs $\x_i$ and some output values, obtained by noisy evaluation at $p\le n$ inputs of a transformation of the unknown function. The learning problem in the discrete space $\M_n$ is defined by means of a  graph-based discretization of a continuum learning problem defined over functions on $\M.$ We view the continuum problem as a ground-truth case where full geometric information of the input space is available. We describe the continuum learning setting in subsection \ref{ssec:continuous}, followed by the discrete learning setting in subsection \ref{ssec:discrete}. We will denote by  $L^2(\gamma)$ the space of functions on the underlying manifold that are square integrable with respect to the uniform measure $\gamma$. We use extensively that functions in $L^2(\gamma)$ can be written in terms of the (normalized) eigenfunctions $\{\psi_i\}_{i=1}^\infty$ of the Laplace Beltrami operator $\Delta_\M$. We denote by $\{\lambda_i\}_{i=1}^\infty$ the associated eigenvalues of $-\Delta_\M$, assumed to be in non-decreasing order and repeated according to multiplicity. Analogous notations will be used in the graph-based setting, with scripts $n$.

\subsection{Continuum Learning Setting}\label{ssec:continuous}
Our ground-truth continuum learning problem consists of the recovery of a function $u\in L^2(\gamma)$ from data $y\in \R^p.$ The data $y$ are assumed to be a noisy observation of a vector $v \in \R^p$ obtained indirectly from the function of interest $u$ as follows:

$$ u \in L^2(\gamma) \mapsto v :=  \O \circ \F (u) \mapsto y.$$
Here $\F: L^2(\gamma) \to L^2 (\gamma)$ is interpreted as a \emph{forward map}  representing, for instance, a map from inputs to outputs of a differential equation. As a particular case of interest, $\F$ may be the identity map in $L^2(\gamma).$ The map $\O : L^2(\gamma) \to \R^p$ is interpreted as an observation map, and is assumed to be linear and continuous. The Bayesian approach that we will now describe proceeds by specifying a prior on the unknown function $u$, and a noise model for the generation of data $y$ given the vector $v = \O \circ \F(u).$ The solution is a posterior measure $\muu$ over functions on $\M$, supported on $L^2(\gamma).$

 \subsubsection{Continuum Prior}
We assume a Gaussian {\emph{prior}} distribution $\boldsymbol{\pi}$ on the unknown initial condition $u\in L^2(\gamma)$:
\begin{equation}\label{eq:prior}
\boldsymbol{\pi} = N(0,\C_u), \quad \quad \C_u = (\alpha I - \Delta_\M)^{-s/2},
\end{equation}
where $\alpha\ge 0$, $s>m$ and $\Delta_\M$ denotes the Laplace Beltrami operator. Equation \eqref{eq:prior} corresponds to the covariance operator description of the Gaussian measure $\pii$. The covariance function representation may be advantageous in the derivation of regression formulae ---see the appendix. As described for instance in \cite{gao2019gaussian}, the Laplace Beltrami operator is a natural object to define Gaussian processes on manifolds, because its eigenfunctions contain rich geometric information. To provide further intuition, note that draws $u\sim \pii$ can be obtained via the Karhunen-Lo\`{e}ve expansion
\begin{equation}\label{eq:karhunenloeve}
u(x) = \sum_{i=1}^\infty \ (\alpha +\lambda_i )^{-s/4} \xi_i\psi_i(x), \quad \quad  \xi_i \distas{\text{i.i.d}} N(0,1),
\end{equation}
showing that the prior $\pii$ favors functions that have larger components in the first eigenfunctions of $\Delta_\M.$ The condition $s>m$ guarantees that the expected $L^2(\gamma)$ norm of $u\sim \pii,$ which agrees with $ \sum_{i=1}^\infty \ (\alpha +\lambda_i )^{-s/2},$ is finite. This in turn implies that  $u \sim \boldsymbol{\pi}$ belongs to $L^2(\gamma)$ almost surely. More generally, the parameter $s$ characterizes the almost sure H\"older and Sobolev regularity of draws from $\pii$ \citep{DS15}; larger values of $s$ correspond to smoother prior draws. The parameter $\alpha$ gives an effective prior length-scale: frequencies corresponding to $\lambda_i \ll \alpha$ have substantial contribution in the sum in equation \eqref{eq:karhunenloeve}.  

%Provided that $s$ is large enough the covariance operator may be represented by the covariance function $
%
%
% we may resort to the covariance function description \cite{rasmussen2006gaussian}, $u\sim GP\bigl(0,c_u(x,\tilde{x})\bigr)$ noting that the covariance function $c_u(x,\tilde{x}) := \E[ u(x) u(\tilde{x}) ]$ associated with $\C_u$ is given by
%\begin{equation}\label{eq:covariancefunction}
%c_{u}(x,\tilde{x}) = \sum_{i=1}^\infty (\alpha +\lambda_i )^{-s/2} \psi_i(x) \psi_i(\tilde{x}).
%\end{equation}
%
%Under our standing assumption that $s>m$, Kolmogorov's continuity theorem can be used to show that draws from $\pii$ are almost surely $r$-H\"{o}lder continuous for any $r<\min\{1, (s-m)/2 \}$ ---see \cite{AS10}--- which in particular implies that the above expression is well-defined.
%While the covariance operator and its spectrum are  fundamental  for theoretical analysis. 

\subsubsection{Continuum Forward and Observation Maps}
\label{sec:ForwardObservation}
In what follows we take, for concreteness and motivated by applications in image deblurring, the forward map $\F=\F^t$ to be the solution of the heat equation on $\M$ up to a given time $t \ge 0.$ That is, we set 
\begin{equation}\label{eq:heat}
\F u \equiv \F^t u := e^{t\Delta_{\M}} u.
\end{equation}
Note that $\M$ plays two roles in definition of $\F^t$: it determines both the physical domain of the heat equation and  the Laplace Beltrami operator $\Delta_\M$.   Our choice of forward map $\mathcal{F}^t$  includes the identity map (corresponding to regression) as a particular case (for $t=0$) and gives us the opportunity to study slightly more general data models. We note that $\mathcal{F}^t$ has a natural graph counterpart (see \eqref{heatgraph}).     \nc

We now describe our choice and interpretation of observation maps. Let $\x_1, \ldots, \x_p\in \M$, and let $\delta>0$ be small. For $w \in L^2(\gamma)$ we define the $j$-th coordinate of the vector $\O w$ by 
\begin{equation}\label{eq:observationmap}
[\O w]_j :=\frac{1}{\gamma\bigl(B_{\delta}(\x_j) \cap \M\bigr)} \int_{B_\delta( \x_j) \cap \M} w(x) \gamma(dx), \quad 1\le j \le p,
\end{equation}
where $B_\delta ( \x_j)$ denotes the Euclidean ball of radius $\delta$ centered at $\x_j.$ At an intuitive level, and in our numerical investigations, we see $\O$ as the point-wise evaluation map at the inputs $\x_j$:
\begin{equation*}
\O w = [w(\x_1), \ldots, w(\x_p)]' \in \R^p.
\end{equation*}
Henceforth we denote $\G:= \O \circ \F.$

\begin{remark}
It would be perhaps more intuitive to work with an  observation map defined by pointwise evaluations rather than local averages at a certain length-scale $\delta$. Indeed, typically one assumes that the observations $y$ correspond to noisy versions of ``true" labels associated to given feature vectors. However, for technical reasons when going from discrete to continuum in the next sections, in the very low number of observed labels regime that we work on (i.e. $p$ does not grow to infinity with $n$) definition \ref{eq:observationmap} allows us to perform rigorous analysis in an $L^2$ sense, while pointwise evaluation does not. It is still an open problem to establish uniform type convergence results for eigenvectors of graph Laplacians towards continuum counterparts in the random geometric graph setting; such technical results would allow us to work with the more standard setting for the observation map.

Having said this, when the continuum prior $\boldsymbol{\pi}$ is supported on a space of regular functions (as is the case when $s$ in \eqref{eq:prior} is large enough), the posterior (as defined in \ref{def:posterior}) converges in the limit $\delta \rightarrow 0$ to a posterior obtained from a likelihood where the observation map was based on pointwise evaluations. Thus, for strong priors we do not expect much difference between working with one observation model or the other.    
\end{remark}

% We assume that the function of interest $u$ is blurred via solution of the heat equation until time $T>0$:
%\begin{align}\label{eq:heat}
%\begin{split}
%  \begin{cases} v_t(x,t) = \Delta_\M v(x,t) , \quad &0<t\le T,\, x\in \M,  \\ v(x,0)= u(x), & x\in \M.   \end{cases}
%  \end{split}
%\end{align}\label{heat}
%Let $\F: L^2(\gamma) \to L^2(\gamma)$ map a function $u\in L^2(\gamma)$ to the solution to \eqref{eq:heat} at time $T,$ $v(\cdot,T)\in L^2(\gamma).$ The blurring map $\F$ is called the \emph{forward map} in the inverse problems literature. Note that $\M$ represents the domain of the equation \ref{eq:heat}, but it also determines the Laplace Beltrami operator $\Delta_\M$.

\subsubsection{Data and Noise Models}\label{ssec:dataandnoise}
Having specified the forward and observation maps $\F$ and $\O,$ we assume that the label vector $ y \in \R^p$ arises from noisy measurement of $\O\circ \F (u) \in \R^p.$
A noise-model will be specified via a function  $\phi^y: \R^p \to \R.$ We postpone the precise statement of assumptions on $\phi^y$ to section \ref{sec:theory}. Two guiding examples, covered by the theory, are given by
\begin{equation}\label{eq:psichoices}
\phi^y (w) := \frac{1}{2\sigma^2} |y - w |^2, \quad \text{or} \quad \phi^y (w):= - \sum_{i=1}^p \log\Bigl( \Psi\bigl(y_i w_i;\sigma\bigr)\Bigr),
\end{equation}
where $\Psi$  denotes the CDF of a centered univariate Gaussian with variance $\sigma^2.$
The former noise model corresponds to Gaussian i.i.d. noise in the observation of each of the $p$ coordinates of $\G u.$ The latter corresponds to probit classification, and a noise model of the form $y_i = S\bigl( v_i + \eta_i\bigr)$ with $\eta_i$ i.i.d. $N(0,\sigma^2),$ and $S$ the sign function. For label inference in Bayesian classification, the posterior obtained below needs to be pushed-forward via the sign function \citep{bertozziuncertainty}.
\subsubsection{Continuum Posterior}
The Bayesian solution to the ground-truth continuum learning problem is a continuum posterior measure
\begin{align}\label{def:posterior}
\begin{split}
\muu(du) \propto \exp\bigl(-\phi^y(\G u) \bigr) \pii(du) \\
=: \exp\bigl(-\Phi(u;y)\bigr) \pii(du),
\end{split}
\end{align}
that represents the conditional distribution of $u$ given the data $y.$ Equation \eqref{def:posterior} defines the negative log-likelihood function $\Phi$, that characterizes the conditional distribution of labels $y$ given $u.$ The posterior $\muu$ contains all the information on the unknown input $u$ available in the prior and the data.

\subsection{Discrete Learning Setting}\label{ssec:discrete}
We consider the learning of functions defined on a point cloud $\M_n:= \{\x_1, \dots, \x_n \}\subset \M.$ The underlying manifold $\M$ is assumed to be unknown. We suppose to have access to the same label data $y$  as in the continuous setting, and that the inputs $\x_1, \ldots, \x_p$ in the definition of $\O$ correspond to the first $p$ points in $\M_n$. Thus, in a physical analogy the data may be interpreted as noisy measurements of the true temperature at the first $p$ points in the cloud at time $t\ge 0.$  The aim is to construct ---without knowledge of $\M$---  a  posterior measure $\mun$ over functions in $\M_n$ representing the initial temperatures at each point in the cloud. 

Similar to the continuous setting, we will denote by $L^2(\gamma_n)$ the space of functions on the cloud that are square integrable with respect to the uniform measure $\gamma_n$ on $\M_n$. It will be convenient to view, formally, functions $u_n \in L^2(\gamma_n)$ as vectors in $\R^n$. We then write $u_n\equiv [u_n(1), \ldots, u_n(n)]',$ and think of $u_n(i)$ as evaluation of the function $u_n$ at $\x_i.$

The graph-posteriors are built by introducing  a graph-based prior, and  graph-based forward and observation maps $\F_n: L^2(\gamma_n) \to L^2(\gamma_n)$ and $\O_n: L^2(\gamma_n) \to \R^p$. The same noise-model and data as in the continuum case will be used.
We start by introducing a graph structure in the point cloud. Graph-based priors and forward maps are defined via a graph-Laplacian that summarizes the geometric information available in the point cloud $\M_n.$

\subsubsection{Geometric Graph and Graph-Laplacian}
We endow the point cloud with a graph structure. We focus on $\varepsilon$-neighborhood graphs: an input is connected to every input within a distance of $\varepsilon.$ A popular alternative are $k$-nearest neighbor graphs, where an input is connected to its $k$ nearest neighbors. The influence of the choice of graphs in unsupervised learning is studied in \cite{maier2009influence}. 

First, consider the kernel function $K : [0,\infty) \rightarrow [0, \infty)$ defined by
\begin{equation}
K(r):=\begin{cases}1 & \text{ if } r \leq 1, \\ 0 & \text{otherwise.} \end{cases}
\label{eta}
\end{equation}
For $\veps>0$ we let $K_\veps: [0,\infty) \rightarrow [0, \infty)$ be the rescaled version of $K$ given by
\[ K_\veps(r) := \frac{m+2}{ n^2 \alpha_m  \veps^{ m+2 }}K\left( \frac{r}{\veps} \right),\]
where $\alpha_m$ denotes the volume of the $m$-dimensional unit ball.
We then define the weight $W_{n}(\x_i, \x_j)$ between $\x_i, \x_j \in \M_n$ by
\[ W_n(\x_i,\x_j) := K_{\veps_n}(\lvert  \x_i-\x_j \rvert), \]
for a given choice of parameter $\veps=\veps_n$, where we have made the dependence of the \emph{connectivity rate} $\veps$ on $n$ explicit. In order for the graph-based learning problems to be consistent in the large $n$ limit, $\varepsilon$ should be scaled appropriately with $n$ ---see subsection \ref{ssec:continuumlimits}. Figure \ref{fig:graphs} shows three geometric graphs $(\M_n,W_n)$ with fixed $n$ and different choices of connectivity $\varepsilon.$

\begin{figure}
  \includegraphics[width=\linewidth]{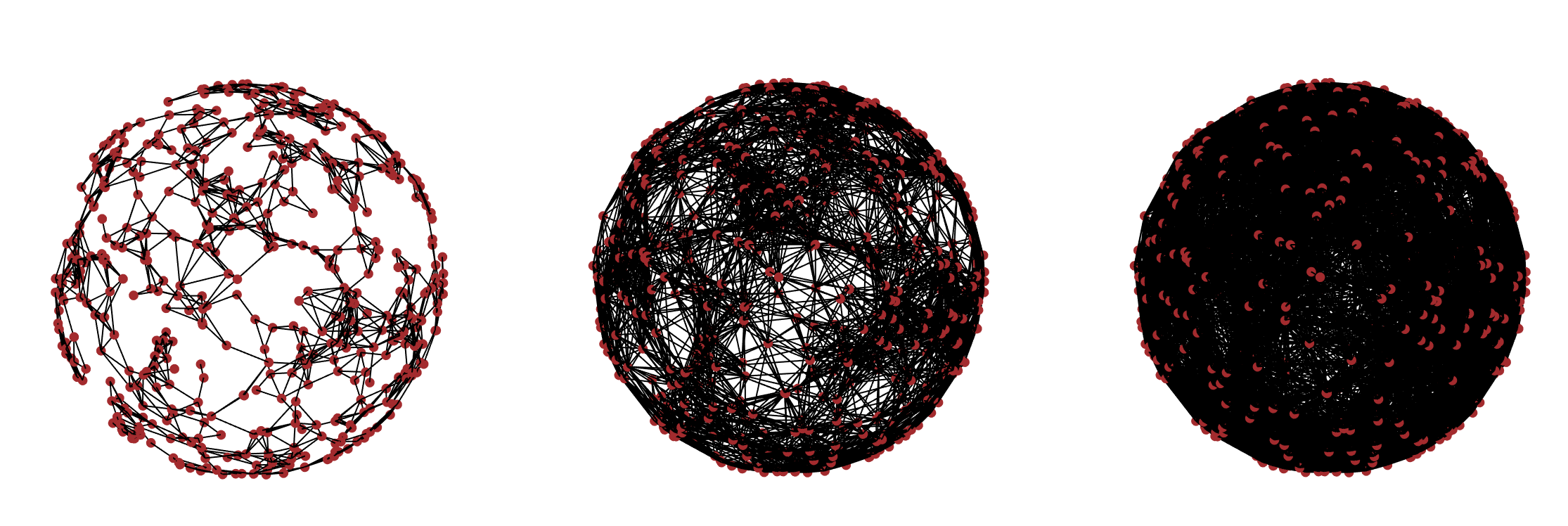}
  \caption{Geometric graphs with $n=500$, and $\varepsilon = n^{-1/4}, 2n^{-1/4},$ and $3n^{-1/4}$ from left to right.}
  \label{fig:graphs}
\end{figure}

We now define the  \textit{graph Laplacian} of the geometric graph $(\M_n, W_n)$ by
\begin{equation}\label{eq:graphlaplacian}
\Delta_{\M_n} := D_n -W_n,  
\end{equation}
where $D$ is the degree matrix of the weighted graph, i.e.,  the diagonal matrix with diagonal entries $D_{ii}= \sum_{j=1}^n W_{n}(\x_i, \x_j)$. Several definitions of graph Laplacian co-exist in the literature; the one above is some times referred to as the \textit{unnormalized} graph Laplacian \cite{von2007tutorial}.  As will be made precise, the performance of the learning methods considered here is largely determined by the behavior of the spectrum of the graph Laplacian.  Throughout we denote its eigenpairs by $\{\lambda_i^n,\psi_i^n \}_{i=1}^n$, and assume that the eigenvalues are in non-decreasing order.

\subsubsection{Graph Prior}
A straight-forward discrete analogue to \eqref{eq:prior} suggests endowing the unknown function $u_n$ with a prior
\begin{equation}\label{priordiscrete}
\pit = N(0, \C_{u_n}), \quad \quad \C_{u_n}:=(\alpha I_n +  \Delta_{\M_n})^{-s/2},
\end{equation}
where $\alpha \ge 0$ and $s>m$ are chosen as in \eqref{eq:prior}. Like the continuum prior, the graph-based one favors functions $u_n$ with large components in the first eigenfunctions of $\Delta_{\M_n},$ thus infusing geometric information on the probabilistic Bayesian reconstruction \citep{bertozziuncertainty}.
The graph Laplacian, in contrast to the regular Laplacian, is positive semi-definite, and hence the change in sign with respect to \eqref{eq:prior}. This choice of graph prior was considered in \cite{garcia-sanz2017continuum}, and also in \cite{bertozziuncertainty} in the case $\alpha=0, s =2$. In this paper we introduce and study priors $\pin$ defined in terms of truncation of the priors $\pit$, retaining only the portion of the spectra of $\Delta_{\M_n}$ that provably approximates that of $-\Delta_{\M}.$

Precisely, we define the graph priors $\pin$ as the law of $u_n$ given by
\begin{equation}\label{def:graphprior}
u_n = \sum_{i=1}^{k_n}   (\alpha +\lambda_i ^n)^{-s/4}  \xi_i  \psi_i^n, \quad \quad  \xi_i \distas{\text{i.i.d}} N(0,1),
\end{equation}
where $k_n\le n$ may be chosen freely with the restrictions that $k_n\rightarrow \infty$ and $\lim_{n\to \infty} {k_n} \varepsilon_n^m = 0.$  Such choice is possible as long as the connectivity $\varepsilon_n$ decays with $n$.

\subsubsection{Graph Forward and Observation Maps}\label{ssec:graphforward}
We define a forward map $\F_n: L^2(\gamma_n) \to L^2(\gamma_n)$ by 
\begin{equation}\label{heatgraph}
\F_n u_n \equiv \F_n^t u_n := e^{-t\Delta_{\M_n}} u_n,
\end{equation}
where $t\ge 0$ is given as in the continuum setting.
Likewise, for $\delta>0$ as in \eqref{eq:observationmap} we define an observation map $\O_n: L^2(\gamma_n) \to \R^p$ by
$$[\O_n w](j)  :=\frac{1}{n \gamma_n\bigl(B_{\delta}(\x_j) \bigr)} \sum_{k: \,\x_k\in B_\delta( \x_j) \cap \M_n} w(k), \quad 1\le j \le p.$$ 
As in the continuum setting, $\O_n$ should be thought of as point-wise evaluation at the inputs $\{\x_i \}_{i=1}^p$ and we denote $\G_n :=\O_n \circ \F_n.$

\subsubsection{Data and Likelihood}\label{ssec:graphdata} 
For the construction of graph posteriors we use the same labeled data $y$ and noise model $\phi^y:\R^p \to \R$ as in the continuum case ---see subsection \ref{ssec:dataandnoise}.

\subsubsection{Graph Posterior}\label{ssec:graphposterior}
We define the \emph{graph-posterior} measure $\mun$ by
\begin{align}\label{def:posteriordiscrete}
\begin{split}
\mun(du) \propto \exp\bigl(-\phi^y (\G_n u_n) \bigr) \pin(du_n) \\
=: \exp\bigl(-\Phi_n(u_n; y)\bigr) \pin(du_n),
\end{split}
\end{align}
where $\pin$ is the (truncated) graph prior defined as the law of \eqref{def:graphprior}, and the above expression defines the function $\Phi_n$, interpreted as a graph-based approximation to the negative log-likelihood. 

In subsection \ref{ssec:continuumlimits} we will contrast the above ``truncated" graph-posteriors to the ``untruncated" graph-posteriors 
\begin{align}\label{def:posteriordiscretetruncated}
\begin{split}
\mut(du) \propto \exp\bigl(-\phi^y (\G_n u_n) \bigr) \pit(du_n) \\
=: \exp\bigl(-\Phi_n(u_n; y)\bigr) \pit(du),
\end{split}
\end{align}
obtained by using the prior $\pit$ in equation \eqref{priordiscrete}.

\section{Posterior Sampling: pCN and Graph-pCN}\label{sec:sampling}
The continuum limit theory developed in \cite{garcia-sanz2017continuum} and recalled in subsection 
\ref{ssec:continuumlimits} suggests viewing graph posteriors $\mun$ as discretizations of a posterior measure over functions on the underlying manifold. Again, these discretizations are robust for fixed $p$ and growing number of total inputs $n$. This observation substantiates the idea introduced in \cite{bertozziuncertainty} of using a version of the pCN MCMC method \citep{beskos2008mcmc} for robust sampling of graph posteriors. We review the continuum pCN method  in subsection \ref{ssec:samplingcontinuum}, and the graph pCN counterpart in subsection \ref{ssec:samplinggraph}.

\subsection{Continuum pCN}\label{ssec:samplingcontinuum}

In practice, sampling of functions on the continuum always requires a discretization of the infinite dimensional function, usually defined  in terms of a mesh and possibly a series truncation. A fundamental idea is that  algorithmic robustness with respect to discretization refinement can be guaranteed by ensuring that the algorithm is well defined in function space, before discretization \citep{DS15}. This insight led to the formulation of the pCN method for sampling of conditioned diffusions \citep{beskos2008mcmc}, and of measures arising in Bayesian nonparametrics  in \cite{cotter2009bayesian}. The pCN method for sampling the continuum posterior measure \eqref{def:posterior} is given in Algorithm \ref{algorithm1}.

\begin{algorithm}[H]
\begin{algorithmic}
\BState Set $j = 0$ and pick any $u^{(0)} \in L^2(\gamma).$
\BState Propose  $\tilde{u}^{(j)} = (1-\beta^2)^{1/2} u^{(j)} + \beta \zeta^{(j)}, \quad \text{where}\,\, \zeta^{(j)} \sim N(0, \C_u)$.
\BState Set $u^{(j+1)} = \tilde{u}^{(j)}$ with probability $$a\bigl(u^{(j)}, \tilde{u}^{(j)}\bigr):=  \min \Bigl\{1, \exp\Bigl(\Phi\bigl(u^{(j)};y\bigr) - \Phi\bigl(\tilde{u}^{(j)};y\bigr) \Bigr)\Bigr\}.$$
\BState Set $u^{(j+1)} = u^{(j)}$ otherwise.
\BState $j\to j+1.$
\end{algorithmic}
\caption{Continuum pCN}\label{algorithm1}
\end{algorithm}

Posterior expectations of suitable test functions $f$ can then be approximated by empirical averages

\begin{equation}\label{eq:empiricalaverage}
\muu(f) \approx \frac{1}{J} \, \sum_{j=1}^J f\bigl(u^{(j)} \bigr) = S^J(f).
\end{equation}

The user-chosen parameter $\beta\in [0,1]$ in Algorithm \ref{algorithm1} monitors the step-size of the chain jumps: larger $\beta$ leads to larger jumps, and hence to more state space exploration, more rejections, and slower probing of high probability regions. Several robust discretization properties of Algorithm \ref{algorithm1} ---that contrast with the deterioration of traditional random walk approaches--- have been proved in \cite{hairer2014spectral}. Note that the acceptance probability is determined by the potential $\Phi$ (here interpreted as the negative log-likelihood) that defines the density of the posterior with respect to the prior. In the extreme case where $\Phi$ is constant, moves are always accepted. However, if the continuum posterior is far from the continuum prior, the density will be far from constant. This situation may arise, for instance, in cases where $p$ is large or the size $\sigma$ of the observation noise is small. A way to make posterior informed proposals that may lead to improved performance in these scenarios has been proposed in \cite{rudolf2015generalization}.

\subsection{Graph pCN}\label{ssec:samplinggraph}
The graph pCN method is described in Algorithm \ref{pCN-RRW. }, and is defined in complete analogy to the continuum pCN, Algorithm \ref{algorithm1}. When considering a sequence of problems with fixed $p$ and increasing $n,$ the continuum theory intuitively supports the robustness of the method. Moreover, as indicated in \cite{bertozziuncertainty} the parameter $\beta$ may be chosen independently of the value of $n.$ Our experiments in section \ref{sec:Numerics} confirm this robustness, and also investigate the deterioration of the acceptance rate when both $n$ and $p$ are large. 

\begin{algorithm}[H]
\caption{Graph pCN}\label{pCN-RRW. }
\begin{algorithmic}
\BState Set $j = 0$ and pick any $u_n^{(0)} \in L^2(\gamma_n).$
\BState Propose  $\tilde{u}_n^{(j)} = (1-\beta^2)^{1/2} u_n^{(j)} + \beta \zeta_n^{(j)}, \quad \text{where}\,\, \zeta_n^{(j)} \sim N(0, \C_{u_n})$.
\BState Set $u_n^{(j+1)} = \tilde{u}_n^{(j)}$ with probability
$$a_n\bigl(u_n^{(j)}, \tilde{u}_n^{(j)}\bigr):=  \min \Bigl\{1, \exp\Bigl(\Phi_n\bigl(u_n^{(j)};y\bigr) - \Phi_n\bigl(\tilde{u}_n^{(j)};y\bigr) \Bigr)\Bigr\}.$$
\BState Set $u_n^{(j+1)} = u_n^{(j)}$ otherwise.
\BState $j\to j+1.$
\end{algorithmic}
\end{algorithm}

 Again, graph-posterior expectations of suitable test functions $f_n$ can then be approximated by empirical averages
\begin{equation}\label{eq:empiricalaveragegraph}
\mut(f_n) \approx \frac{1}{J} \, \sum_{j=1}^J f_n\bigl(u^{(j)}_n \bigr) = S^J(f_n).
\end{equation}
In \emph{informal} but intuitive terms, the uniform spectral gap that we establish below shows that the large $J$ asymptotic variance of $S^J(f_n)$ is independent of $n$. 
\nc
\section{Main Results}\label{sec:theory}

\subsection{Continuum Limits}\label{ssec:continuumlimits}
The paper \cite{garcia-sanz2017continuum}  established large $n$ asymptotic convergence of the untruncated graph-posteriors $\mut$ in \eqref{def:posteriordiscretetruncated} to the continuum posterior $\muu$ in \eqref{def:posterior}. The convergence was established in a topology that combines Wasserstein distance and an $L^2$-type term in order to compare measures over functions in the continuum with measures over functions in graphs. 
\begin{proposition}[Theorem 4.4 in \cite{garcia-sanz2017continuum}] \label{proposition} Suppose that $s>2m$ and that 
 \begin{equation}\label{eq:epsilonchoice1}
 \frac{(\log(n))^{p_m} }{n^{1/m}}   \ll \veps_n \ll \frac{1}{n^{1/s}}  , \quad \text{ as } n \rightarrow \infty,
 \end{equation}
 where $p_m = 3/4$ for $m=2$ and $p_m = 1/m$ for $m\ge 3.$
 Then, the untruncated graph-posteriors $\mut$ converge towards the posterior $\muu$ in the $\P(TL^2)$ sense. 
\end{proposition}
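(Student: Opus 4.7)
The plan is to follow the standard recipe for showing posterior robustness under simultaneous perturbation of the prior and the log-likelihood, but carried out in the $TL^2$-topology that enables comparison of measures over functions on $\M$ with measures over functions on the point cloud $\M_n$. The proof splits into four stages: (a) spectral convergence of the graph Laplacian; (b) convergence of the priors $\pit \to \pii$ in $\P(TL^2)$; (c) continuity of the potentials $\Phi_n$ along $TL^2$-convergent sequences; and (d) a general stability argument at the level of posteriors. For (a), the lower-bound half of the scaling \eqref{eq:epsilonchoice1} is exactly the regime in which one obtains quantitative, high-probability spectral convergence of the unnormalized graph Laplacian: $\lambda_i^n \to \lambda_i$ and the eigenvectors $\psi_i^n$ converge to $\psi_i$ in $TL^2$, uniformly for $i \le k_n$ with some $k_n \to \infty$ of order roughly a negative power of $\varepsilon_n$. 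I would import this spectral convergence as a black box from the work on discrete-to-continuum limits of graph Laplacians referenced in the introduction.

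For (b), I would couple $\pit$ and $\pii$ by using the \emph{same} Gaussian coefficients $\xi_i$ in their Karhunen--Lo\`eve expansions,
\[
u_n = \sum_{i=1}^{n} (\alpha + \lambda_i^n)^{-s/4} \xi_i \psi_i^n, \qquad u = \sum_{i=1}^\infty (\alpha + \lambda_i)^{-s/4} \xi_i \psi_i,
\]
and split the resulting $TL^2$-distance into low- and high-frequency parts at level $k_n$. The low-frequency part is controlled by stage (a); for the high-frequency tail, on the continuum side I would use Weyl's law $\lambda_i \asymp i^{2/m}$, while on the graph side the tail sum $\sum_{i>k_n}^n (\alpha + \lambda_i^n)^{-s/2}$ is bounded using the fact that the remaining graph eigenvalues sit at scale $\gtrsim \varepsilon_n^{-2}$ above the Weyl cutoff $k_n \asymp \varepsilon_n^{-m}$, producing a contribution of order $n\varepsilon_n^s$. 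The upper-bound half of \eqref{eq:epsilonchoice1}, $\varepsilon_n \ll n^{-1/s}$, is precisely what drives this tail to zero, while the stronger regularity $s > 2m$ is what upgrades convergence in probability to the square-integrable control needed for $\P(TL^2)$-convergence.

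For (c), if $u_n \to u$ in $TL^2$, then spectral calculus applied to the convergent eigenpairs from (a), combined with the exponential decay of $e^{-t\lambda}$, yields $\F_n u_n \to \F u$ in $TL^2$; meanwhile $\O_n$ is a local average on balls of fixed radius $\delta$ against the empirical measure $\gamma_n$, so $\O_n w_n \to \O w$ in $\R^p$ whenever $w_n \to w$ in $TL^2$ by a law-of-large-numbers argument for $\gamma_n$. Continuity of $\phi^{\vec{y}}$ then gives $\Phi_n(u_n;\vec{y}) \to \Phi(u;\vec{y})$. For (d), I would dominate $e^{-\Phi_n} \le 1$, apply dominated convergence along the coupling of (b), and pass to the limit in both the numerator and the normalizing constant of $\mut(f) = Z_n^{-1}\int f\, e^{-\Phi_n}\, d\pit$ for bounded Lipschitz test functions $f$ on $TL^2$, yielding Wasserstein convergence on $\P(TL^2)$.

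The main technical obstacle is stage (b): the spectral cutoff $k_n$ has to be placed at a sweet spot where the graph eigenpairs are still provably close to their continuum counterparts (capping $k_n$ from above via the Weyl scale $\varepsilon_n^{-m}$ coming from stage (a)), while simultaneously being large enough that both the continuum Karhunen--Lo\`eve tail and the superfluous high-frequency portion of the untruncated graph spectrum are negligible (pushing $k_n$ from below via the condition $s>2m$). The two halves of \eqref{eq:epsilonchoice1} are exactly what open such a window, and most of the work goes into making the balance quantitative and uniform with high probability over the random point cloud $\M_n$.
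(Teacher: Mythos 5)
Your outline is essentially correct, but it takes a genuinely different route from the one the paper relies on. The paper does not reprove this statement; it imports it from \cite{garcia-sanz2017continuum}, and the structure of that proof is visible in Step 0 of Appendix \ref{AppA}: the posteriors $\mut$ and $\muu$ are characterized variationally as the unique minimizers of the free energies $J_n(\bnu) = \dkl(\bnu\,\|\,\pit) + \int \Phi_n \, d\bnu$ and $J(\bnu) = \dkl(\bnu\,\|\,\pii) + \int \Phi\, d\bnu$, and the convergence $\mut \to \muu$ is obtained from $\Gamma$-convergence of $J_n$ to $J$ in $\P(TL^2)$ together with a precompactness lemma (controlling $\dkl(\mut\|\pit)$ uniformly and using $\pit \to_{\P(TL^2)} \pii$). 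You instead run a direct change-of-measure stability argument: couple the priors through shared Karhunen--Lo\`eve coefficients, prove almost sure $TL^2$ convergence of the coupled draws, establish continuity of $u_n \mapsto \Phi_n(u_n)$ along $TL^2$-convergent sequences, and pass to the limit in the density $e^{-\Phi_n}/Z_n$ by dominated convergence. Both routes rest on the same two quantitative inputs, and you identify them correctly: the lower bound on $\veps_n$ delivers spectral convergence of the low modes, while the upper bound $\veps_n \ll n^{-1/s}$ is exactly what kills the high-frequency tail $\sum_{i>k_n}^{n}(\alpha+\lambda_i^n)^{-s/2} \sim n\veps_n^{s}$ of the \emph{untruncated} graph prior (this is precisely the term that Theorem \ref{th:interpolant} later eliminates by truncating at $k_n \ll \veps_n^{-m}$, which is why the upper bound can be dropped there). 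What the $\Gamma$-convergence route buys is that it only needs liminf/limsup inequalities and lower semicontinuity rather than pointwise continuity of the potentials, it handles the normalizing constants automatically through the minimal energy values, and its compactness lemma is reused verbatim in the proof of Theorem \ref{th:interpolant}. Your direct route is more elementary and transparent, but two points need to be said explicitly for it to close: the domination $e^{-\Phi_n}\le 1$ requires $\phi^{\vec{y}} \ge 0$ (true for the Gaussian and probit examples, but an assumption nonetheless), and convergence against bounded Lipschitz test functions only gives weak convergence, whereas $\P(TL^2)$ convergence is a transport-type convergence that additionally requires uniform integrability of $\lVert u_n\rVert_{L^2(\gamma_n)}^2$ under $\mut$ --- which does follow from your prior coupling and the boundedness of the densities, but is not automatic from step (d) as written.
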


We refer to Appendix \ref{TL2PTL2} for the construction of the metric space $TL^2$ that was originally introduced in \cite{trillos}. Notice that in the space $TL^2$ we can compare functions defined on $\M_n$ with functions defined on $\M$. The space $\mathcal{P}(TL^2)$ was introduced in 
\cite{garcia-sanz2017continuum} and stands for the set of Borel probability measures on $TL^2$ endowed with the topology of weak convergence. This space allows us to formalize the convergence of a sequence of probability distributions over functions on $\M_n$ to a probability distribution over functions on $\M$. In particular, in the previous theorem, convergence is interpreted as: $\mut$ converges weakly to $\muu$ as $n \rightarrow \infty,$ all measures seen as elements of $\mathcal{P}(TL^2)$. 
It is important to note that in the theorem, convergence refers to the limit of \emph{fixed} labeled data set of size $p$, and growing size of unlabeled data. In order for the continuum limit to hold, the connectivity of the graph $\veps_n$ needs to be carefully scaled with $n$ as in \eqref{eq:epsilonchoice1}.

At an intuitive level, the lower bound on $\veps_n$ guarantees that there is enough averaging in the limit to recover a meaningful deterministic quantity. The upper bound ensures that the graph priors converge appropriately towards the continuum prior. At a deeper level, the lower bound is an order one asymptotic estimate for the $\infty$-optimal transport distance between the uniform and uniform empirical measure on the manifold \citep{trillos2014canadian}, that hinges on the points $\x_1, \dots, \x_n$ lying \emph{on} the manifold $\M$: if the inputs were sampled from a distribution whose support is close to $\M$, but whose intrinsic dimension is $d$ and not $m$, then the lower bound would be written in terms of $d$ instead of $m$. The upper bound, on the other hand, relies on the approximation bounds \eqref{eq:estimateseigen} of the continuum spectrum of the Laplace-Beltrami by the graph Laplacian. 

We now present a new result on the stability of intrinsically constructed posteriors, generalized to $\M$ by interpolation via the map $\mathcal{I}_n :=\mathcal{I}_n^1$ ---see \eqref{def:I}; this is the most basic interpolation map that can be constructed exclusively from the point cloud $\M_n$ and the metric on the ambient space. Other than extending the theory to cover the important question of generalization, there is another layer of novelty in Theorem \ref{th:interpolant}: graph-posteriors are constructed with truncated priors, and the upper-bound in the connectivity $\varepsilon_n$ in \eqref{eq:epsilonchoice1} is removed.  As discussed in subsection \ref{ssec:approxbounds}, only a portion of the spectrum of the graph Laplacian contains relevant information of the underlying manifold $\M$, and thus nothing is lost by throwing away higher modes. See Figure \ref{fig:Laplacian Spectrum} for an illustration.  \nc

\begin{theorem}\label{th:interpolant}
Suppose that $s>2m$ and that 
 \begin{equation}\label{eq:epsilonchoice}
 \frac{(\log(n))^{p_m} }{n^{1/m}}   \ll \veps_n \ll 1 , \quad \text{ as } n \rightarrow \infty,
 \end{equation}
 where $p_m$ is as in Proposition \ref{proposition}.
	Then, with probability one, 
	\[ \mathcal{I}_{n \sharp } \mun \rightarrow_{\mathcal{P}(L^2(\gamma))} \muu, \quad \text{ as } n \rightarrow \infty.\]
\end{theorem}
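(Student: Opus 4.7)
\bigskip
\noindent\textbf{Proof plan.}

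The plan is to factor the convergence through an intermediate, truncated continuum posterior and then invoke asymptotic spectral approximation results for the graph Laplacian. Specifically, for each $n$ introduce the truncated continuum prior $\pii^{k_n}$ obtained by retaining only the first $k_n$ terms in the Karhunen--Lo\`eve expansion \eqref{eq:karhunenloeve}, and the corresponding truncated continuum posterior $\muu^{k_n} \propto \exp(-\Phi(u;\vec{y})) \pii^{k_n}(du)$. Since $s > m$, the tail $\sum_{i>k_n}(\alpha+\lambda_i)^{-s/2}$ is summable, so $\pii^{k_n} \to \pii$ in $\mathcal{P}(L^2(\gamma))$; because $\phi^{\vec{y}}$ is bounded below and integrable against $\pii$ (standard Bayesian stability, e.g.\ in the spirit of \cite{AS10}), this promotes to $\muu^{k_n} \to \muu$ in $\mathcal{P}(L^2(\gamma))$. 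It therefore suffices to establish that $\mathcal{I}_{n\sharp}\mun$ is asymptotically close to $\muu^{k_n}$ in $\mathcal{P}(L^2(\gamma))$ and then extract a diagonal sequence.

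The core step is to show $\mathcal{I}_{n\sharp}\pin \to \pii^{k_n}$ in $\mathcal{P}(L^2(\gamma))$ (in law, for a common realization of the Gaussian coefficients $\xi_i$) and to transfer this to posteriors. Here I would use the spectral approximation theorems for the unnormalized $\varepsilon$-graph Laplacian: under the lower scaling $(\log n)^{p_m}/n^{1/m} \ll \varepsilon_n$, almost surely in the point cloud $\M_n$ one has, for each fixed $i$, $\lambda_i^n \to \lambda_i$ and $\mathcal{I}_n \psi_i^n \to \psi_i$ in $L^2(\gamma)$ (up to signs) with a rate that degrades polynomially in $i$. The condition $k_n \varepsilon_n^m \to 0$ is exactly the quantitative threshold below which this approximation is uniform in $i \le k_n$, so the truncated sum $\sum_{i=1}^{k_n}(\alpha+\lambda_i^n)^{-s/4}\xi_i\, \mathcal{I}_n\psi_i^n$ converges to $\sum_{i=1}^{k_n}(\alpha+\lambda_i)^{-s/4}\xi_i\,\psi_i$ in $L^2(\gamma)$ for each Gaussian sequence $(\xi_i)$, and hence the pushforward priors converge. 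The pushforward converts the $TL^2$-style comparison used in the proposition to an honest $L^2(\gamma)$ comparison. Next, continuity of the likelihood along these convergent sequences follows because $\G_n u_n \to \G u$ in $\R^p$ whenever $\mathcal{I}_n u_n \to u$ in $L^2(\gamma)$: the surrogate heat semigroup $e^{-t\Delta_{\M_n}}$ converges to $e^{t\Delta_\M}$ on each fixed spectral window, and the local-average observation maps $\O_n$ converge to $\O$ by consistency of empirical averages on $\M$. Under the standing regularity of $\phi^{\vec{y}}$, this produces convergence of posteriors $\mathcal{I}_{n\sharp}\mun \to \muu^{k_n}$ and, combined with the first paragraph via a diagonal argument choosing $k_n \to \infty$ with $k_n \varepsilon_n^m \to 0$, finishes the proof.

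The main obstacle, and the novel contribution beyond the proposition, is to justify dropping the upper bound $\varepsilon_n \ll n^{-1/s}$. In the untruncated posterior $\mut$, the operator $(\alpha I_n + \Delta_{\M_n})^{-s/2}$ samples the entire spectrum, and the top eigenvalues $\lambda_i^n$ for $i$ near $n$ scale like $\varepsilon_n^{-2}$ rather than like the continuum $\lambda_i$; without the upper bound on $\varepsilon_n$, these bad modes contribute a non-vanishing amount to the prior and prevent convergence to $\pii$. Truncation at level $k_n$ with $k_n \varepsilon_n^m \to 0$ surgically excises precisely this pathological part of the spectrum, so only the lower bound on $\varepsilon_n$ (needed to have any spectral convergence at all, and related to the $\infty$-transport distance between the uniform and empirical measures on $\M$) is required. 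The delicate part is to verify that the truncation level can be pushed to infinity consistently with the spectral rates and with the rate of convergence of $\muu^{k_n} \to \muu$; this is where the interplay between the regularity exponent $s$, the ambient dimension $m$, and the connectivity $\varepsilon_n$ is quantitatively exploited, and is the technical heart of the argument.
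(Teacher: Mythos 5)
Your overall architecture --- couple the Karhunen--Lo\`eve coefficients, use spectral convergence of the graph Laplacian to compare the truncated graph prior with a truncated continuum object, and observe that truncation at $k_n\veps_n^m\to 0$ is what removes the need for the upper bound $\veps_n\ll n^{-1/s}$ --- is the right one and matches the paper's proof in spirit. But two steps, as you state them, would not go through. First, you assert that the eigenfunction approximation $\mathcal{I}_n\psi_i^n\to\psi_i$ holds \emph{uniformly} over $i\le k_n$, and you use this to compare $\sum_{i=1}^{k_n}(\alpha+\lambda_i^n)^{-s/4}\xi_i\,\mathcal{I}_n\psi_i^n$ with $\sum_{i=1}^{k_n}(\alpha+\lambda_i)^{-s/4}\xi_i\,\psi_i$ term by term. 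The available spectral results (estimate \eqref{eq:estimateseigen} and the cited spectral-rates theorem) give uniform control of the eigenvalue \emph{ratios} for $i\le k_n$, convergence of $\mathcal{I}_n\psi_i^n$ to $\psi_i$ only for each \emph{fixed} $i$, and a uniform norm bound $\lVert\mathcal{I}_n\psi_i^n\rVert_{L^2(\gamma)}\le C$ for $i\le k_n$. The paper therefore splits the sum at a fixed level $l$, uses per-mode convergence on $i\le l$, bounds the block $l<i\le k_n$ by $C\sum_{i\ge l}(\alpha+\lambda_i)^{-s/4}\lvert\xi_i\rvert$ using only the norm bound and the eigenvalue comparison, and then sends $l\to\infty$ using the summability guaranteed by $s>2m$ and Weyl's law. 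Your argument needs this $l$-splitting; the uniformity you invoke is not known.

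Second, the sentence ``this produces convergence of posteriors'' hides the real work. The pushforward $\mathcal{I}_{n\sharp}\mun$ is not the Bayesian update of $\mathcal{I}_{n\sharp}\pin$ by any likelihood on $L^2(\gamma)$ (the weight $\exp(-\Phi_n(u_n))$ depends on $u_n$ through $\G_n u_n$, not through $\mathcal{I}_n u_n$), so convergence of the pushforward priors plus continuity of $\phi^{\vec{y}}$ does not by itself yield convergence of the pushforward posteriors; one must also rule out the reweighting concentrating mass on the poorly approximated modes $l<i\le k_n$. The paper resolves this differently from your plan: it first gets $\mun\to\muu$ in $\mathcal{P}(TL^2)$ from the $\Gamma$-convergence of the Kullback--Leibler variational energies (no truncated continuum posterior $\muu^{k_n}$ is introduced), then obtains pre-compactness of $\{\mathcal{I}_{n\sharp}\mun\}$ in $\mathcal{P}(L^2(\gamma))$ from the uniform bound $\dkl(\I_{n\sharp}\mun\,\|\,\I_{n\sharp}\pin)\le J_n(\mun)\to J(\muu)<\infty$ together with $\I_{n\sharp}\pin\to\pii$, and finally identifies the limit through its finite-dimensional projections, bounding the tail $\sum_{l<i\le k_n}\lvert a_i^n\rvert$ under the \emph{posterior} by $Z_n^{-1}$ times the corresponding \emph{prior} tail probability. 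If you prefer your coupled-reweighting route, you must additionally prove $\Phi_n(X_n)\to\Phi(X)$ along the coupling, i.e.\ continuous convergence of $\G_n$ to $\G$ at the prior samples, which is essentially the $\Gamma$-convergence input you are trying to bypass; the detour through $\muu^{k_n}$ is harmless but not needed.
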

	The proof is presented in Appendix \ref{AppA}. Similar results hold for more general interpolation maps as long as they are uniformly controlled and consistent when evaluated at the eigenfunctions of graph Laplacians (see Remark \ref{GeneralInterpolants}).

\begin{remark}
\label{remarkp1}
Our results concern the regime where $n \rightarrow \infty$ and  $p$ is constant. This corresponds to the semi-supervised setting of many more unlabeled data points than labels. Our analysis would also allow us to take the double limit $n \rightarrow \infty $ followed by $p \rightarrow \infty$. This corresponds to a semi-supervised learning regime where both the number of unlabeled data points and the number of labeled data points grow, but $p$ grows at the slowest rate possible. In that regime the limiting posterior concentrates around a single function on $\M$ which would correspond to the true ``regression function". It may be possible to establish similar posterior concentration results in the regime where both   $n \rightarrow \infty$ and $p=p_n \rightarrow \infty$ go simultaneously to infinity as well as to establish posterior contraction rates. We leave such analysis for future work.  
\end{remark}

	\subsection{Uniform Spectral Gaps for Graph-pCN Algorithms}\label{ssec:pcntheory}
	The aim of this subsection is to establish how, in a precise and rigorous sense, the graph-pCN method in Algorithm \ref{pCN-RRW. } is insensitive to the increase of the number $n$ of input data provided that the number $p$ of labeled data is fixed and that a continuum limit exists. This behavior contrasts dramatically with other sampling methodologies such as the random walk sampler. One could characterize the robustness of MCMC  algorithms in terms of uniform spectral gaps.
	
	We start by defining the spectral gap for a single Markov chain with state space an arbitrary separable Hilbert space $\mathcal{H}$. We consider two notions of spectral gap, one using Wasserstein distance with respect to some distance like function $\tilde{d},$ and the other one in terms of $L^2$.  For the purposes of this paper the Wasserstein spectral gap can be thought as an intermediate step which is ``easier" to prove directly following the ideas introduced in \cite{hairer2014spectral}, while the $L^2$ gap is a consequence whose implications are  meaningful for our problem. We start with the two definitions.

	\begin{definition}[Wasserstein spectral gaps]
		Let $P$ be the transition kernel for a discrete time Markov chain with state space $\mathcal{H}$. Let $\tilde{d}: \H \times \H \rightarrow [0,1]$ be a distance like function (i.e. a symmetric, lower-semicontinuous function satisfying $\tilde{d}(u,v)=0$ if and only if $u=v$). Without the loss of generality we also denote by $\tilde{d}$ the Wasserstein  distance (1-OT distance) on $\mathcal{P}(\H)$  induced by $\tilde{d}$ (see \eqref{Wass}). We say that $P$ has spectral gap if there exist positive constants $C, \lambda$ such that
		\[ \tilde{d}(P^j\mu, P^j \nu) \leq C \exp(-\lambda j) \tilde{d}(\mu, \nu), \quad \forall \mu, \nu \in \mathcal{P}(\H), \quad \forall j \in \N. \]
		In the above $\mathcal{P}(\H)$	stands for the set of Borel probability measures on $\H$.
	\end{definition}
	
\begin{definition}[$L^2$-spectral gaps]
Let $P$ be the transition kernel for a discrete time Markov chain with state space $\mathcal{H}$ and suppose that $\mu$ is invariant under $P$. $P$ is said to have $L^2_\mu$-spectral gap $1- \exp(-\lambda) $ (for $\lambda >0$) if for every $f \in L^2(\H; \mu)$ we have
\[  \frac{\lVert  P f  - \mu(f)   \rVert^2_{L^2(\H; \mu)} }  { \lVert  f  - \mu(f)   \rVert^2_{L^2(\H; \mu)}   } \leq \exp(-\lambda). \]
In the above, $\mu(f) := \int_{\H} f(u) d \mu(u) $ and $Pf(u):= \int_{\H}  f(v) P(u, dv)$.
\end{definition}

Having defined the notion of spectral gap for a single Markov chain, the notion of uniform spectral gap for a family of Markov chains is defined in an obvious way. Namely, if $\{ P_n \}_{n \in \N}$ is a family of Markov chains, with perhaps different state spaces $ \{ \mathcal{H}_n \}_{n \in \N}$, we say that the family of Markov chains has \textit{uniform} Wasserstein spectral gap with respect to a family of distance like functions $\{ \tilde{d}_n \}$ if the Markov chains have spectral gaps with constants $C, \lambda$ which can be uniformly bounded, respectively,  from above and away from zero. Likewise the chains are said to have uniform $L^2$-gaps (with respect to respective invariant measures) if the constant $\lambda$ can be uniformly bounded away from zero. We remark that Wasserstein spectral gaps imply uniqueness of invariant measures of Markov chains (this follows directly from the definition of Wasserstein gap).

Having introduced the above notions of ``mixing" for Markov chains in a general setting, we return to the problem of understanding the mixing of the family of pCN algorithms for our semi-supervised learning problem. We will make the following assumption on the negative log-likelihood function $\phi^y$.

	\begin{assumption}
		\label{Assumptionphi}
		Let $\beta\in (0,1]$. For a certain fixed $y \in \R^p$  we assume the following conditions on $\phi^y: \R^p \rightarrow \R$.
		\begin{enumerate}
			\item For every $K>0$ there exists $c\in \R$ such that if $v, w\in \R^p$ satisfy 
			\[  \lvert w - \sqrt{1-\beta^2}\; v \rvert \leq K  \]
			then,
			\[ \phi^y( v) - \phi^{y}(w) \geq c .\]
			\item (Linear growth of local Lipschitz constant) There exists a constant $L$ such that 
			\[   \lvert \phi^y( v ) - \phi^y (w)  \rvert \leq L\max \{ \lvert v \rvert, \lvert w \rvert ,1 \} \lvert  v -  w  \rvert, \quad \forall v, w \in \R^p.   \]
		\end{enumerate}
		
	\end{assumption}
	In Appendix \ref{App3} we show that the Gaussian model and the probit model satisfy these assumptions.

	In what follows it is convenient to use $\mathcal{H}$ as a placeholder for one of the spaces $L^2(\gamma_n),$ $n\in \N,$ or the space $L^2(\gamma)$. Likewise $P$ is a placeholder for the transition kernel associated to the pCN scheme from section \ref{sec:sampling} defined on $\mathcal{H}$ for each choice of $\mathcal{H}$. We are ready to state our second main theorem:

	\begin{theorem}[Uniform Wasserstein spectral gap]
		\label{th:pcn} Let $\theta>0$, $\eta>0$. For each choice of $\mathcal{H}$ let $d: \H \times \H \rightarrow [0, 1]$,
		\[ d(u, v) :=\min \Bigl\{1, \frac{\overline{d}(u,v)}{\theta}  \Bigr \}, \quad u,v \in \H   \]
		be a rescaled and truncated version of the distance 
		\[\overline{d}(u,v):= \inf_{T, \psi \in A(T,u,v)} \int_{0}^T \exp(\eta \lVert  \psi \rVert )dt ,\]
		\[ A(T,u,v):= \{ \psi \in C^1([0,T]; \H) \: : \: \psi(0)=u , \quad \psi(T)=v, \quad \lVert \dot{\psi} \rVert=1   \}. \]
		Finally, let $\tilde{d}$ be the distance-like function 
		\[  \tilde{d}(x,y):= \sqrt{d(x,y) ( 1+ V(x) + V(y)) }, \quad u,v \in \H \]
		where
		\[ V(u):= \lVert u \rVert^2, \quad u \in \H. \]
		Then, under the assumptions of Theorem \ref{th:interpolant} and Assumption \ref{Assumptionphi}, $\theta>0$ and $\eta>0$ can be chosen independently of the specific choice of $\mathcal{H}$ in such a way that
		\[  \tilde{d}(P^j \nu_1, P^j\nu_2) \leq C \exp(- \lambda j) \tilde{d}(\nu_1, \nu_2), \quad \forall \nu_1, \nu_2 \in \mathcal{P}(\H), \quad \forall j \in \N, \]
		for constants $C, \lambda$ that are independent of the choice of $\mathcal{H}$. 
	\end{theorem}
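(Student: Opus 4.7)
The plan is to verify the two hypotheses of the weak Harris theorem used by Hairer, Stuart, and Vollmer \cite{hairer2014spectral} to prove dimension-independent Wasserstein spectral gaps for pCN. The hypotheses are: (a) a geometric drift $PV(u) \le \gamma V(u) + K$ for the Lyapunov function $V(u) = \norm{u}^2$ with $\gamma < 1$; and (b) a one-step Wasserstein contraction in $d$ on level sets $\{V \le R\}$ for sufficiently large $R$. The content of the theorem is that the constants entering (a) and (b) can be chosen independently of the choice of $\H$.

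For the uniform drift, expanding the pCN proposal gives $\E[\norm{\tilde u}^2 \mid u] = (1-\beta^2)\norm{u}^2 + \beta^2 T_{\H}$, where $T_{\H}$ denotes the trace of the prior covariance $\C_u$ or $\C_{u_n}$. Accepting the proposal lowers $V$ in expectation while rejecting leaves $V$ unchanged, so a drift follows provided the expected acceptance probability is bounded below uniformly on the event that $\norm{\tilde u}$ is controlled. Assumption \ref{Assumptionphi}(1) supplies this bound: since $\G$ and $\G_n$ are bounded linear operators with operator norm uniform in $\H$, the condition $\lvert \G_n(\tilde u) - \sqrt{1-\beta^2}\,\G_n(u) \rvert \le K$ holds on a Gaussian event of uniformly positive probability, on which $a_n \ge e^c$. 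For uniformity one also needs $\sup_{\H} T_{\H} < \infty$: in the continuum case $T = \sum_{i=1}^\infty (\alpha + \lambda_i)^{-s/2}$ converges because $s > 2m$ and $\lambda_i \sim i^{2/m}$ by Weyl's law, and in the truncated graph case $T_n = \sum_{i=1}^{k_n} (\alpha + \lambda_i^n)^{-s/2}$, where the spectral-convergence estimates underpinning Theorem \ref{th:interpolant} give lower bounds on $\lambda_i^n$ comparable to $\lambda_i$ for $i \le k_n$, yielding the uniform bound under the scaling \eqref{eq:epsilonchoice} and the truncation constraint $k_n \varepsilon_n^m \to 0$.

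For the local contraction, take $u, v$ with $V(u), V(v) \le R$ and use the synchronous coupling $\tilde u = \sqrt{1-\beta^2}\, u + \beta \zeta$, $\tilde v = \sqrt{1-\beta^2}\, v + \beta \zeta$, so that $\norm{\tilde u - \tilde v} = \sqrt{1-\beta^2}\,\norm{u - v}$. Maximally coupling the accept/reject Bernoullis produces a simultaneous acceptance probability at least $\min\{a_n(u,\tilde u), a_n(v,\tilde v)\}$, which by Assumption \ref{Assumptionphi}(2) combined with $\norm{u}, \norm{v} \le R$ and the uniform continuity of $\G_n$ is bounded below uniformly in $\H$. Estimating $\overline d(\tilde u, \tilde v)$ along the straight-line path between them, exploiting the truncation of $d$ at $\theta$ to absorb the contribution of rejections, and choosing $\eta$ sufficiently small in terms of $R$ and $\sup_{\H} T_{\H}$, yields a one-step contraction in $d$ with rate $\alpha^\ast < 1$ uniform in $\H$.

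The weak Harris theorem then converts (a) and (b) into a contraction for the composite distance-like function $\tilde d(u, v) = \sqrt{d(u,v)(1 + V(u) + V(v))}$, and the resulting $C$ and $\lambda$ inherit uniformity from the drift and contraction constants. I expect the main technical obstacle to be securing the uniform trace bound $\sup_n T_{\H} < \infty$: this is precisely where the truncation of the graph prior at level $k_n$ and the spectral approximation of $\Delta_{\M_n}$ to $-\Delta_{\M}$ enter, and it is available exactly in the regime of Theorem \ref{th:interpolant}, reflecting the paper's thesis that statistical consistency and algorithmic scalability here go hand in hand.
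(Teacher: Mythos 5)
Your proposal follows essentially the same route as the paper: reduce to the weak Harris theorem as deployed in \cite{hairer2014spectral}, and show that the drift, contraction, and small-set constants can be chosen uniformly in $\H$ by combining a uniform bound on $\lVert \G_n \rVert$, the resulting uniform local Lipschitz constants of $\Phi_n$ from Assumption \ref{Assumptionphi}, a uniform lower bound on acceptance probabilities, and uniform control of the truncated graph priors via the spectral estimates behind Theorem \ref{th:interpolant}. The paper packages the last ingredient slightly differently --- as stochastic domination of every $\pin$ by a single inflated continuum Gaussian $\pii^\rho$ rather than only a uniform trace bound --- but this rests on exactly the same comparison of $\lambda_i^n$ with $\lambda_i$ for $i \le k_n$ that you identify as the crux.
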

	
	A few remarks help clarify our results.

	\begin{remark}
	Notice that $\overline{d}$ is a Riemannian distance whose metric tensor changes in space and takes larger values for points that are far away from the origin (notice that the choice $\eta=0$ returns the canonical distance on $\H$). In particular,  points that are far away from the origin have to be very close in the canonical distance in order to be close in the $d$ distance. This distance was considered in \cite{hairer2014spectral}. We would also like to point out that the exponential form of the metric tensor can be changed to one with polynomial growth given the choice of $V$.
	\end{remark}

	\begin{remark}
	Theorem \ref{th:pcn} is closely related to Theorem 2.14 in \cite{hairer2014spectral}. There, uniform spectral gaps are obtained for the family of pCN kernels indexed by the truncation levels of the Karhunen Lo\`{e}ve expansion of the continuum prior. For that type of discretization, all distributions are part of the same space; this contrasts with our set-up where the discretizations of the continuum prior are the graph priors.  
	\end{remark}
	\nc
 
Due to the reversibility of the kernels associated to the pCN algorithms (they are particular instances of Metropolis-Hastings), Theorem \ref{th:pcn} implies uniform  $L^2$-spectral gaps as introduced earlier. Notice that the Wasserstein gaps imply uniqueness of invariant measures (which are precisely the graph and continuum posteriors for each setting) and hence there is no ambiguity when talking about $L^2$-spectral gaps.
	
	\begin{corollary}
	\label{corGAPS}
	Under the assumptions of Theorem \ref{th:interpolant} and Assumption \ref{Assumptionphi} the kernel associated   to the pCN algorithm has an $L^2$-spectral gap independent of the choice of $\H$.
	\end{corollary}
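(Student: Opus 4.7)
The plan is to deduce Corollary \ref{corGAPS} from Theorem \ref{th:pcn} by combining the uniform Wasserstein contraction with the $\mu$-reversibility of the pCN kernel, using the standard bridge from Wasserstein to $L^2$ spectral gaps available for reversible Markov chains.

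First I would record the reversibility step. For any choice of $\mathcal{H}$, the pCN proposal $u \mapsto \sqrt{1-\beta^2}\,u + \beta\zeta$ with $\zeta \sim N(0,\mathcal{C})$ preserves the corresponding Gaussian reference (prior). Together with the standard Metropolis--Hastings acceptance ratio, this makes the kernel $P$ reversible with respect to its posterior target ($\mun$ on the graph side, $\muu$ on the continuum side). Moreover, Theorem \ref{th:pcn} gives Wasserstein contraction in $\tilde{d}$, which in particular forces uniqueness of the invariant measure and hence identifies it with the posterior; this makes the notion of $L^2_\mu$-spectral gap unambiguous for every element of the family.

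Next I would invoke the general principle (proved in Hairer--Stuart--Vollmer \cite{hairer2014spectral}, and also used implicitly in the pCN analysis there) that a $\mu$-reversible transition kernel satisfying a Wasserstein spectral gap in a distance-like function of the form $\tilde{d}(u,v) = \sqrt{d(u,v)(1+V(u)+V(v))}$, with $d$ bounded and lower semicontinuous and $V$ a suitable Lyapunov function, has an $L^2_\mu$-spectral gap whose rate depends only on the Wasserstein contraction constants $C,\lambda$ and on $\int V\,d\mu$. Applied to each $\mathcal{H} \in \{L^2(\gamma_n)\}_n \cup \{L^2(\gamma)\}$, Theorem \ref{th:pcn} supplies uniform $C,\lambda$, so the only remaining item is a uniform bound on $\int \lVert u \rVert^2\, d\mu(u)$ across the family.

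The main obstacle I expect is precisely this uniform moment bookkeeping. One has to show $\sup_n \int \lVert u_n \rVert^2_{L^2(\gamma_n)}\, d\mun(u_n) < \infty$ together with the analogous continuum bound. This should follow from the truncated-prior construction \eqref{def:graphprior} with $s>2m$ (which gives uniform control of the traces $\sum_{i=1}^{k_n}(\alpha+\lambda_i^n)^{-s/2}$ via the Weyl-type spectral estimates that underlie Theorem \ref{th:interpolant}), together with the fact that $\Phi_n$ is bounded below by a constant independent of $n$ under Assumption \ref{Assumptionphi}(1) with $\beta=1$, so that $\mun$ is absolutely continuous with respect to $\pin$ with a uniformly bounded density. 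Once these ingredients are in place, the Wasserstein-to-$L^2$ conversion yields the desired $\mathcal{H}$-independent $L^2$-spectral gap, completing the proof.
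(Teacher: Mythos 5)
There is a genuine gap. Your reduction treats the passage from a Wasserstein spectral gap to an $L^2_\mu$-spectral gap for reversible kernels as a black box whose only remaining hypothesis is a (uniform) bound on $\int V\,d\mu$. But the relevant results in \cite{hairer2014spectral} (Proposition 2.8 together with Lemma 2.9, which is exactly what the paper invokes) require one further, non-automatic hypothesis: that $\Lip(\tilde{d}\,)\cap L^\infty(\H;\mu)$ is dense in $L^2(\H;\mu)$. Verifying this density is the entire substantive content of the paper's proof, and it is not routine here because $\tilde{d}$ is built from the truncated, exponentially weighted Riemannian distance $\overline{d}$ of Theorem \ref{th:pcn}: one first shows that norm-Lipschitz functions on balls are $\tilde{d}$-Lipschitz, handles the finite-dimensional case $\H=L^2(\gamma_n)$ by mollification and a cut-off $f_k(u)=\eta_k(\lVert u\rVert)\min\{\max\{f(u),-k\},k\}$, and then reduces the infinite-dimensional case $\H=L^2(\gamma)$ to the finite-dimensional one by integrating out the tail coordinates against $\mu_k^c:=\Pi^c_{k\sharp}\mu$ and using $f_k\to f$ in $L^2(\H;\mu)$. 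Your proposal never addresses this step, so as written it does not yield the corollary.

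Relatedly, the obstacle you single out---uniform control of $\sup_n\int\lVert u_n\rVert^2\,d\mun$---is not actually where the difficulty lies. Because the pCN kernel is reversible, the standard spectral-theorem argument converts exponential decay of $\langle f,P^jf\rangle-\mu(f)^2$ for $f$ in a \emph{dense} set of bounded $\tilde{d}$-Lipschitz functions into an $L^2$-gap whose rate is governed by $\lambda$ alone; the prefactor $C$ and any moment of $V$ only enter the $f$-dependent constant, which washes out. Hence one only needs $\int V\,d\mu<\infty$ separately for each fixed $\H$ (immediate, since each posterior is absolutely continuous with respect to a Gaussian with summable covariance spectrum), not uniformly in $n$. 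So the uniform-moment bookkeeping is dispensable, while the density verification is indispensable; your proof supplies the former and omits the latter.
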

	The proof of Theorem \ref{th:pcn} and its corollary are presented in Appendix \ref{AppB}.

Recall that graph-posterior expectations of suitable test functions $f_n$ can be approximated by empirical averages
\begin{equation}
\mut(f_n) \approx \frac{1}{J} \, \sum_{j=1}^J f_n\bigl(u^{(j)}_n \bigr) = S^J(f_n).
\end{equation}
Roughly speaking, this uniform spectral gap shows that the large $J$ asymptotic variance of $S^J(f_n)$ is independent of $n$.  Uniform spectral gaps may be used to find uniform bounds on the asymptotic variance of empirical averages \citep{kipnis1986central}.

	\begin{remark}
	It is important to highlight that the uniform gaps for the pCN algorithm (when $n$ grows) depend nonetheless on the number of observations $p$, and that the gaps may collapse with growing $p$. This should be intuitively reasonable as this corresponds to considering a more complex likelihood function, which in turn pushes the posterior further from the prior.   	
	\end{remark}

	\section{Numerical Study}\label{sec:Numerics}
	In the numerical experiments that follow we take $\M= \S$ to be the two-dimensional sphere in $\R^3.$ Our main motivation for this choice of manifold is that it allows us to expediently make use of well-known closed formulae \citep{olver2013introduction} for the spectrum of the spherical Laplacian $\Delta_\M = \Delta_\S$ in the continuum setting that serves as our ground truth model. We recall that $-\Delta_\S$ admits eigenvalues  $l(l+1)$, $l \ge 0,$ with corresponding eigenspaces of dimension $2l+1$. These eigenspaces are spanned by spherical harmonics \citep{olver2013introduction}.  In subsections \ref{ssec:spectrumnumerics}, \ref{ssec:numericsposterior}, and \ref{ssec:pcnrobustness} we study, respectively, the spectrum of graph Laplacians, continuum limits, and the scalability of pCN methods.

	\subsection{Spectrum of Graph Laplacians}\label{ssec:spectrumnumerics}
The asymptotic behavior of the spectra of graph-Laplacians is crucial in the theoretical study of consistency of graph-based methods. In subsection \ref{ssec:approxbounds} we review approximation bounds that motivate our truncation of graph-priors, and in subsection \ref{ssec:regularity} we comment on the theory of regularity of functions on graphs. 
	\subsubsection{Approximation Bounds}\label{ssec:approxbounds}
	Quantitative error bounds for the difference of the spectrum of the graph Laplacian and the spectrum of the Laplace-Beltrami operator are given in \cite{burago2014graph} and \cite{SpecRatesTrillos}. Those results imply that, with very high probability, 	
	\begin{equation}
	\Bigl\lvert 1- \frac{\lambda_i^n}{\lambda_i} \Bigr\rvert \leq C \Bigl(\frac{\delta_n}{\veps_n} + \veps_n\sqrt{\lambda_i}\Bigr), \quad \forall i,
	\label{eq:estimateseigen}
	\end{equation}
	where $\delta_n$ denotes the $\infty$-optimal transport distance \citep{trillos2014canadian} between the uniform and the uniform empirical measure on the underlying manifold. The important observation here is that the above estimates are only relevant for the first portion of the spectra (in particular for those indices $i$ for which $\veps_n \sqrt{\lambda_i}$ is small). The truncation point at which the estimates stop being meaningful can then be estimated combining \eqref{eq:estimateseigen} and Weyl's formula for the growth of eigenvalues of the Laplace Beltrami operator on a compact Riemannian manifold of dimension $m$ \citep{garcia-sanz2017continuum}. Namely, from $ \lambda_i \sim i^{2/m} $ we see that $\veps_n \sqrt{\lambda_i} \ll 1$ as long as $i=1, \dots, k_n$ and
\[   1 \ll k_n \ll \frac{1}{\veps_n^m } . \]
 This motivates our truncation point for graph priors in equation \eqref{def:graphprior}.

Figure \ref{fig:Laplacian Spectrum}  illustrates the approximation bounds \eqref{eq:estimateseigen}. The figure shows the eigenvalues of the graph Laplacian for three different choices of connectivity length scale $\veps$ and three different choices of number $n$ of inputs in the graph; superimposed is the spectra of the spherical Laplacian. We notice the flattening of the spectra of the graph Laplacian and, in particular, how the eigenvalues of the graph Laplacian start deviating substantially from those of the Laplace-Beltrami operator after some point in the $x$-axis. As discussed in \cite{SpecRatesTrillos}, the estimates \eqref{eq:estimateseigen} are not necessarily sharp, and may be conservative in suggesting where the deviations start.

% Figure \ref{fig:Laplacian Spectrum} also supports the heuristic downward bias of eigenvalues: in all charts (with the exception of the upper-left), the eigenvalues of the graph-Laplacian lie below those of the spherical Laplacian. In further numerical experiments, not shown here, we averaged over graph Laplacians constructed with randomly generated point-clouds. The downward bias was observed throughout. Importantly, Figure \ref{fig:Draws From Discrete Prior} shows that the heuristic bound for the bias of the first graph eigenvalue \eqref{eq:downwardbias} may hold across the whole graph-spectrum. This will be the subject of further analysis, outside the scope of the present work. 
	
	\begin{figure}
		\includegraphics[width=\linewidth]{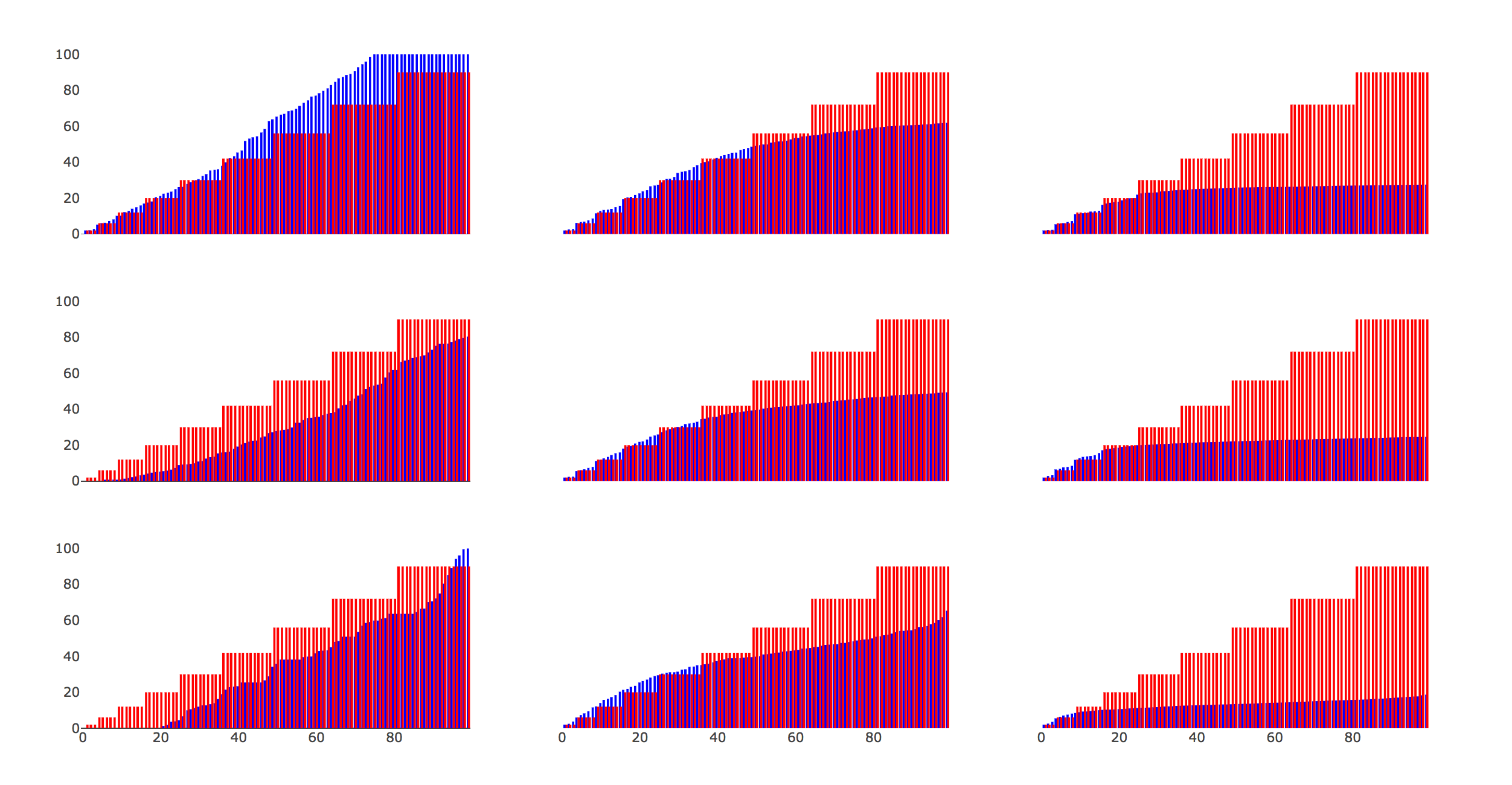}
		\caption{Spectra of spherical and graph Laplacians in red and blue, respectively. Charts are arranged such that $\varepsilon$ varies as $ [1,\  2,\  3] \times n^{-1/4} $ horizontally and $n$ varies as $[1000,\ 500,\ 100]'$ vertically.}
		\label{fig:Laplacian Spectrum}
	\end{figure}

	\subsubsection{Regularity of Discrete Functions}\label{ssec:regularity}
	We numerically investigate the role of the parameter $s$ in the discrete regularity of functions $u_n \in L^2(\gamma_n)$ sampled from $\pin$. We focus on studying the oscillations of a function within balls of radius $\veps_n$. More precisely, we consider 
	\[  [\Osc_{\veps_n}(u_n)](\x_i):= \max_{ x,z \in B_{\veps_n}(\x_i ) \cap \M_n }\lvert u_n(x) - u_n(z)  \rvert, \quad i=1, \dots, n.  \]

For given $s=2,3, \dots, 8$ we take $100$ samples $u_n \sim \pin,$ and we normalize so that
	\[ \langle \Delta_n^{s} u_n , u_n \rangle_{L^2(\gamma_n)}=1 .\] 
	We then compute the maximum value of $[\Osc_{\veps_n}(u_n)](\x_i)$ over all $i=1,\dots,n$ and over all samples $u_n$ and plot the outcome against $s$. The results are shown in Figure \ref{fig: Priors regularity}.
		\begin{figure}
		\includegraphics[width=\linewidth]{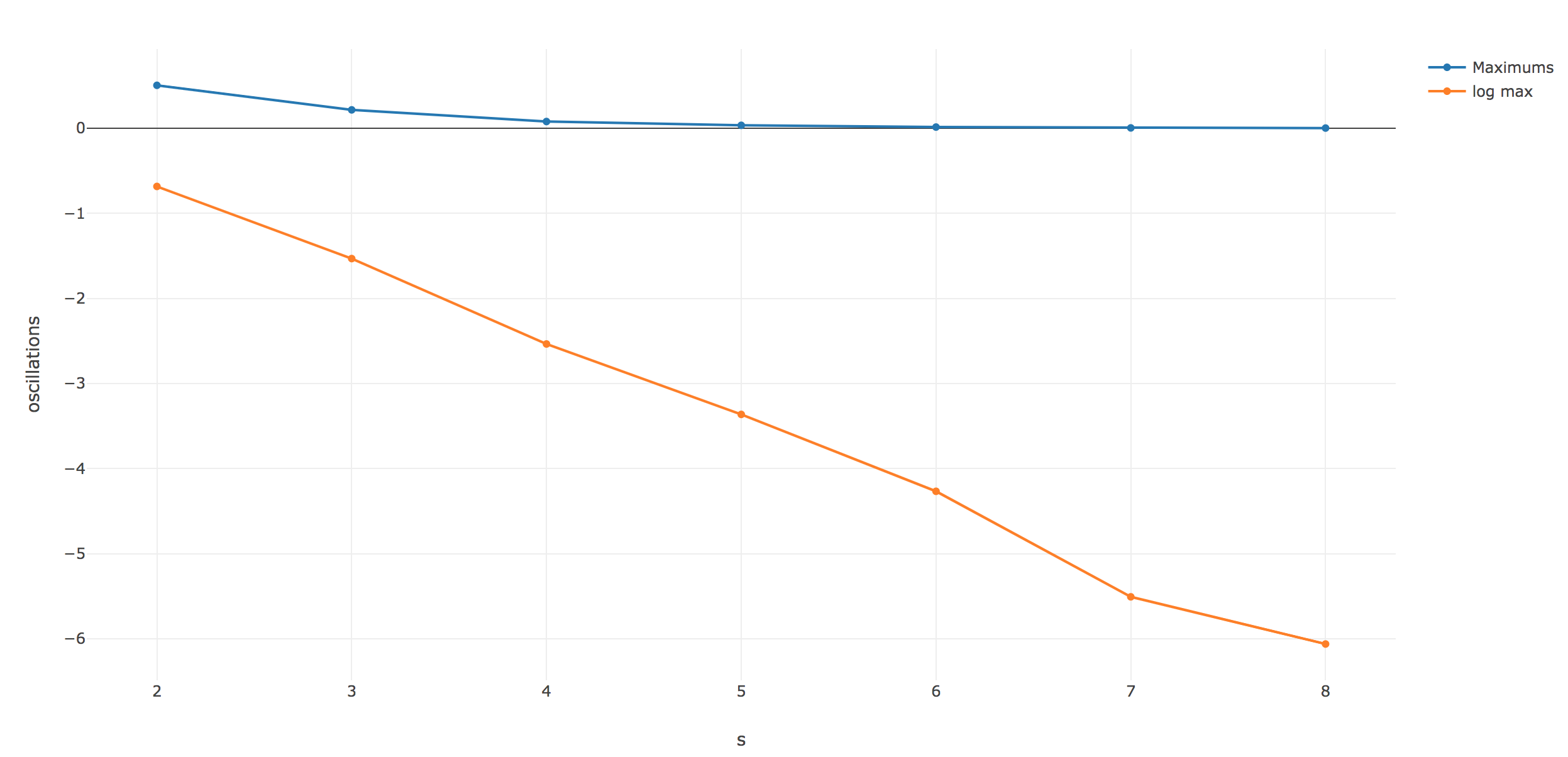}
		\caption{The figure shows the maximum (and its logarithm) amplitude of oscillations for different values of the regularity parameter $s$.}
		\label{fig: Priors regularity}  	
	\end{figure}
This experiment illustrates the regularity of functions with bounded $H_n^{s}$ semi-norm 
\[ \lVert u_n \rVert_{H_n^{s}}^2:= \sum_{i=1}^{k_n} (\lambda_i^n)^s \langle u_n , \psi_i^n \rangle_{L^2(\gamma_n)}^2.\]
As expected, higher values of $s$ enforce more regularity on the functions. Notice that here we only consider functions $u_n$ in the support of $\pin$ and hence we remove the effect of high eigenfunctions of $\Delta_n$ (which may be irregular). In particular, the regularity of the functions $u_n$ must come from the regularity of the first eigenvectors of $\Delta_n$ together with the growth of $(\lambda_i^n)^s$. To the best of our knowledge nothing is known about regularity of eigenfunctions of graph Laplacians. Studying such regularity properties is an important direction to explore in the future as we believe it would allow us to go beyond the $L^2$ set-up that we consider for the theoretical results in this paper. In that respect we would like to emphasize that the observation maps considered for the theory of this work are defined in terms of averages and not in terms of pointwise evaluations, but that for our numerical experiments we have used the latter. 

A closely related setting in which discrete regularity has been mathematically studied is in the context of \textit{graph $\mathtt{p}$-Laplacian semi-norm} (here $\mathtt{p}$ denotes an arbitrary number greater than one, and is not to be confused with the number $p$ of labeled data points).  Lemma 4.1 in \cite{slepvcev2017analysis} states that, under the assumptions on $\veps_n$ from Theorem \ref{th:interpolant}, for all large enough $n$ and for every discrete function $u_n$ satisfying
	\[  \mathcal{E}_n^{(\mathtt{p})}(u_n):= \frac{1}{n^2 \veps_n^\mathtt{p}} \sum_{i,j} K \left(\frac{\lvert \x_i-\x_j \rvert}{\veps_n}\right) \lvert u_n(\x_i) - u_n(\x_j) \rvert^{\mathtt{p}}   =1, \]
	it holds
\[ [\Osc_{\veps_n}(u_n)](\x_i) \leq C^{1/\mathtt{p}} \,n^{1/\mathtt{p}} \, \veps_n, \quad \forall i=1, \dots, n. \]
This estimate allows to establish uniform convergence (and not simply convergence in $TL^2$) of discrete functions towards functions defined at the continuum level. More precisely, suppose that $\mathtt{p}>m$ and that $\veps_n \ll \frac{1}{n^{1/\mathtt{p}}}$. Let $\{ u_n \}_{n \in \N}$ be a sequence with $u_n \in L^2(\gamma_n)$ converging to a function $u \in L^2(\gamma)$ in the $TL^2$ sense and for which
\[ \sup_{n \in \N} \mathcal{E}_n^{(\mathtt{p})}(u_n) < \infty. \]
Then, $u$ must be continuous (in fact H\"{o}lder continuous with H\"{o}lder constant obtained from the Sobolev embedding theorem) and moreover
\[ \max_{i=1, \dots, n} \lvert  u_n(\x_i) - u(\x_i) \rvert  \rightarrow 0, \quad \text{as } n \rightarrow \infty.\]
This is the content of Lemma 4.5 in \cite{slepvcev2017analysis}. This type of result rigorously justifies pointwise evaluation of discrete functions with bounded graph $\mathtt{p}$-Laplacian seminorm and the stability of this operation as $n \rightarrow \infty$.

	\subsection{Continuum Limits}\label{ssec:numericsposterior}
	
	\subsubsection{Set-up}\label{ssec:numericsset-up}
	For the remainder of section \ref{sec:Numerics} we work under the assumption of Gaussian observation noise, so that 
	\begin{equation}\label{eq:gaussiannoise}
	\Phi(u;y) = \frac{1}{2\sigma^2}|y-\G(u)|^2, \quad \Phi_n(u_n,y) = \frac{1}{2\sigma^2} |y - \G_n(u_n)|^2.
	\end{equation}
	The synthetic data $y$ in our numerical experiments is generated by drawing a sample $\eta\sim N(0, \sigma^2I_{p\times p})$, and setting 
	$$y = \G(u^\dagger) + \eta,$$
	where $u^\dagger$ is the function in the left panel of Figure \ref{fig:Draws From Continuous Prior}. We consider several choices of $t\ge 0,$ number $p$ of labeled data points, and size of observation noise $\sigma>0.$ The parameters $s$ and $\alpha$ in the prior measures are fixed to $s=5$, $\alpha=1$ throughout. 
	
	The use of Gaussian observation noise, combined with the linearity of our forward and observation maps, allows us to derive closed formulae for the graph and continuum posteriors. We do so in the the appendix. 
	
	\begin{figure}
		\includegraphics[scale=0.35]{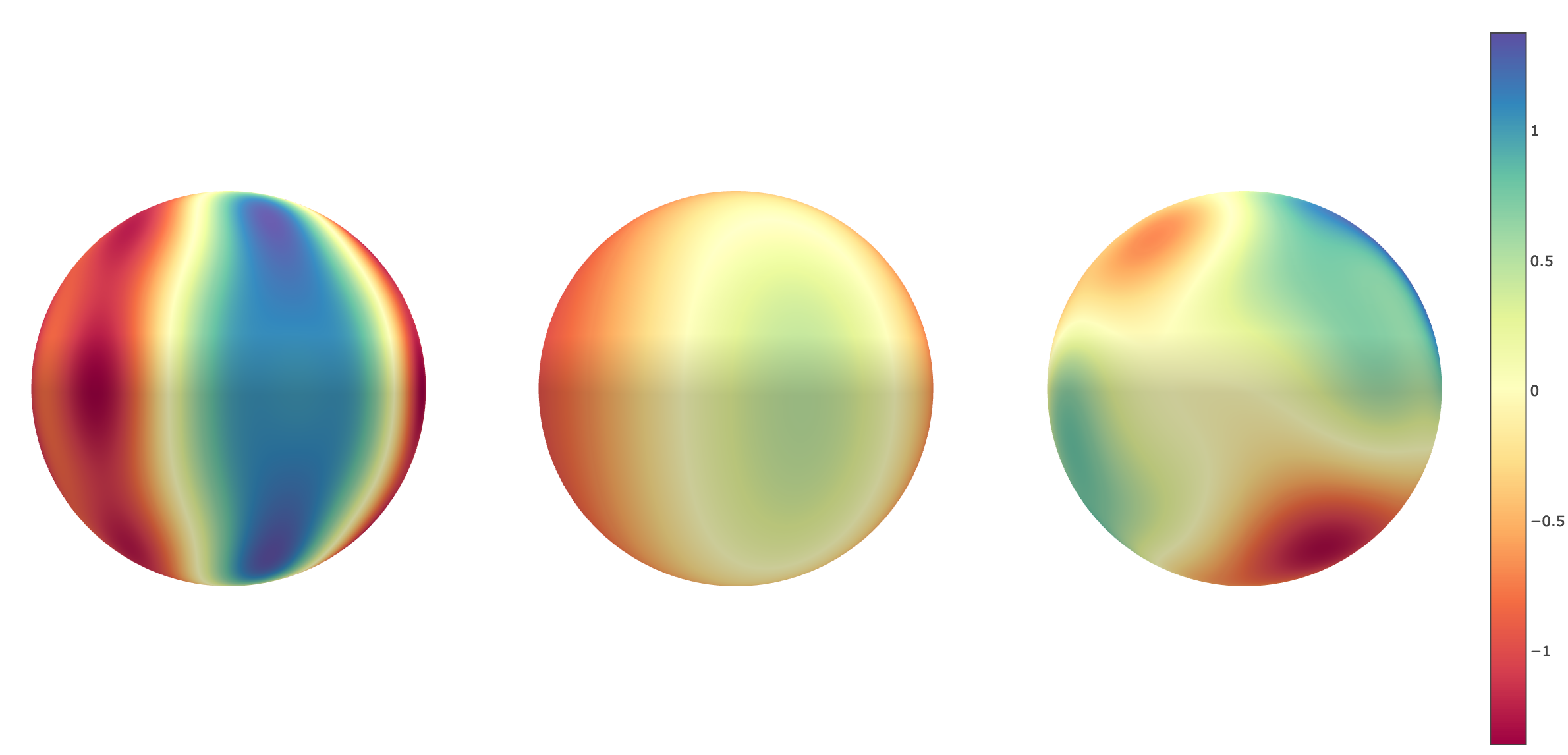}
		\caption{From left to right: Initial condition $u^\dagger$ used as ground truth to generate our synthetic data; heat at $t=0.5$ with initial condition $u^\dagger$; and draw from the continuous prior.}
		\label{fig:Draws From Continuous Prior}
	\end{figure}

	\subsubsection{Numerical Results}
	Here we complement the theory by studying the effect that various model parameters have in the accurate approximation of continuum posteriors by graph posteriors. We emphasize that the continuum posteriors serve as a gold standard for our learning problem: graph posteriors built with appropriate choices of connectivity $\varepsilon$ result in good approximations to continuum posteriors; however, reconstruction of the unknown function $u^\dagger$ is \emph{not} accurate if the data is not informative enough. In such case, MAPs constructed with graph or continuum posteriors may be far from $u^\dagger.$ 
	
	All graph-posterior means in the figures are represented using a $k$-NN interpolation map, as defined in equation \eqref{def:I}, with $k=4.$  The posterior means, discrete and continuum, have been obtained using the appropriate pCN algorithm. The pCN algorithm was run for $10^5$ iterations, and the last $10^4$ samples were used to compute quantities of interest (e.g means and variances). Figure \ref{fig:Draws From Discrete Prior} shows a graph-prior draw represented in the point cloud (left), and the associated $4$-NN interpolant (right).

		\begin{figure}
		\begin{center}
		\includegraphics[scale=0.35]{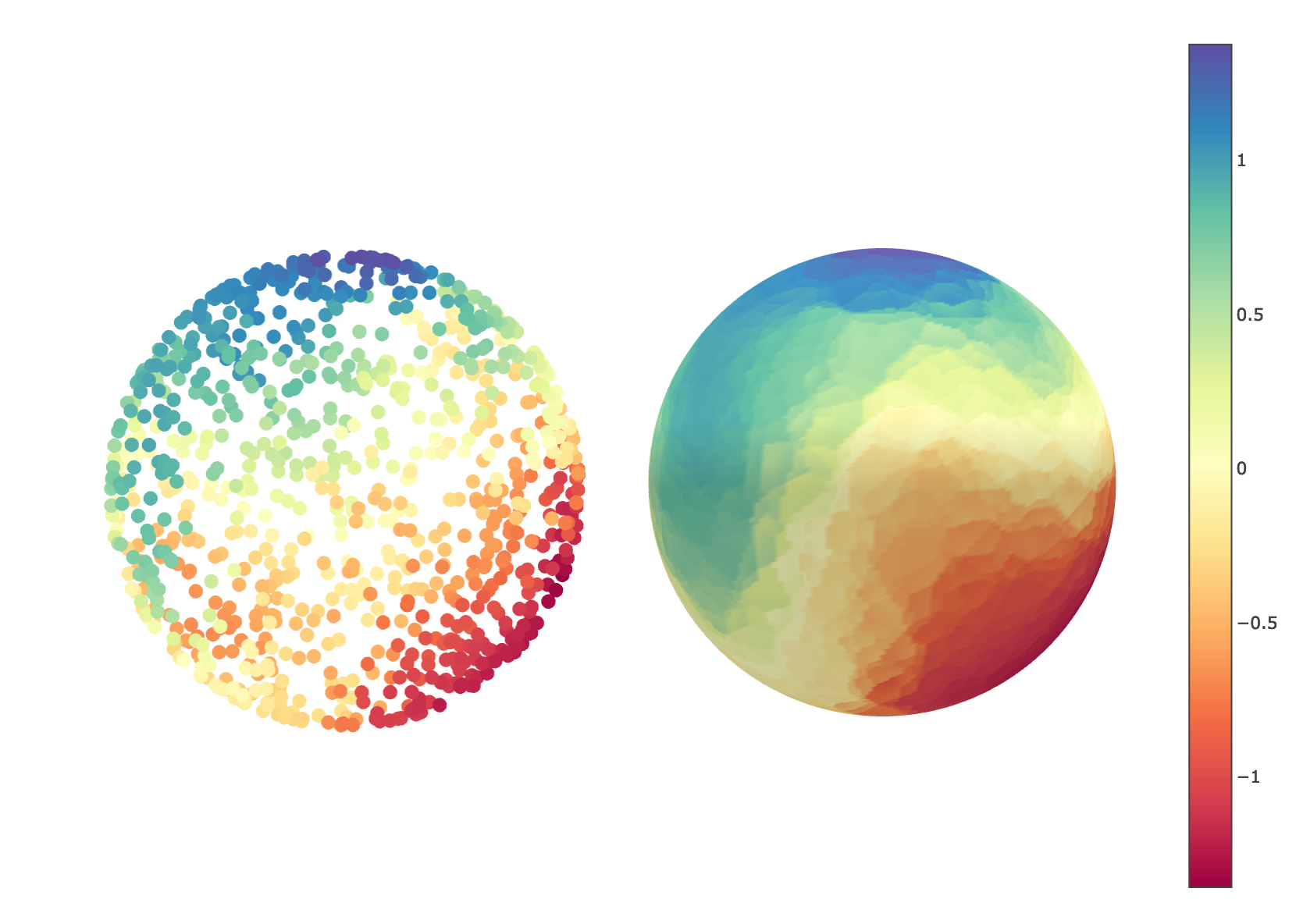}
		\caption{Draw from the discrete graph prior on the left, and the corresponding representation visualized using a 4-nearest-neighbors interpolation on the right. Parameters are $\varepsilon = 2n^{-1/4} $, $n=1000.$}
		\label{fig:Draws From Discrete Prior}
		\end{center}
	\end{figure}
	
	Figure \ref{fig:Discrete Posterior Differences} shows graph and continuum posteriors with $t=0,$ $t=0.1,$ and $t=0.3.$ For these plots, a suitable choice of graph connectivity $\varepsilon$ was taken. In all three cases we see remarkable similarity between the graph and continuum posterior means.  However, recovery of the initial condition with $t=0.3$ is unsuccessful: the data does not contain enough information to accurately reconstruct $u^\dagger$. Figure \ref{fig:Discrete Posterior Differences Epsilon} shows graph-posterior means computed in the regime of the first row of Figure \ref{fig:Discrete Posterior Differences} using the three graphs in Figure \ref{fig:graphs}. Note that the spectra of the associated graph-Laplacians is represented in Figure \ref{fig:Laplacian Spectrum}. It is clear that inappropriate choice of $\varepsilon$ leads to poor approximation of the continuum posterior, and here also to poor recovery of the initial condition $u^\dagger.$ This is unsurprising in view of the dramatic effect of the choice of $\varepsilon$ in the approximation properties of the spectrum of the spherical Laplacian, as shown in Figure \ref{fig:Laplacian Spectrum}. Note that while the numerical results are outside the asymptotic regime ($n=1000$ throughout), they illustrate the role of $\varepsilon.$ Theorem \ref{th:interpolant} establishes appropriate scalings for successful graph-learning in the large $n$ asymptotic setting.
	
	\begin{figure}
	\begin{center}
		\includegraphics[scale=.40]{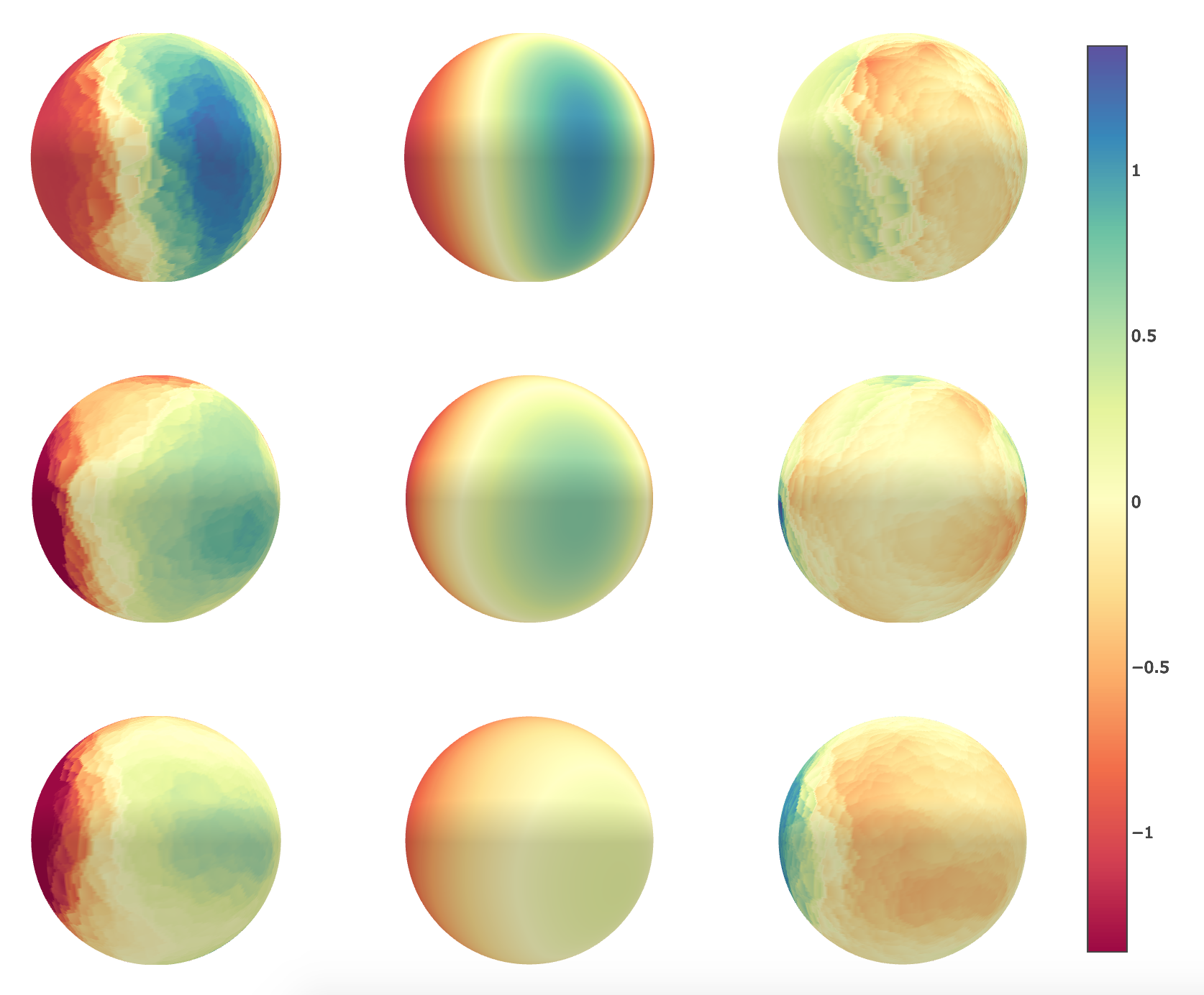}
		\caption{Means of the discrete and continuum posterior distributions are compared; the plots $P_{i,j}$ are arranged such that $P_{i,1}$ are graph-posterior means, $P_{i,2}$ are continuum posterior means, and $P_{i,3}$ are the differences row-wise. $P_{1,j},\ P_{2,j},\ P_{3,j}$ differ in the choice of the time parameter. They are, from the top, $t = [0,\ 0.1,\ 0.3]$.} 
		\label{fig:Discrete Posterior Differences}
		\end{center}
	\end{figure}
	
	\begin{figure}
	\begin{center}
		\includegraphics[scale=.32]{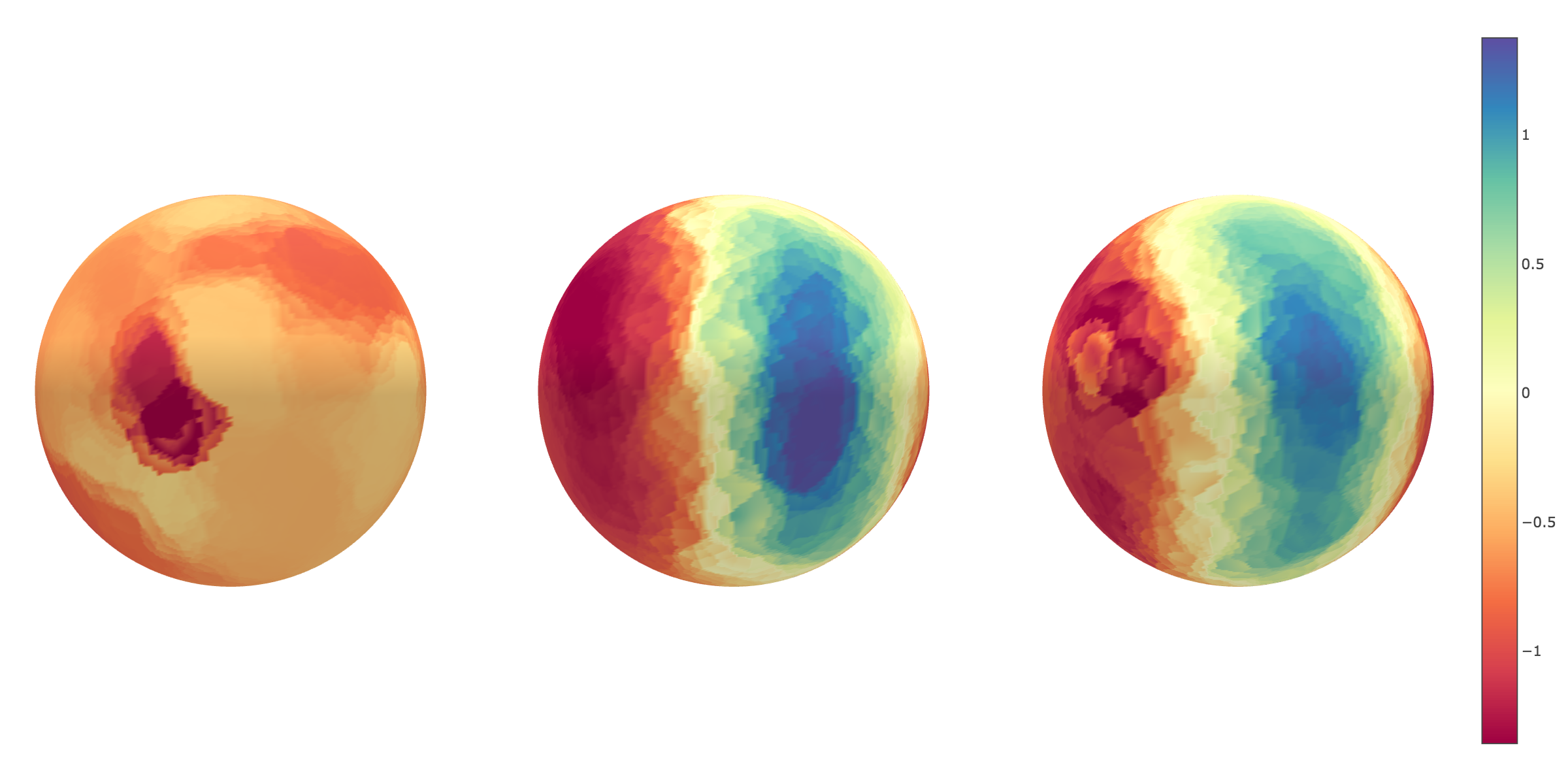}
		\caption{Graph-posterior means computed with the graph-pCN algorithm. All parameters of the learning problem are fixed to $t = 0,\ \sigma = 0.1,\ n=1000,\ \text{and } p=200$. The three plots show three choices of graph connectivities  $\varepsilon =  [1,\  2,\  3] \times n^{-1/4}$ as in Figure \ref{fig:graphs}.} \label{fig:Discrete Posterior Differences Epsilon}
	\end{center}
	\end{figure}
	
	\begin{figure}
		\includegraphics[scale=0.35]{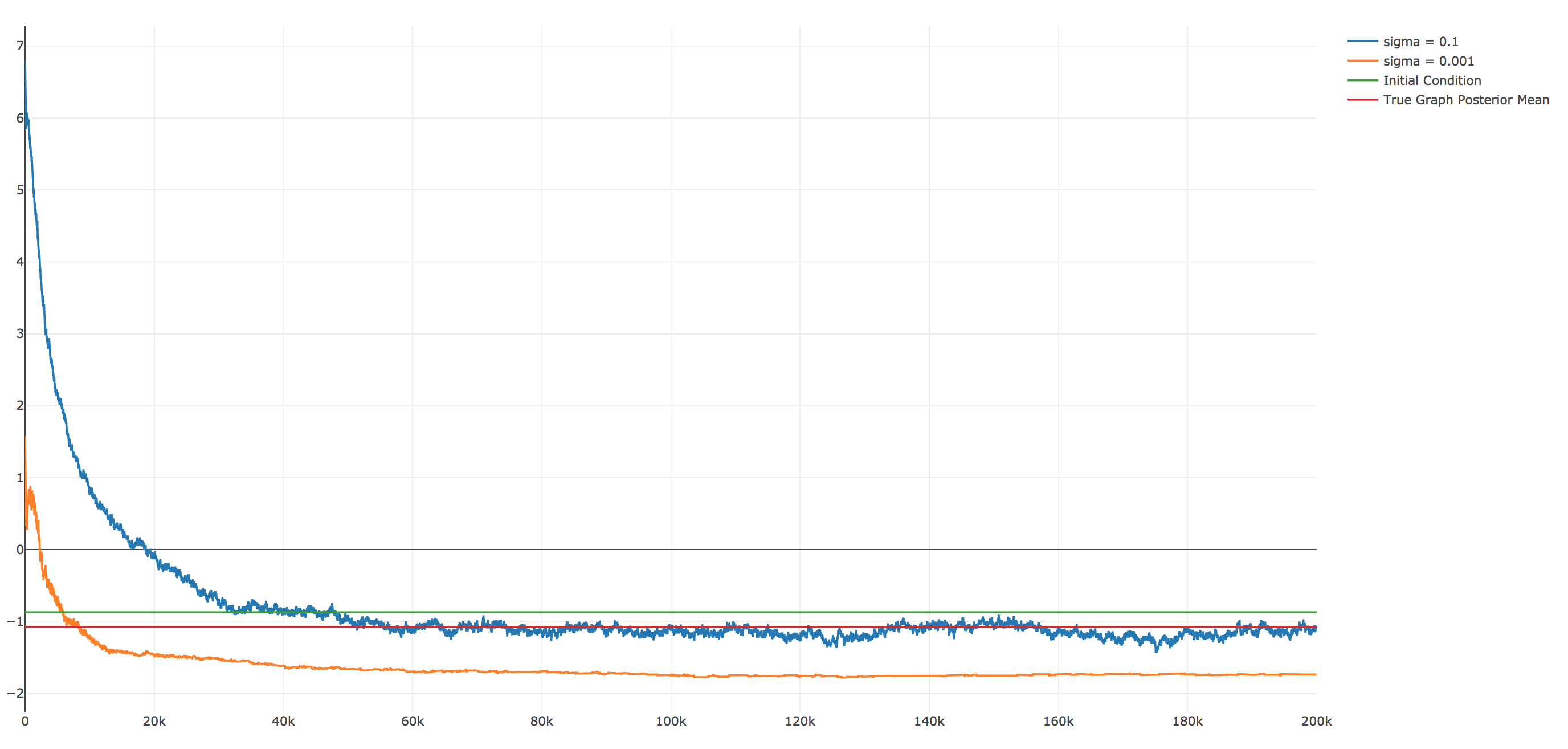}
		\caption{Effect of the parameter $\sigma$ on graph-pCN algorithm. When $\sigma$ is prohibitively small, here $\sigma=0.001$, the chain fails to mix rapidly. With more noise, here $\sigma = 0.1$, the chain mixes rapidly. } 
		\label{fig:Mixing chain different sigmas}
	\end{figure}
	
	\begin{figure}
		\includegraphics[scale=0.35]{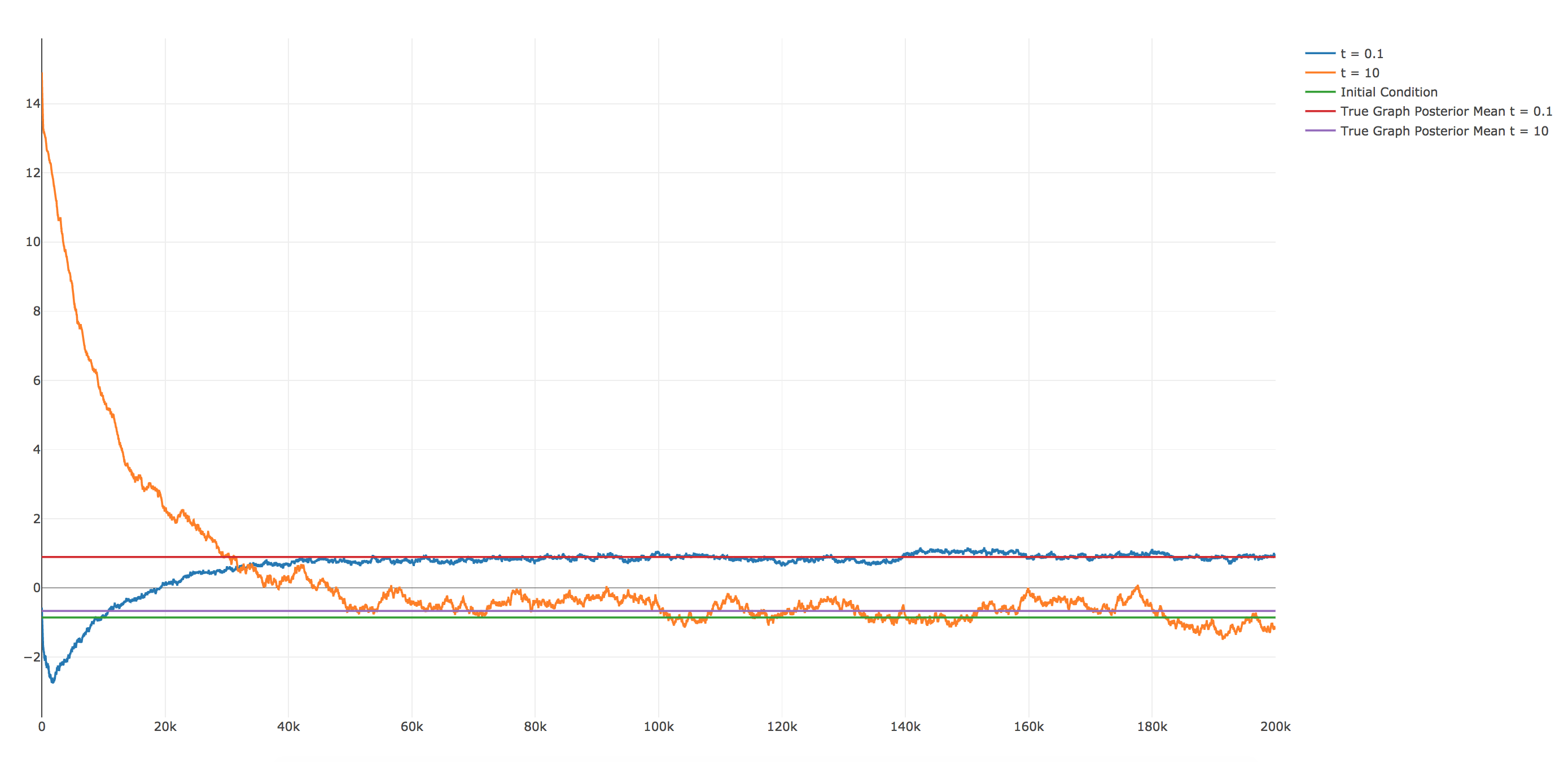}
		\caption{Shown here is the graph-pCN's chain mixing and converging for different values of the parameter $t$. Other parameter values for both chains are the same; note that the variation from $t=0.1$ to $t=10$ does not significantly affect the characteristics of the chain.} 
		\label{fig:Mixing chain different times}
	\end{figure}
	
	\begin{figure}
		\includegraphics[scale=0.35]{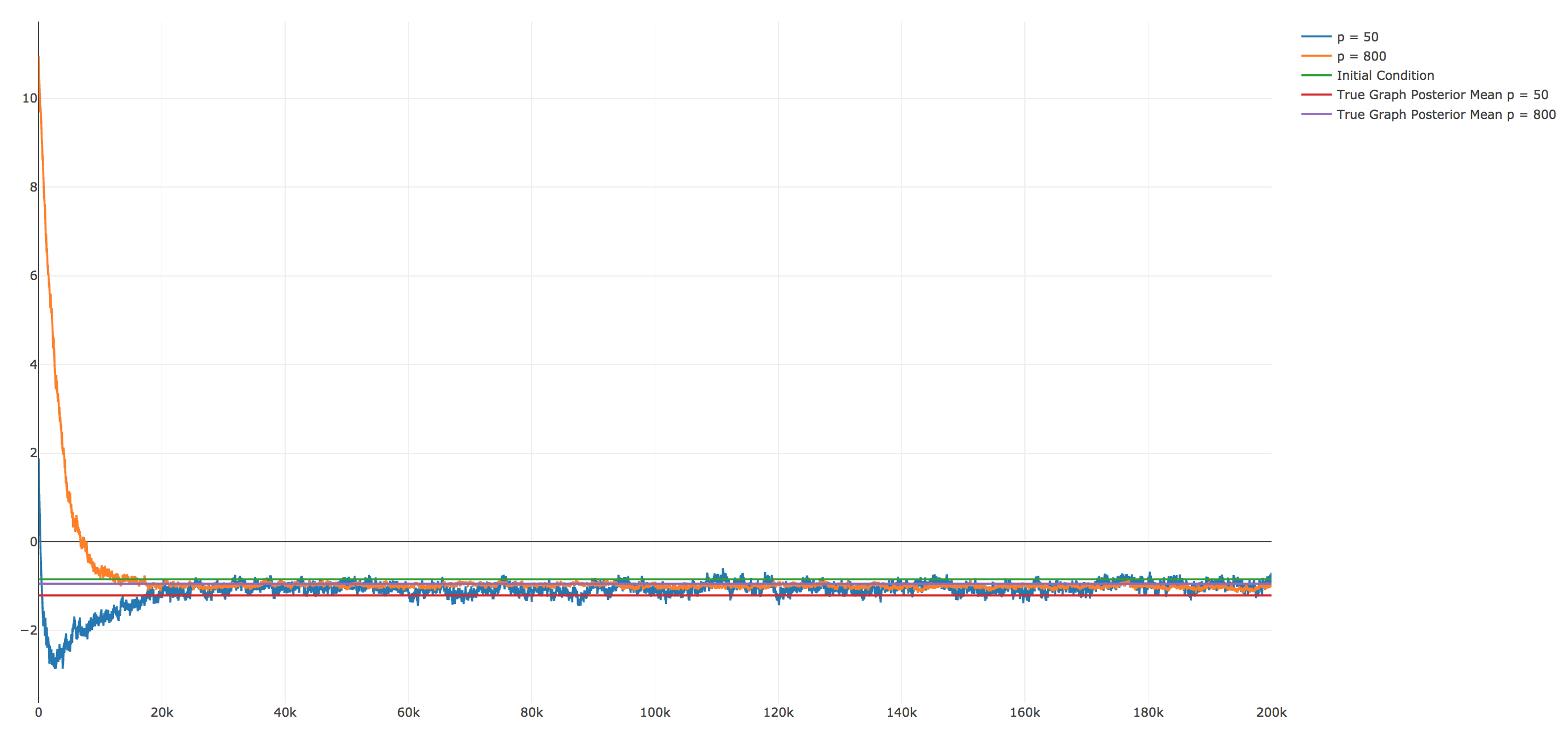}
		\caption{The above chart shows how increasing the value of the parameter $p$ reduces the variance of the chain. Again, the chains above are both from the graph-pCN algorithm, and all other parameters are chosen so that the algorithm performs optimally. } 
		\label{fig:Mixing chain different values of p}
	\end{figure}
	
	\subsection{Algorithmic Scalability} \label{ssec:pcnrobustness}
	
	It is important to stress that the large $n$ robust performance of pCN methods established in this paper hinges on the existence of a continuum limit for the measures $\mun.$ Indeed, the fact that the limit posterior $\muu$ over infinite dimensional functions can be written as a change of measure from the limit prior $\pii$ has been rigorously shown to be equivalent to the limit learning problem having \emph{finite} intrinsic dimension \citep{agapiou2015importance}. In such a case, a key principle for the robust large $n$ sampling of the measures $\mun$ is to exploit the existence of a limit density, and use some variant of the dominating measure to obtain proposal samples. It has been established ---and we do so here in the context of graph-based methods--- that careful implementation of this principle leads to robust MCMC and importance sampling methodologies \citep{hairer2014spectral,agapiou2015importance}. 
	
	A further point to note is that ---even though from a theoretical and applied viewpoint it is clearly desirable that the data is informative--- computational challenges in Bayesian settings often arise when the data is highly informative. This is also the case in the context of importance sampling and particle filters \citep{agapiou2015importance,sanz2016importance}, where certain notion of distance between prior and proposal characterizes the algorithmic complexity. In the context of the pCN MCMC algorithms, if $\Phi$ is constant, the algorithm has acceptance probability $1.$ On the other hand, large Lipschitz constant of $\Phi$ (which translates to a posterior that is far from the prior) leads to small spectral gap. Indeed, tracking the spectral gap of pCN in terms of model parameters via the understanding of Lipschitz constants is in principle possible, and will be the subject of further work. In particular, small observation noise $\sigma$ leads to deterioration of the pCN performance, see Figure \ref{fig:Mixing chain different sigmas}. This issue may be alleviated by the use of the generalized version of pCN introduced in \cite{rudolf2015generalization}. Figures \ref{fig:Mixing chain different times} and \ref{fig:Mixing chain different values of p} investigate the role of the parameters $t$ and $p$. All these figures show the posterior mean at one of the inputs, and the true graph posterior means have been computed with the formulae in the appendix.
		
	Table \ref{table3} shows the large $n$ robustness of pCN methods, while table \ref{table4} exhibits its deterioration in the fully supervised case $n=p.$ The tables show the average acceptance probability with model parameters  $\beta = 0.01,$ $p=200,$ $\varepsilon_n = 2 n^{-1/4}$ for the semi-supervised setting, and same parameters but with $p=n$ for the fully supervised case. The corresponding graph-posterior means are shown in Figure \ref{fig:Discrete Posterior Differences with Variable n}.

	\begin{figure}
		\includegraphics[scale=0.35]{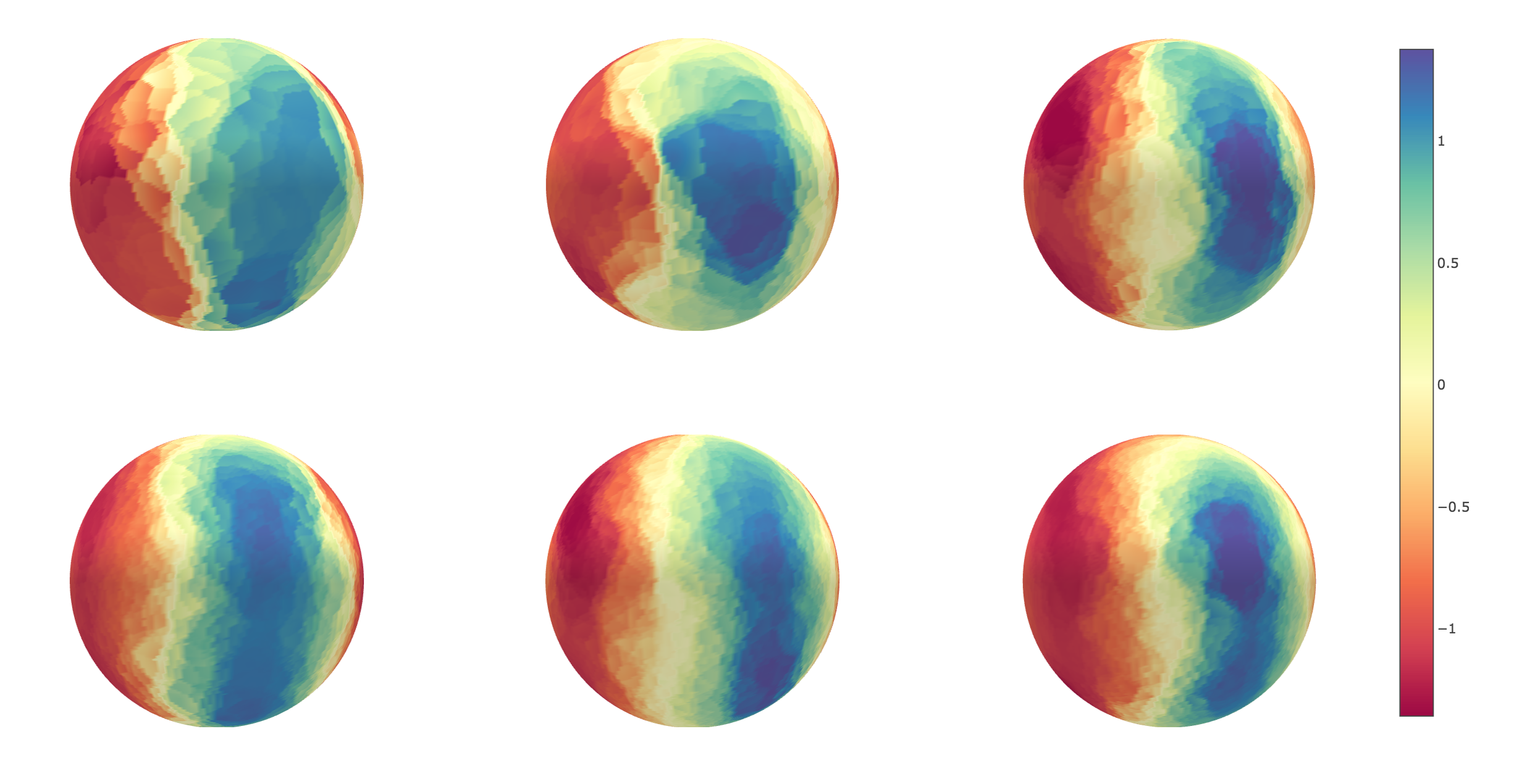}
		\caption{Graph pCN's robustness with respect to a changing value of $n$. In all plots $P_{i,j}$ above, $p = 200,\ t = 0.1,\ \sigma = 0.1,\ \text{and } \varepsilon = 2n^{-1/4}$. The plots are arranged such that $n = [300, 600, 900]$ for $P_{1,j}$ and $n=[1200, 1500, 2000]$ for $P_{2,j}$. The average acceptance probability remains constant with fixed $\beta$, as shown in Table \ref{table3}.} 
		\label{fig:Discrete Posterior Differences with Variable n}
	\end{figure}

	\begin{table}
		\caption{\label{table3} Average acceptance probability for the graph pCN in the semi-supervised setting with constant data-set of size $p=200$ and increasing number of unlabeled data.}
		\centering
		\begin{tabular}{|l|l|l|l|l|l|l|}
			\hline
			$n$                      & 300 & 600 & 900 & 1200 & 1500 & 2000 \\ \hline
			Acceptance Probability & 0.230 & 0.245 & 0.237 & 0.249  & 0.236  & 0.239  \\ \hline
		\end{tabular}
	\end{table}
	
	\begin{table}
		\caption{\label{table4} Deterioration of the average acceptance probability in a fully-supervised setting with $n=p.$ The parameter $\beta$ was held constant at $\beta = 0.01$. Additionally, $\varepsilon = 2n^{-1/4}$ and $t = 0$.}
		\centering
		\begin{tabular}{|l|l|l|l|l|l|l|}
			\hline
			$n=p$                     & 300 & 600 & 900 & 1200 & 1500 &  2000 \\ \hline
			Acceptance Probability & 0.4536 & 0.3144 & 0.2360 & 0.1924  & 0.1644  & 0.1100  \\ \hline
		\end{tabular}
	\end{table}

	\section{Acknowledgements}\label{sec:conclusions}
The work of NGT and DSA was supported by the NSF Grant DMS-1912818/1912802. ZK was funded by the NSF grant $\#1148284$ IDyaS;  TS would like to thank the Brown Division of Applied Mathematics for providing funds for the research. The authors are thankful to Dejan Slep\v{c}ev for a careful reading of a first version of this manuscript.

	\bibliography{isbib}
	
	\appendix

		\section{Benchmark Formulae}\label{ssec:benchmarkformulae}
	Here we exploit the linearity of the forward and observation maps to compute, under the Gaussian observation noise model, the mean and covariance of the Gaussian graph and continuum posteriors. These formulae could be useful in understanding the approximation of continuum posteriors by graph posteriors, and to provide benchmarks for posteriors computed with MCMC methods. For the derivations we use the covariance function representation of Gaussian measures and the theory of Gaussian process regression in \cite{rasmussen2006gaussian}. Throughout we assume that $s$ is large enough so that the formulae below are well-defined.
	
	We start with the continuum case. Set $v:= \F u.$ The prior \eqref{eq:prior} on $u$ induces a prior on $v\sim GP\bigl(0,c_v(x,\tilde{x})\bigr),$ where
	\begin{equation}\label{eq:covariancecv}
	c_v(x,\tilde{x}) = \sum_{i=1}^\infty e^{-2\lambda_i t} (\alpha + \lambda_i)^{-s/2} \psi_i(x) \psi_i(\tilde{x}).
	\end{equation}
	Then, we have a regression problem for $v$ given data $y = [y_1,\ldots,y_p]'$
	$$y_i = v(\x_i) + \eta_i, \quad \eta_i \sim N(0,\gamma^2)$$
	in the form of \cite{rasmussen2006gaussian}. The posterior distribution of $v|y$ is thus given by a Gaussian process $GP\bigl(m_{v|y}(x), c_{v|y}(x,\tilde{x})\bigr),$ with
	\begin{align*}
	m_{v|y}(x) &= c_v(x,X)' \bigl(c_v(X,X) + \gamma^2I\bigr)^{-1} \,y,\\
	c_{v|y} (x,\tilde{x}) &= c_v(x,\tilde{x}) - c_v(x,X)' \bigl(c_v(X,X) + \gamma^2I \bigr)^{-1} c_v(\tilde{x},X),
	\end{align*}
	where we use the following notations:
	\begin{align*}
	c_v(x,X) &:= [c_v(x,\x_1), \ldots, c_v(x,\x_p) ]' \in \R^p, \\
	c_v(X,X) &:=\bigl(c_v(\x_i,\x_j)\bigr)_{1\le i,j\le p} \in \R^{p\times p}.
	\end{align*}
	Now the posterior of interest $\muu$ on $u$ given $y$ can be recovered by running the heat equation backwards. Namely, we have that $\muu = GP\bigl(m_{u|y}(x), c_{u|y}(x,\tilde{x})\bigr)$ with
	\begin{align}\label{eq:meancovariancecont}
	\begin{split}
	m_{u|y}(x) &= c_w(x,X)' (c_v(X,X)  + \gamma^2I )^{-1} y, \\
	c_{u|y}(x,\tilde{x}) &=c_u(x,\tilde{x}) - c_w(x,X)' \bigl(c_v(X,X) + \gamma^2I\bigr)^{-1} c_w(\tilde{x},X),
	\end{split}
	\end{align}
	where $c_w(x,X)$ is a vector made of evaluations of the covariance function of $w:=\F^{1/2}u$ at the test and training points. Precisely, its $j$-th entry is given by
	\begin{equation}\label{eq:covariancecw}
	c_w(x,X)_j =  \sum_{i=1}^\infty e^{-\lambda_it} (\alpha + \lambda_i)^{-s/2}  \psi_i(x) \psi_i(\x_j).
	\end{equation}
	
	There are several points to note about equation \eqref{eq:meancovariancecont}. First, the predictive mean is a linear function of the data $y$, hence a linear predictor. It is indeed the best linear predictor in a mean-squared error sense \citep{stein2012interpolation}. Second, since $c_v(X,X) + \gamma^2I$ is positive definite, $c_{u|y}(x,\tilde{x}) \le c_{u}(x,\tilde{x})$; thus, conditioning reduces the uncertainty. Moreover, in the limit of noiseless observations ($\gamma = 0$) and $t=0$ we recover that $c_{u|y}(\x_i,\x_j) = 0$ in the training points. However, even with noiseless observations this is not true if $t>0.$ Finally, note the well-known fact that the the posterior covariance $c_{u|y}$ does not depend on the observed data $y$.
	
	Formulae in the discrete setting can be obtained in a similar way, and we omit the details. Plugging in the data $y$ from the continuum setting, we deduce that $$\mun = N\bigl( m_{u_n|y}(\x_k), c_{u_n|y}(\x_k, \x_l)\bigr),$$ with
	
	\begin{align}\label{eq:meancovariancedisc}
	\begin{split}
	m_{u_n|y}(\x_k) &= c_{w_n}(x,X)' (c_{v_n}(X,X)  + \gamma^2I )^{-1} y, \\
	c_{u_n|y}(\x_k,\x_l) &=c_{u_n}(\x_k,\x_l) - c_{w_n}(\x_k,X)' \bigl(c_{v_n}(X,X) + \gamma^2I\bigr)^{-1} c_{w_n}(\x_l,X).
	\end{split}
	\end{align}
	
	In the above equations, all objects indexed by $n$ constitute straightforward analogues of objects in the continuum, constructed using the graph spectrum rather than the continuum one.

	\section{The $TL^2$ and $\mathcal{P}(TL^2)$ Spaces}
	\label{TL2PTL2}
	
	Let us recall the definition of the $TL^2$ space. First, we define the set
	\[ TL^2 := \bigl\{ (\theta, f) \; : \:  \theta \in \mathcal P(\M), \, f \in L^p(\M, \theta) \bigr\}. \]
	Then, for arbitrary elements $(\theta_1,f_1)$ and $(\theta_2,f_2)$ in $TL^2$ we define, following \cite{trillos},  
	\begin{align} \label{tlpmetric}
	\begin{split}
	d_{TL^2}\bigl((\theta_1,f_1), (\theta_2,f_2)\bigr):=
	\inf_{\omega \in \Gamma(\theta_1, \theta_2)} \left(  \iint_{\M \times \M} \Bigl( d_\M(x,y)^2 + |f_1(x)-f_2(y)|^2  \Bigr) d\omega(x,y)  \right)^{1/2},
	\end{split}
	\end{align}
	where $\Gamma(\theta_1,\theta_2)$ is the set of Borel probability measures on $\M\times \M$ with marginal $\theta_1$ on the first factor and $\theta_2$ on the second one. It was shown in \cite{trillos} that $d_{TL^2}$ defines a distance in $TL^2$.
	
	The $TL^2$ space allows us to make sense of a sequence $u_n \in L^2(\gamma_n)$ converging towards an element $u \in L^2(\gamma)$. Indeed, with a slight abuse of notation, we say that a sequence $u_n \in L^2(\gamma_n)$ converges in $TL^2$ towards $u\in L^2(\gamma),$ written
	$$u_n \converges{TL^2} u,$$
	if $d_{TL^2}\bigl((u_n, \gamma_n), (u, \gamma)\bigr) \to 0.$  A characterization of convergence in $TL^2$ in terms of composition with transport maps can be found in Proposition 3.12 in \cite{trillos}. 
	
	As noted in \cite{trillos}, $(TL^2, d_{TL^2})$ is not a complete metric space. Its completion however, denoted $\overline{TL^2}$, can be identified with the space $\mathcal{P}_2(\M \times \R)$ of Borel probability measures on the product space $\M \times \R$ with finite second moments, endowed with the Wasserstein distance. The space $\overline{TL^2}$ is a Polish space.

	Having introduced the metric space $TL^2$ we can now define $\mathcal{P}(TL^2)$ to be the space of Borel probability measures on $TL^2$ endowed with the weak convergence of probability measures. If $\theta \in \mathcal{P}(\M)$ and  $\boldsymbol{\nu} \in \mathcal{P}(L^2(\theta))$, it is possible to think of $\boldsymbol{\nu}$ as an element in $\mathcal{P}(TL^2)$. Indeed, the canonical inclusion
	\[ \mathcal{I}_\theta:  f \in  L^2(\theta) \longmapsto   (\theta, f) \in TL^2 \]
	induces the canonical inclusion
	\[ \mathcal{I}_{\theta \sharp}:  \mathcal{P}(L^2(\theta)) \hookrightarrow \mathcal{P}(TL^2), \]
	where $\mathcal{I}_{\theta \sharp}$ is the push-forward via $\mathcal{I}_{\theta}$.  Notice that $\mathcal{I}_\theta$ is a continuous map. In the sequel we may drop the explicit mention to $\mathcal{I}$ whenever no confusion arises from doing so.
	
	The above observation motivates the following definition. 
	\begin{definition}
		\label{convmeasures:defn}
		For  $\boldsymbol{\nu_n} \in \mathcal{P}\bigl(L^2(\gamma_n)\bigr),$ $n\in \N,$ and   $\boldsymbol{\nu} \in \mathcal{P}\bigl(L^2(\gamma)\bigr)$ we say that $\{\boldsymbol{\nu_n}\}_{n\in \N}$ converges to $\boldsymbol{\nu}$, written
		$$\boldsymbol{\nu_n}\converges{\P(TL^2)}{ \boldsymbol{\nu}},$$
		if $\{\mathcal{I}_{\gamma _n \sharp} \boldsymbol{\nu_n}\}_{n\in \N}$ converges weakly to $ \mathcal{I}_{\gamma \sharp}  \boldsymbol{\nu}$ in $\mathcal{P}(TL^2).$
	\end{definition}
	
This is the notion of convergence of discrete to continuum posteriors that we use in this paper. The space $\mathcal{P}(TL^2)$ was introduced in \cite{garcia-sanz2017continuum}.

	\section{Proof of Theorem \ref{th:interpolant}}
	\label{AppA}
	We want to show that
	\begin{equation}
	\I_{n\sharp} \mun \rightarrow_{\mathcal{P}(L^2(\gamma))} \muu, \quad \text{ as } n \rightarrow \infty.
	\label{conv:Goal}
	\end{equation}

	\textbf{Step 0:} 
	The proof of Theorem 4.1 in \cite{garcia-sanz2017continuum} shows that   
	\[\pii_n \rightarrow_{\mathcal{P}(TL^2)} \pii, \quad \text{ as } n \rightarrow \infty,\]
	under the assumptions of Theorem \ref{th:interpolant} (in particular removing the upper bound assumption on $\veps_n$ from Theorems 4.1 and 4.4 in \cite{garcia-sanz2017continuum}). Likewise the proof of Theorem 4.4  in \cite{garcia-sanz2017continuum} establishes the $\Gamma$-convergence of the energies 
	\[ J_n(\boldsymbol{\nu_n}):=  \dkl(\boldsymbol{\nu_n} \|  \pin ) + \int_{L^2(\gamma_n)} \phi_n(u_n; y) d \boldsymbol{\nu_n}(u_n), \quad \mun \in \mathcal{P}(L^2(\gamma_n)),  \]
	towards the energy
	\[ J(\boldsymbol{\nu}) = \dkl(\boldsymbol{\nu} \| \pii ) + \int_{L^2(\gamma)} \phi(u; y) d \boldsymbol{\nu}(u), \quad \boldsymbol{\nu} \in \mathcal{P}(L^2(\gamma)) \]
	in the $\mathcal{P}(TL^2)$-sense, under the assumptions of Theorem \ref{th:interpolant}. In particular, 
	\[ \mun \rightarrow_{\mathcal{P}(TL^2)} \muu , \quad n \rightarrow \infty,\]
	because $\mun$ is the minimizer of $J_n$ and $\muu$ is the minimizer of $J$ (see the variational characterization of posterior distributions in \cite{trillossanzflows}).

	\textbf{Step 1:} We claim that $\{ \I_{n\sharp} \mun \}_{n \in \N}$ is pre-compact with respect to the weak convergence of probability measures on $L^2(\gamma)$.  By Lemma 5.1 in \cite{garcia-sanz2017continuum} it is enough to show that
	\begin{enumerate}[(i)]
		\item $\sup_{n \in \N}  \dkl( \I_{n \sharp} \mun  \| \I_{n \sharp} \pin)   < +\infty$; and
		\item $\mathcal{I}_{n \sharp} \pin \rightarrow _{\mathcal{P}(L^2(\gamma))} \pii.$
	\end{enumerate}
	
	Let us start with (i). Step 0 implies that 
	\[ \lim_{n \rightarrow \infty} \min_{\boldsymbol{\nu_n}}  J_n(\boldsymbol{\nu_n}) =   \min_{\boldsymbol{\nu}}  J(\boldsymbol{\nu}) <  +\infty. \]
	Given that $\mun$ is the minimizer of $J_n$ and $\muu$ is the minimizer of $J$, it follows that
	\[ \lim_{n \rightarrow \infty} J_n(\mun)  = J(\muu) < +\infty.\]
	Combining the previous fact with the chain of inequalities
	\[ \dkl(\I_{n\sharp} \mun \|  \I_{n\sharp} \pin) \leq  \dkl(\mun \|  \pin ) \leq J_n(\mun)  \] 
	gives (i).

	We now show (ii). Consider an orthonormal basis of eigenvectors $\{\psi_1^n , \dots, \psi_i^n \}$ of $\Delta_{\M_n}$ and an orthonormal basis $\{ \psi_1, \dots, \psi_n , \dots \}$ of eigenfunctions of $\Delta_\M$. By the results in \cite{trillosACHA} we can assume without the loss of generality that, for all $j\in \N,$ 
	\[  \psi_j^n \rightarrow_{TL^2} \psi_j , \text{ as } n \rightarrow \infty. \]
 Let $(\tilde{\Omega}, \tilde{F}, \tilde{\mathbb{P}})$ be a probability space supporting i.i.d. random variables $\{ \xi_i \}_{i \in \N}$ with $\xi_i \sim N(0,1)$ and consider 
	\[ X_n = \sum_{i=1}^{k_n}   (\alpha + \lambda_i^n)^{-s/4}\xi_i   \psi_i^n , \quad X = \sum_{i=1}^{\infty}    (\alpha + \lambda_i)^{-s/4}\xi_i   \psi_i ,\]
	 where, recall, $k_n$ is the truncation level of the prior $\pin.$ \nc
	Notice that $X_n \sim \pin $, $X \sim \pii$ and $\I_n(X_n)$ is distributed according to  $\I_{n \sharp} \pin$. For any fixed $i=1, \dots, k_n$ it follows from the first part of the proof of Theorem 1.10 in \cite{SpecRatesTrillos} that
	\begin{equation}\label{eq:bounds}
	\lVert \I_n (\psi_i^n) \rVert_{L^2(\gamma)}  \leq \lVert \I_n (\psi_i^n) - \psi_i \rVert_{L^2(\gamma)} + \lVert \psi_i \rVert_{L^2(\gamma)} \leq   C ,
	\end{equation}
	where $C$ is a constant independent of $i=1, \dots, k_n$ and $n$. It then follows that for every $l \in \N$, 
	\begin{align*}
	\lVert \I_n(X_n) & - X  \rVert_{L^2(\gamma)}  \leq \left\lVert \sum_{i=1}^l (\alpha + \lambda_i^n)^{-s/4}\xi_i \I_n (\psi_i^n) - \sum_{i=1}^l (\alpha + \lambda_i)^{-s/4}\xi_i \psi_i \right\rVert_{L^2(\gamma)} 
	\\&+ \sum_{i=l}^{k_n} (\alpha + \lambda_i^n)^{-s/4}|\xi_i| \lVert \I_n (\psi_i^n) \rVert_{L^2(\gamma)}    + \sum_{i=l}^\infty (\alpha + \lambda_i)^{-s/4} |\xi_i| \lVert \psi_i \rVert_{L^2(\gamma)}
	\\ & \leq \left\lVert \sum_{i=1}^l (\alpha + \lambda_i^n)^{-s/4}\xi_i \I_n(\psi_i^n) - \sum_{i=1}^l (\alpha + \lambda_i)^{-s/4}\xi_i \psi_i \right\rVert_{L^2(\gamma)} +  C \sum_{i=l}^\infty (\alpha + \lambda_i)^{-s/4} |\xi_i|  ,
	\end{align*}
	where $C$ is a constant that does not depend on $n$; we have used the bounds \eqref{eq:bounds} on $\lVert \I_n(\psi_i^n) \rVert_{L^2(\gamma)}$ and the bounds \eqref{eq:estimateseigen} for $\lambda_i^n$ in terms of $\lambda_i$ for $i=1, \dots, k_n$. We can then take expectations and $\limsup$s in both sides of the above inequality and use Theorem 1.10 in \cite{SpecRatesTrillos} to conclude that 
	\[ \limsup_{n \rightarrow \infty}  \E \left( \lVert \I_n(X_n) - X  \rVert_{L^2(\gamma)} \right) \leq C \sum_{i=l}^\infty (\alpha + \lambda_i)^{-s/4}. \]
	Since the above is true for every $l$ and the series is convergent, (ii) follows.
	
	An application of Lemma 5.1 in \cite{garcia-sanz2017continuum} allows us to deduce that $\{  \mathcal{I}_{n \sharp} \mun   \}_{n \in \N} \subseteq \mathcal{P}(L^2(\gamma)) $ is pre-compact and, moreover, that each of its cluster points is a measure that is absolutely continuous with respect to $\pii$. We can then assume without the loss of generality that, for some $\tilde{\muu} \in \mathcal{P}(L^2(\gamma)),$
	\[  \I_{n\sharp} \mun \rightarrow_{\mathcal{P}(L^2(\gamma))} \tilde{\muu}, \quad \text{ as } n \rightarrow \infty.\]

	\textbf{Step 2:} To show \eqref{conv:Goal} it is then enough to prove that the finite dimensional projections of $\tilde{\muu}$ coincide with those of $\muu$. More precisely, we identify $u \in L^2(\gamma)$ with the infinite vector $(u_1, u_2 , \dots)$ denoting the coefficients of $u$ in the basis $\{ \psi_1, \psi_2, \dots\}$ and define $\Proj_j(u):= \sum_{i=1}^j  u_i  \psi_i$; we need to show that for arbitrary $j\in N$ we have
	\[  \Proj_{j \sharp} \tilde{\muu}= \Proj_{j\sharp } \muu.\] 
	From Step 0 and Skorohod's theorem,  we know there exists a probability space $(\tilde{\Omega}, \tilde{F}, \tilde{\mathbb{P}})$ supporting random variables $\{ X_n^y\}_{n \in \N}$ and $X^y$ with $X_n^y \sim \mun$ and $X^y\sim \muu$ and for which $X_n^y \rightarrow_{TL^2} X^y$ for $\tilde{\Prob}$-a.e. $\tilde{\omega} \in \tilde{\Omega}$. We can then write
	\[ X_n^y= \sum_{i=1}^{k_n}   a_i^n   \psi_i^n , \quad X^y= \sum_{i=1}^{\infty}    a_i  \psi_i ,\]
	for some random variables $a_i^n$ and $a_i$. Notice that the continuity of inner products with respect to $TL^2$-convergence (see Proposition 2.6  in \cite{trillosACHA}) implies that
	\[  \lim_{n \rightarrow \infty} a_i^n =  a_i, \quad \tilde{\Prob}\text{-a.e}.   \]
	
	Now, for every fixed $l \geq j$ we can write 
	\begin{equation}
	\label{aux:appen1}
	\Proj_j( \I_n(X_n^y))  = \sum_{i=1}^l a_i^n \Proj_j( \I_n(\psi_i^n) ) +  \sum_{i=l+1}^{k_n} a_i^n \Proj_j( \I_n(\psi_i^n) ).
	\end{equation}
	The left hand side of the above expression is seen to converge weakly towards $\Proj_{j \sharp}\tilde{\muu}$ because $\I_n(X_n^y) \sim \I_{n \sharp} \mun $, $\I_{n \sharp}\mun \rightarrow_{\mathcal{P}(L^2(\gamma))} \tilde{\muu}$, and because  $\Proj_j$ is continuous. On the other hand, the first term on the right hand side is seen to converge  $\tilde{\Prob}$-a.e. towards $\sum_{i=1}^j a_i \Proj_j(\psi_i)= \sum_{i=1}^j a_i \psi_i$ because 
	\[  \I_n(\psi_i^n ) \rightarrow_{L^2(\gamma)} \psi_i , \quad \text{ as } n \rightarrow \infty,  \]
	which follows from Theorem 1.10 in \cite{SpecRatesTrillos} (it is at this stage that we need the extra technical condition on $\veps_n$); in particular this term converges weakly towards $\Proj_{j \sharp} \muu$. To show $\Proj_{j \sharp} \tilde{\muu}= \Proj_{j\sharp } \muu$ it is then enough, by Slutsky's theorem, to prove that $\lVert \sum_{i=l+1}^{k_n} a_i^n \Proj_j( \I_n(\psi_i^n) ) \rVert_{L^2(\gamma)}$ converges in probability towards zero. 
	
	To see this, first notice that
	\[  \left\lVert \sum_{i=l+1}^{k_n} a_i^n \Proj_j(\I_n (\psi_i^n)) \right\rVert_{L^2(\gamma)}  \leq C \sum_{i=l+1}^{k_n} \lvert  a_i^n\rvert.\] 
	Fix $t>0$. Observe that the expression 
	\[ \limsup_{n \rightarrow \infty}  \tilde{\Prob}\left( \left \lVert \sum_{i=l+1}^{k_n} a_i^n \Proj_j(\I_n (\psi_i^n))  \right \rVert_{L^2(\gamma)}  > t \right)\]
	is independent of $l$.  Then,
	\begin{align*}
	q_j(t) &:= \limsup_{n \rightarrow \infty}  \tilde{\Prob}\left( \left \lVert \sum_{i=l+1}^{k_n} a_i^n \Proj_j(\I_n (\psi_i^n))  \right \rVert_{L^2(\gamma)}  > t \right)  \\
	& \leq \limsup_{n \rightarrow \infty} \tilde{\Prob}\left( \sum_{i=l+1}^{k_n}|a_i^n| > \frac{t}{C}  \right).
	\end{align*}
	On the other hand, identifying the elements in the support of $\pin$ with $\R^{k_n}$ (i.e. writing $u_n \in \supp(\pin)$ in the basis $\{ \psi_1^n , \dots, \psi_{k_n}^n\}$) and letting $A_{n,t,l}$ be the set
	\[  A_{n,t,l}:=  \left\{  x\in \R^{k_n} \: : \:  \sum_{i=l+1}^{k_n}| x_i| > \frac{t}{C} \right\}, \]
	we see that
	\[ \tilde{\Prob}\left( \sum_{i=l+1}^{k_n}|a_i^n| > \frac{t}{C}  \right)  = \mun \left(A_{n,t,l} \right) = \frac{1}{Z_n}\int_{A_{n,t,l}} \exp(- \Phi_n(x;y))d \pin(x) \leq \frac{1}{Z_n}\pin\left( A_{n,t,l} \right), \]
	and hence
	\[ \limsup_{n \rightarrow \infty} \tilde{\Prob}\left( \sum_{i=l+1}^{k_n}|a_i^n| > \frac{t}{C}  \right) \leq \frac{1}{Z} \pii \left( \{ u \in L^2(\gamma) \: : \: \sum_{i=l+1}^\infty |u_i| > t/C   \}\right). \]
	In the above $Z$ and $Z_n$ are the normalization constants from \eqref{eq:posteriorknownm} and \eqref{eq:posteriorunknownm} respectively. 
	
	Therefore, 
	\[ q_j(t) \leq  \frac{1}{Z} \pii \left( \{ u \in L^2(\gamma) \: : \: \sum_{i=l+1}^\infty |u_i| > t/C   \}\right). \]
	Taking now the limit as $l \rightarrow \infty$ of the right hand side of the above expression, we deduce that $q_j(t) =0$. Since this is true for arbitrary $t>0$, we deduce that indeed $\lVert \sum_{i=l+1}^{k_n} a_i^n \Proj_j( \I_n(\psi_i^n) ) \rVert_{L^2(\gamma)}$ converges in probability towards zero and the proof is now complete.
	
	\begin{remark}
		\label{GeneralInterpolants}
		In the above proof we have used results from \cite{SpecRatesTrillos} on Voronoi extensions, but it is clear that analogue results can be deduced for more general interpolation maps $\{ \mathcal{I}_n\}_{n \in \N}$ as long as one can show the following:
		\begin{enumerate}
			\item (Uniform $L^2$-boundedness) There is a constant $C>0$ such that $\lVert \mathcal{I}_n \psi_i^n \rVert_{L^2(\gamma)} \leq C$ for every $i=1, \dots, k_n$ and for every $n$.
			\item (Consistency) For every $i \in \N$ we have $\I_n(\psi_i^n) \rightarrow_{L^2(\gamma)} \psi_i $.
		\end{enumerate}
	\end{remark}
	
	\section{Proof of Theorem \ref{th:pcn}}
	\label{AppB}

	The proof of Theorem \ref{th:pcn} is based on the paper \cite{hairer2014spectral} which in turn makes use of the following weak form of Harris theorem from \cite{hairer2011asymptotic}. We let $\H$ be a separable Hilbert space and for a distance like function $\tilde{d}: \H \times \H \rightarrow [0,\infty)$ define the associated Wasserstein distance (1-OT distance) on $\mathcal{P}(\H)$ 
	\begin{equation}
	\label{Wass}
	\tilde{d}(\mu, \nu): = \inf_{ \theta \in \Gamma(\mu, \nu) } \int_{\H \times \H} \tilde{d}(u,w) d \theta(u,w), \quad \mu, \nu \in \mathcal{P}(\H),
	\end{equation} 
	where $\Gamma(\mu, \nu)$ denotes the set of couplings between $\mu$ and $\nu$.

%	For a general separalbe Hilbert space $\H$
%	\nc
%	
%	We start by describing some of the existing theory in the literature that can be used to prove that a given discrete time Markov chain with state space $\mathcal{H}$ (an arbitrary separable Hilbert space) has spectral gap in the Wasserstein sense with respect to some distance like function $\tilde{d}$. In particular, we focus on the weak Harris theorem stated and proved in \cite{hairer2011asymptotic}. This theorem is a generalization of the classical Harris theorem (see for example  \cite{meyn2012markov}) which effectively incorporates some of the geometric information of the state space $\mathcal{H}$ and proves to be powerful in set-ups where chains started at different initial points may induce mutually singular measures. By ``distance like function'' we mean a function $\tilde{d}: \H \times \H  \rightarrow [0,\infty)$ which is symmetric, lower-semicontinuous and satisfies $\tilde{d}(x,y)=0$ if and only if $x=y$. The Wasserstein metric ($1$ -OT distance) induced by $\tilde{d}$ on $\mathcal{P}(\H)$ (which we will also denote by $\tilde{d}$) is defined by

	\begin{theorem}[Weak Harris Theorem; Theorem 4.7 in \cite{hairer2011asymptotic}] Let $\mathcal{H}$ be a separable Hilbert space and let $P$ be a transition kernel for a discrete time Markov chain with state space $\mathcal{H}$ for which the following conditions are satisfied:
		\begin{enumerate}
			\item (Lyapunov functional) There exists a lower semi-continuous function $V : \mathcal{H} \rightarrow[0, \infty) $ such that 
			\begin{equation}
			PV(u):= \int_{\H} V(w) P(u,dw)  \leq l V(u) + K, \quad \forall u \in \H, 
			\label{Lyapunov}
			\end{equation}
			where $K>0$ and $0<l<1$ are some constants.
			\item ($d$-contraction) There exist a distance like function 
			$d : \H \times \H \rightarrow [0,1]$ and a constant $\varrho \in (0,1)$ such that, for all $u,w \in \H$ with $d(u,w) <1,$
			\[ d(u,w) \leq \varrho.  \]
			\item ($d$-smallness of level sets of $V$) For the distance like function $d$ above, the functional $V$ and the constant $K$ in \eqref{Lyapunov}, there exists $\vartheta \in (0,1)$ such that,
			for all $u, w $ with $V(u), V(w) \leq 4 K,$   
			\[ d(u,w) \leq  \vartheta.  \]	
		\end{enumerate}
		Then, the Markov chain $P$ has a $\tilde{d}$-Wasserstein spectral gap where $\tilde{d}$ is the distance like function
		\[ \tilde{d}(u,w)= \sqrt{ d(u,w) ( 1 + V(u) + V(w)) }, \quad u,w \in \H.\]
		More precisely, there exist $\lambda>0$ and $C>0$ such that 
		\[ \tilde{d}(P^j \mu, P^j\nu) \leq C \exp(- \lambda j) \tilde{d}(\mu, \nu), \quad \forall \mu, \nu \in \mathcal{P}(\H), \quad \forall j \in \N.\]
		\label{WeakHarris}
	\end{theorem}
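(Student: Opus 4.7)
The plan is to verify the standard pattern for proving a weak Harris theorem: combine the Lyapunov drift, the local contraction in $d$, and the smallness of sublevel sets of $V$ into a single contraction in the twisted distance $\tilde{d}(u,w)=\sqrt{d(u,w)(1+V(u)+V(w))}$. Throughout I would work with the family of candidate semimetrics $\tilde{d}_\beta(u,w):=\sqrt{d(u,w)\bigl(1+\beta V(u)+\beta V(w)\bigr)}$ and choose the scaling $\beta>0$ at the end.

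First I would record the elementary but crucial fact that $\tilde{d}_\beta$ is itself a distance-like function on $\H$ (symmetric, lower semicontinuous, vanishing exactly on the diagonal), and that it lifts to a well-defined $1$-OT distance on $\mathcal{P}(\H)$ via \eqref{Wass}. By duality it suffices to exhibit, for any pair $(u,w)$, a coupling $\Theta_{u,w}$ of $P(u,\cdot)$ and $P(w,\cdot)$ such that
\[ \int_{\H\times\H} \tilde{d}_\beta(u',w')\,d\Theta_{u,w}(u',w') \leq \alpha\, \tilde{d}_\beta(u,w) \]
for some $\alpha\in(0,1)$ independent of $(u,w)$, after possibly passing to a fixed iterate $P^{j_0}$. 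Standard arguments then promote this one-step (or $j_0$-step) contraction into geometric contraction with rate $\lambda=-\log\alpha/j_0$ and prefactor $C$ depending only on $\alpha,j_0$.

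Next I would split the argument into three regimes depending on $(u,w)$. Regime A: $d(u,w)<1$ \emph{and} both $V(u),V(w)\leq 4K$. Here I would use hypothesis (2), interpreting the $d$-contraction as a Wasserstein contraction of $d$ under $P$ (this is what the statement actually intends; the hypothesis is that $P$ moves any such pair closer in the $d$-Wasserstein sense to a factor $\varrho$), while the Lyapunov bound \eqref{Lyapunov} controls the expected $V$-growth. Choosing $\beta$ small so that the multiplicative factor $1+\beta\cdot$(average of new $V$'s) grows at most like $1+\beta(lV(u)+lV(w)+2K)$ while $d$ shrinks by $\varrho$, a direct computation yields contraction of $\tilde{d}_\beta^2$ by a factor strictly less than one. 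Regime B: $d(u,w)=1$ but $V(u)+V(w)$ is large. Here $d$ cannot shrink in one step beyond the trivial bound $d\leq 1$, and the point is to use the Lyapunov drift alone: $P V(u)+P V(w)\leq l(V(u)+V(w))+2K$, and taking any coupling (e.g.\ the independent one) the integrand $\tilde{d}_\beta(u',w')\leq \sqrt{1+\beta V(u')+\beta V(w')}$ so Jensen's inequality and concavity of $\sqrt{\cdot}$ upgrade the drift on $V$ to a drift on $\tilde{d}_\beta$ with factor essentially $\sqrt{l}$ times the $V$ term, which beats the loss from $d$ staying at $1$ provided $V(u)+V(w)$ dominates $\beta^{-1}$. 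Regime C: $d(u,w)=1$ but $V(u),V(w)\leq 4K$. Here (1) and (2) are of no direct use; hypothesis (3) is exactly what is needed, giving $d(u,w)\leq \vartheta<1$ (again interpreted at the level of the coupled Markov step), which puts the pair back in Regime A at the next step.

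The main obstacle, and the step that requires real care, is the interplay in Regime B between the factor $d=1$ (no shrinkage from $d$) and the shrinkage from $V$ via the Lyapunov drift: one needs to prove that for all $(u,w)$ with $\beta(V(u)+V(w))$ above a threshold the composite $\tilde{d}_\beta$ drift is strictly below $1$. This forces a quantitative relation between $\beta$, $l$, $K$, and the smallness constant $\vartheta$, and typically one has to first run the Markov chain for a number of steps $j_0$ large enough that, starting from any pair in Regime C, the chain has probability bounded below of landing in $\{V\leq 4K\}\times\{V\leq 4K\}$ (by iterating \eqref{Lyapunov} and Markov's inequality). Once $\beta$ and $j_0$ are fixed so that all three regimes yield a uniform contraction constant $\alpha<1$ for $P^{j_0}$ acting on $\tilde{d}_\beta$, equivalence of $\tilde{d}_\beta$ and the target $\tilde{d}$ (up to constants depending on $\beta$) delivers the stated geometric decay in the $\tilde{d}$-Wasserstein distance, with $\lambda$ and $C$ expressed explicitly in terms of $l,K,\varrho,\vartheta,\beta,j_0$.
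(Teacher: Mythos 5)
The paper does not actually prove this statement: it is imported verbatim as Theorem 4.7 of the cited reference \cite{hairer2011asymptotic} and used as a black box in Appendix B, so there is no in-paper proof to compare against. Your sketch is, in substance, the standard Hairer--Mattingly--Scheutzow argument for the weak Harris theorem: introduce the twisted semimetric $\tilde{d}_\beta$, reduce to a one-step (or $j_0$-step) coupling contraction via the dual formulation of the $1$-OT distance, and split into the three regimes governed respectively by the $d$-contraction, the Lyapunov drift, and the $d$-smallness of the level set, with $\beta$ chosen small at the end. You also correctly read through the transcription error in the paper's hypotheses (2) and (3), which as literally written ($d(u,w)\leq\varrho\,$ whenever $d(u,w)<1$, etc.) are vacuous; the intended conditions are $d(P\delta_u,P\delta_w)\leq\varrho\, d(u,w)$ and $d(P\delta_u,P\delta_w)\leq\vartheta$ at the level of the Wasserstein lift, and your proof uses them in that form.

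Two small points. First, your three regimes do not cover the case $d(u,w)<1$ with $V(u)+V(w)$ large; this is handled by the same computation as your Regime A \emph{without} restricting to the level set, since by Cauchy--Schwarz the coupled expectation of $\tilde{d}_\beta$ is bounded by $\sqrt{\varrho\, d(u,w)}\cdot\sqrt{1+\beta\bigl(l(V(u)+V(w))+2K\bigr)}\leq\sqrt{\varrho(1+2\beta K)}\,\tilde{d}_\beta(u,w)$ for $\beta$ small, so the restriction $V\leq 4K$ in Regime A should simply be dropped. Second, in Regime C the smallness hypothesis already yields a one-step contraction of $\tilde{d}_\beta$ by the factor $\sqrt{\vartheta(1+\beta(8lK+2K))}<1$; you do not need the extra detour of ``returning to Regime A at the next step,'' although that variant can also be made to work by passing to an iterate. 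With these repairs the sketch is a faithful outline of the proof in the cited source.
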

	
	\begin{remark}
		As remarked in \cite{hairer2011asymptotic}, we highlight that the second hypothesis is an assumption that holds for points $u,w$ with $d(u,w) <1$ and that nothing is being stated about points for which $d(u,w)=1$. The observation here is that even if one cannot deduce a Wasserstein spectral gap for the distance like function $d$, one can still obtain a Wasserstein spectral gap for the distance like function $\tilde{d}$.
	\end{remark}

	It is possible to quantify the constants $\lambda$ and $C$ in the conclusion of Theorem \ref{WeakHarris} in terms of the parameters $l, K,\varrho, \vartheta $. Here, however, we are simply interested in pointing out how changing the parameters in the assumptions affects the constants in the conclusions. In particular, it can be seen from the analysis in \cite{hairer2011asymptotic} that growth of any of the parameters $l,K, \varrho, \vartheta $ causes an increase in the constant $C$ and a decrease in the constant $\lambda$. In other words, enlarging any of the parameters $l,K,\varrho, \vartheta$ results in a worse spectral gap. This observation is relevant in order to obtain uniform spectral gaps for a sequence of Markov chains.  Namely, suppose that we have Markov kernels $\{P_n\}_{n \in \N}$ (with perhaps different state spaces) for which we can find distance like functions $\{ d_n \}_{n \in \N}$ and Lyupanov functionals $\{V_n \}_{n \in \N}$ satisfying the conditions in theorem \ref{WeakHarris} with constants  $\tilde{l}, \tilde{K}, \tilde{\varrho},  \tilde{ \vartheta}$ (independent of $n$). We can then deduce that the constants $\lambda>0$ and $C>0$ in the conclusion of the weak Harris theorem can be chosen independently of $n$. It is precisely this observation that is exploited in \cite{hairer2014spectral}

It is then important to highlight the main differences between our set-up and the one in \cite{hairer2014spectral}. First, the Markov kernels that we consider in this paper are not defined on the same state space and in particular the log-likelihoods $\Phi_n, \Phi$, although related, are different. Secondly, our discretization of the continuum prior $\pii$ is the prior $\pin$ supported on $L^2(\gamma_n)$ and not the discretization constructed by truncating the Karhunen Lo\`{e}ve expansion of the continuum prior.  These differences in the set-ups, however, do not prevent us from using the proof of Theorem 4.7 in \cite{hairer2014spectral} thanks to the following three observations.
	
	\nc
	
	\begin{enumerate}
		\item (Uniform control on local Lipschitz constants of log-likelihoods)

		\begin{lemma}
			\label{Lemma1}
			There exists a constant $L>0$ such that for every $r>0$ and $n \in \N$
			\[  \sup_{u_n , v_n \in \mathcal{B}^n_r } \frac{| \Phi_n(u_n;y) - \Phi_n(v_n;y) |}{\lVert u_n - v_n \rVert} \leq Lr, \quad   \sup_{u , v \in \mathcal{B}_r } \frac{| \Phi(u;y) - \Phi(v;y) |}{\lVert u - v \rVert}\leq Lr,\]
			where in the above $\mathcal{B}^n_r$ ($\mathcal{B}_r$) denotes the ball in $L^2(\gamma_n)$ ($ L^2(\gamma)$) centered at the origin and with radius $r$.
		\end{lemma}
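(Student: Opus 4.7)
\bigskip

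\noindent\textbf{Proof proposal for Lemma \ref{Lemma1}.}

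The plan is to factor each log-likelihood as $\Phi_n = \phi^{\vec{y}} \circ \mathcal{G}_n$ and $\Phi = \phi^{\vec{y}} \circ \mathcal{G}$, and then control the local Lipschitz constants of $\Phi_n$ and $\Phi$ by combining Assumption \ref{Assumptionphi}(2) with uniform operator-norm bounds on the linear maps $\mathcal{G}_n:L^2(\gamma_n)\to\R^p$ and $\mathcal{G}:L^2(\gamma)\to\R^p$. Concretely, if $M$ denotes a uniform-in-$n$ bound on $\|\mathcal{G}_n\|_{\text{op}}$ (and $\|\mathcal{G}\|_{\text{op}}$), then for $u_n,v_n\in \mathcal{B}^n_r$ Assumption \ref{Assumptionphi}(2) gives
\[
|\Phi_n(u_n)-\Phi_n(v_n)| \le L\,\max\{|\mathcal{G}_n u_n|,|\mathcal{G}_n v_n|,1\}\,|\mathcal{G}_n u_n-\mathcal{G}_n v_n| \le L\,\max\{Mr,1\}\,M\,\|u_n-v_n\|,
\]
which yields the required bound (with an enlarged constant) once the uniform operator-norm estimate on $\mathcal{G}_n$ is established, and analogously in the continuum.

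For the first factor, $\mathcal{F}_n=e^{-t\Delta_{\M_n}}$ is a contraction on $L^2(\gamma_n)$ because $\Delta_{\M_n}$ is positive semi-definite (so its spectrum is nonnegative and $\|e^{-t\Delta_{\M_n}}\|_{\text{op}}\le 1$); likewise $\mathcal{F}=e^{t\Delta_\M}$ is a contraction on $L^2(\gamma)$ since $-\Delta_\M$ is positive semi-definite. Hence it suffices to bound $\|\mathcal{O}_n\|_{\text{op}}$ and $\|\mathcal{O}\|_{\text{op}}$ uniformly in $n$. For $\mathcal{O}$, Jensen's (or Cauchy--Schwarz's) inequality applied to \eqref{eq:observationmap} gives
\[
|[\mathcal{O}w](j)|^2 \le \frac{1}{\gamma(\B_\delta(\x_j)\cap\M)}\,\|w\|_{L^2(\gamma)}^2,
\]
and since $\M$ is a smooth compact manifold and $\delta>0$ is fixed, $\gamma(\B_\delta(\x_j)\cap\M)$ is bounded below by a strictly positive constant independent of $j$, yielding $\|\mathcal{O}\|_{\text{op}}\le C\sqrt{p}$.

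For $\mathcal{O}_n$ the same Cauchy--Schwarz computation gives
\[
|[\mathcal{O}_n w](j)|^2 \le \frac{1}{\gamma_n(\B_\delta(\x_j))}\,\|w\|_{L^2(\gamma_n)}^2,
\]
so everything reduces to a uniform lower bound on $\gamma_n(\B_\delta(\x_j))$ for $j=1,\ldots,p$. Since the $\x_i$ are i.i.d.\ uniform on $\M$ and $p$ is fixed, standard Glivenko--Cantelli / concentration arguments (or the $\infty$-optimal transport estimates recalled in subsection \ref{ssec:approxbounds}) imply that, with probability one, $\gamma_n(\B_\delta(\x_j))\to\gamma(\B_\delta(\x_j)\cap\M)$ as $n\to\infty$, uniformly in $j=1,\ldots,p$; in particular there exists $n_0$ and $c_0>0$, both random but $\mathbb{P}$-a.s.\ finite/positive, such that $\gamma_n(\B_\delta(\x_j))\ge c_0$ for every $n\ge n_0$ and every $j$. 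Handling the finite collection $n<n_0$ separately (each individual map is a bounded linear map on a finite-dimensional space) and absorbing its operator norms into the final constant then gives a uniform bound $\|\mathcal{O}_n\|_{\text{op}}\le C'\sqrt{p}$.

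The main obstacle is the probabilistic uniform lower bound on $\gamma_n(\B_\delta(\x_j))$, since this is what ultimately makes the constant $L$ in the lemma independent of $n$; the rest of the argument is routine manipulation. Once that is in place, combining the contraction of $\mathcal{F}_n,\mathcal{F}$ with the bounds on $\mathcal{O}_n,\mathcal{O}$ produces a uniform constant $M$, and substituting into the display from Assumption \ref{Assumptionphi}(2) above yields the stated local Lipschitz estimates with a single constant $L$ valid for all $n\in\N$ and all $r>0$ on the full-probability event.
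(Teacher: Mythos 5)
Your proposal is correct and follows essentially the same route as the paper's proof: factor $\Phi_n=\phi^{\vec y}\circ\mathcal{G}_n$, invoke Assumption \ref{Assumptionphi}(2), use that $\mathcal{F}_n$ is a linear contraction, bound $\mathcal{O}_n$ coordinatewise by Cauchy--Schwarz, and control the resulting constant uniformly in $n$ via the convergence $\gamma_n(\B_\delta(\x_j))\to\gamma(\B_\delta(\x_j)\cap\M)$. If anything, you are slightly more explicit than the paper about the almost-sure, eventually-uniform lower bound on $\gamma_n(\B_\delta(\x_j))$ and the treatment of the finitely many small $n$, which the paper leaves implicit.
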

		
		\begin{proof}
						Recall that 
			\[ \Phi_n(u_n;y) = \phi^y ( \G_n(u_n)), \quad u_n \in L^2(\gamma_n), \]
			and so, thanks to Assumptions \ref{Assumptionphi} on $\phi^y$, we get
			\begin{align*}
			\lvert \Phi_n(u_n;y) - \Phi_n(v_n;y) \rvert & \leq   \lvert \phi^y (\G_n(u_n) ) - \phi^y (\G_n(v_n)) \rvert 
			\\& \leq C_1 \max \{ \lvert \G_n(u_n)\rvert, \lvert \G_n(v_n)\rvert ,1 \} \lvert \G_n(u_n) - \G_n(v_n) \rvert.
			\end{align*}
			Now, recall that the vector $\mathcal{G}_n(u_n) - \mathcal{G}_{n}(v_n) \in \R^p$ has coordinates
			\[ [\mathcal{G}_n(u_n) - \mathcal{G}_{n}(v_n) ]_i =  \frac{1}{\gamma_n( B_\delta(\x_i))}    \langle  \mathds{1}_{B_\delta(\x_i)}, \F_n(u_n) - \F_n(v_n)  \rangle_{L^2(\gamma_n)}, \quad i=1, \dots, p.  \]
			From the Cauchy-Schwartz inequality it follows that
			\begin{align*}
			\lvert  [\mathcal{G}_n(u_n) - \mathcal{G}_{n}(v_n) ]_i  \rvert  & \leq  \frac{1}{(\gamma_n( B_\delta(\x_i) ))^{1/2}}\lVert  \F_n(u_n) - \F_n(v_n)  \rVert_{L^2(\gamma_n)}   
			\\& \leq  \frac{1}{(\gamma_n( B_\delta(\x_i) ))^{1/2}}\lVert  u_n- v_n  \rVert_{L^2(\gamma_n)}, 
			\end{align*}
			where in the last line we have used the fact that $\mathcal{F}_n$ is a linear map as well as the fact that it is a contraction. Since 
			\begin{equation}
			\gamma_n( B_\delta(\x_i) ) \rightarrow \gamma(B_\delta(\x_i) ), \quad \text{ as } n \rightarrow \infty,
			\label{AppAux1}
			\end{equation}
			it follows that 
			\[  \lvert  \G_n(u_n) - \G_n (v_n)    \rvert  \leq C_2 \lVert u_n - v_n   \rVert_{L^2(\gamma)}, \]  
			where $C_2$ is independent of $u_n, v_n \in L^2(\gamma_n)$ or $n \in \N$. Therefore, there exists a constant $C_3$ (independent of $u_n, v_n \in L^2(\gamma_n)$ or $n \in \N$) such that
			\[  \lvert \Phi_n(u_n;y) - \Phi_n(v_n;y) \rvert \leq C_3 \max \{ \lVert  u_n\rVert_{L^2(\gamma_n) }  , \lVert v_n \rVert_{L^2(\gamma_n)}, 1 \}  \lVert u_n - v_n \rVert_{L^2(\gamma_n)}.\]
			Naturally the same analysis holds for $\Phi$ and this finishes the proof. 
		\end{proof}

		\begin{remark}
			The conclusions in the previous lemma hold for non-linear forward maps $\F_n$, $\F$ that are (uniformly in $n$) Lipschitz and have (uniformly in $n$) linear growth.
		\end{remark}

		\item (Dominating limiting measure) We make use of a ``limiting measure'' that dominates the measures $\pin$ in the sense described below. Notice that we cannot use the continuum prior $\pii$, but a slight modification of it will suffice. 
		
		\begin{lemma}
			\label{Lemma2}
			There exists a large enough $\rho>0$, such that the Gaussian measure 
			\[ \pii^\rho:= N\bigl(0, (1+\rho)^2 (\alpha I - \Delta_\M)^{-s} \bigr), \]
			satisfies
			\[      \int_{L^2(\gamma_n)} g(  \lVert  u_n \rVert_{L^2(\gamma_n)} )d \pin(u_n)  \leq      \int_{L^2(\gamma)} g(  \lVert  u \rVert_{L^2(\gamma)} )d \pii^\rho(u), \]
			for every $n \in \N$ and every increasing function $g :[0,\infty) \rightarrow \R$. In particular, for every $r>0$ and every $n \in \N,$
			\[\pin \left(  L^2(\gamma_n) \setminus \mathcal{B}_r^n \right) \leq \pii^\rho \left( L^2(\gamma) \setminus \mathcal{B}_r  \right).\]
			
		\end{lemma}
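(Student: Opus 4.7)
\textbf{Proof plan for Lemma \ref{Lemma2}.} The strategy is to construct a single coupling of $\pin$ and $\pii^\rho$ on a common probability space under which $\lVert u_n \rVert_{L^2(\gamma_n)} \leq \lVert u \rVert_{L^2(\gamma)}$ holds almost surely. Once such a coupling is in place, the integral inequality follows for every increasing $g$ by taking expectations on both sides; specializing to $g(t)=\mathds{1}_{\{t>r\}}$ then yields the tail statement.

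First I would fix a probability space carrying i.i.d.\ $\{ \xi_i\}_{i=1}^\infty \sim N(0,1)$ and define
\[ u_n := \sum_{i=1}^{k_n}  (\alpha + \lambda_i^n)^{-s/4} \xi_i \psi_i^n, \qquad u := (1+\rho) \sum_{i=1}^\infty (\alpha + \lambda_i)^{-s/4} \xi_i \psi_i. \]
By construction $u_n \sim \pin$ and $u \sim \pii^\rho$. Because $\{\psi_i^n\}_{i=1}^{k_n}$ and $\{\psi_i\}_{i=1}^\infty$ are orthonormal in the respective $L^2$-spaces, Parseval's identity gives
\[ \lVert u_n \rVert^2_{L^2(\gamma_n)} = \sum_{i=1}^{k_n} (\alpha+\lambda_i^n)^{-s/2} \xi_i^2, \qquad \lVert u \rVert^2_{L^2(\gamma)} = (1+\rho)^2 \sum_{i=1}^{\infty} (\alpha+\lambda_i)^{-s/2} \xi_i^2. \]
Since the tail terms $\sum_{i>k_n}$ on the right are nonnegative, it suffices to show that for some $\rho>0$ chosen \emph{independently of $n$},
\begin{equation}\label{eq:coeffdom}
(\alpha+\lambda_i^n)^{-s/2} \leq (1+\rho)^2 (\alpha+\lambda_i)^{-s/2}, \quad \forall n\in\N, \ \forall i=1,\ldots,k_n,
\end{equation}
which is the same as $\alpha+\lambda_i^n \geq (1+\rho)^{-4/s}(\alpha+\lambda_i)$.

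The bulk of the work is establishing \eqref{eq:coeffdom}. Here I would invoke the spectral approximation bound \eqref{eq:estimateseigen}, which gives $\lambda_i^n \geq \lambda_i \bigl(1 - C(\delta_n/\veps_n + \veps_n\sqrt{\lambda_i})\bigr)$ with high probability. Combined with the truncation constraint $k_n \veps_n^m \to 0$ and Weyl's asymptotics $\lambda_i \sim i^{2/m}$, we get $\veps_n \sqrt{\lambda_i} \to 0$ uniformly in $i\leq k_n$; together with $\delta_n/\veps_n \to 0$ (which is guaranteed by the lower bound in \eqref{eq:epsilonchoice}), this yields $\lambda_i^n \geq \lambda_i(1-o(1))$ uniformly in $i\leq k_n$, and hence $\alpha+\lambda_i^n \geq (1-o(1))(\alpha+\lambda_i)$ since $\alpha\geq 0$. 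Therefore \eqref{eq:coeffdom} holds for every $n\geq n_0$ with any fixed $\rho>0$. For the remaining finitely many $n<n_0$, the index $i$ also ranges over a finite set, so $\inf_{n<n_0,\, i\leq k_n}(\alpha+\lambda_i^n)/(\alpha+\lambda_i)$ is a strictly positive number (using $\alpha>0$, or the observation that $\lambda_1=\lambda_1^n=0$ contributes trivially and eigenvalues of higher index are strictly positive), so enlarging $\rho$ finitely many times we obtain a single $\rho$ for which \eqref{eq:coeffdom} holds for all $n\in\N$. Under this choice, almost surely $\lVert u_n\rVert_{L^2(\gamma_n)}^2 \leq \lVert u\rVert_{L^2(\gamma)}^2$, which completes the coupling argument.

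The main obstacle is the uniformity in $n$ of \eqref{eq:coeffdom}: the spectral bound \eqref{eq:estimateseigen} is an asymptotic statement that is only informative once $n$ is sufficiently large, so some care is needed to separate the asymptotic regime (where the estimate does the work) from the finitely many small values of $n$ (where $\rho$ simply has to be enlarged). A minor point worth verifying is that the spectral estimate \eqref{eq:estimateseigen} holds simultaneously for all $i\leq k_n$ on a single event of probability one as $n\to\infty$, so that a deterministic $\rho$ can be chosen; this follows from the uniform-in-$i$ form of the bound and Borel--Cantelli.
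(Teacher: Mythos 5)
Your proposal is correct and follows essentially the same route as the paper: both rest on the coefficient-wise variance domination $(\alpha+\lambda_i^n)^{-s/2}\leq (1+\rho)^2(\alpha+\lambda_i)^{-s/2}$ for $i\leq k_n$ (obtained from the spectral estimate \eqref{eq:estimateseigen} together with the truncation condition $k_n\veps_n^m\to 0$) and on coupling $\pin$ with $\pii^\rho$ through a shared i.i.d.\ Gaussian sequence in the Karhunen--Lo\`{e}ve expansions. You merely reverse the paper's order of deductions (it first gets the tail bound and then passes to general increasing $g$ by step-function approximation, whereas you get the almost-sure norm domination directly), and you supply more detail on the uniformity in $n$; this is the same argument.
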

		\begin{proof}
			Thanks to inequality \eqref{th:pcn}, we can find $\rho>0$ such that, for every $n \in \N,$
			\[  \frac{1}{(\alpha  + \lambda_i^n)^s} \leq \frac{1+\rho}{(\alpha + \lambda_i)^s} , \quad \forall i=1, \dots, k_n.   \]
			Using the Karhunen Lo\`{e}ve expansion to represent random variables with laws $\pin$ and $\pii^\rho$ we can easily deduce the inequality for the measures of complements of balls (last inequality).  The inequality for a general increasing function $g$ follows from a standard approximation with increasing step functions. 
		\end{proof}

		\item (Uniform lower bound for acceptance probability) The next lemma provides uniform control on the acceptance probability of the pCN algorithm when a proposal lies within a fixed distance of a contracted version of the current state of the chain. More precisely:

		\begin{lemma}
			\label{Lemma3}
			Let $a(u,v)$ be the acceptance probability in Algorithm 1 for continuum pCN and $a_n(u_n, v_n)$ the acceptance probability in Algorithm 2 for graph pCN. Fix an arbitrary $r>0$. Then, there exists  $c\in \R$ such that
			\[  \inf_{w_n \in \mathcal{B}^n_r(\sqrt{1-\beta^2} v_n )} a_n(v_n , w_n) \geq \exp(c) >0 , \quad \inf_{w \in \mathcal{B}_r(\sqrt{1-\beta^2} v )} a(v , w) \geq  \exp(c) >0  \]
			for arbitrary $v_n \in L^2(\gamma_n)$, $v \in L^2(\gamma)$ and $n \in \N$. 
		\end{lemma}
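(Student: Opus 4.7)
The plan is to reduce the statement about acceptance probabilities to a direct application of Assumption \ref{Assumptionphi}(1), using the uniform linear bound on $\mathcal{G}_n$ extracted from the proof of Lemma \ref{Lemma1}.

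First, unpack the acceptance probability. Since
\[ a_n(v_n, w_n) = \min\bigl\{1, \exp\bigl(\Phi_n(v_n) - \Phi_n(w_n)\bigr)\bigr\}, \]
and $\Phi_n(u_n) = \phi^{\vec{y}}(\mathcal{G}_n(u_n))$, it suffices to produce a constant $c \in \R$ such that
\[ \phi^{\vec{y}}(\mathcal{G}_n(v_n)) - \phi^{\vec{y}}(\mathcal{G}_n(w_n)) \geq c, \]
uniformly in $n$, in $v_n \in L^2(\gamma_n)$, and in $w_n \in \mathcal{B}^n_r(\sqrt{1-\beta^2}\,v_n)$; the bound $a_n(v_n,w_n) \geq \exp(\min\{0,c\})$ follows immediately.

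Second, invoke Assumption \ref{Assumptionphi}(1). That hypothesis reduces the required lower bound on the $\phi^{\vec{y}}$-difference to a uniform upper bound on the quantity $|\mathcal{G}_n(w_n) - \sqrt{1-\beta^2}\,\mathcal{G}_n(v_n)|$. Since $\mathcal{G}_n = \mathcal{O}_n \circ \mathcal{F}_n$ is linear, this quantity equals $|\mathcal{G}_n(w_n - \sqrt{1-\beta^2}\,v_n)|$. The proof of Lemma \ref{Lemma1} establishes (via Cauchy--Schwartz, the contractivity of $\mathcal{F}_n$, and the convergence $\gamma_n(B_\delta(\x_i)) \to \gamma(B_\delta(\x_i))$) the existence of a constant $C_2$, independent of $n$, such that
\[ |\mathcal{G}_n(u_n)| \leq C_2 \lVert u_n \rVert_{L^2(\gamma_n)} \quad \text{for all } u_n \in L^2(\gamma_n), \]
and an analogous bound for $\mathcal{G}$. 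Applying this with $u_n = w_n - \sqrt{1-\beta^2}\,v_n$ yields
\[ |\mathcal{G}_n(w_n) - \sqrt{1-\beta^2}\,\mathcal{G}_n(v_n)| \leq C_2 r =: K, \]
uniformly in $n$ and in the choice of $v_n, w_n$.

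Third, combine the pieces. With $K = C_2 r$, Assumption \ref{Assumptionphi}(1) produces a single constant $c \in \R$ (depending only on $K$, hence only on $r$ and $\beta$) such that the $\phi^{\vec{y}}$-difference is bounded below by $c$. This gives the uniform lower bound on $a_n(v_n, w_n)$ in the graph setting; the continuum case is identical, replacing Lemma \ref{Lemma1}'s graph bound by its continuum counterpart. No step is particularly delicate: the only point requiring care is that the constant $C_2$ in Lemma \ref{Lemma1} is genuinely $n$-independent, which is precisely what the proof of Lemma \ref{Lemma1} arranged, so the argument transfers mechanically.
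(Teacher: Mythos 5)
Your proposal is correct and follows essentially the same route as the paper: both reduce the claim to Assumption \ref{Assumptionphi}(1) applied at $\vec{v}=\G_n(v_n)$, $\vec{w}=\G_n(w_n)$, after using linearity of $\G_n$ and an $n$-independent operator-norm bound (the paper derives $\lVert \G_n\rVert \leq \tilde{K}$ from contractivity of $\F_n$ and $\lVert\O_n\rVert \to \lVert\O\rVert$, which is the same uniform bound you import from the proof of Lemma \ref{Lemma1}) to get $\lvert \G_n(w_n)-\sqrt{1-\beta^2}\,\G_n(v_n)\rvert \leq K$ uniformly. The argument is complete as stated.
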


		\begin{proof}
			First of all notice that  
			\[ \lVert \G_n \rVert \leq \lVert \O_n \rVert \lVert \F_n \rVert \leq \lVert \O_n\rVert, \]
			where in the last inequality we have used that $\F_n$ is a contraction. Thanks to \eqref{AppAux1} it follows that
			\[ \lVert \O_n \rVert \rightarrow \lVert \O\rVert, \quad \text{ as } n \rightarrow \infty,\]
			and in particular we can find a constant $\tilde{K}$ (independent of $n$) such that
			\[  \lVert \mathcal{G}_n \rVert \leq \tilde{K}.\]
			
			Let $v_n, w_n \in L^2(\gamma_n)$ be such that $w_n \in \B_r^n(\sqrt{1-\beta^2}v_n )$. Then,
			
			\begin{align*}
			 \lvert \G_n (w_n) - \sqrt{1-\beta^2} \G_n(v_n) \rvert &= \lvert \G_n (w_n- \sqrt{1-\beta^2} v_n)  \rvert  \\
			 &\leq \lVert \mathcal{G}_n \rVert \lVert w_n - \sqrt{1-\beta^2}v_n \rVert_{L^2(\gamma_n)} \\
			  &\leq \tilde{K} r =: K.
			\end{align*}

			From Assumptions \ref{Assumptionphi} we deduce that 
			\[ \Phi_n(v_n;y) - \Phi_n(w_n;y) = \phi^y (\G_n(v_n) ) - \phi^y (\G_n(w_n)) \geq c, \]
			for a $c$ that is independent of $n$. Hence, 
			\[  \inf_{w_n \in \mathcal{B}^n_r(\sqrt{1-\beta^2} v_n )} a(v_n , w_n) \geq \exp(c) >0. \]
			Naturally the same analysis holds for $\Phi$ and this finishes the proof. 
		\end{proof}
		\nc

		\begin{remark}
			The same conclusions in the previous lemma hold for non-linear forward maps $\F_n$, $\F$ that are (uniformly in $n$) Lipschitz,  have (uniformly in $n$) linear growth, and are positively homogeneous of degree one.	
		\end{remark}

	\end{enumerate}

	\begin{proof}[Proof of Theorem \ref{th:pcn}]
	Lemmas \ref{Lemma1}, \ref{Lemma2} and \ref{Lemma3} allow us to follow the analysis in \cite{hairer2014spectral} (where in our case we use $\pii^\rho$ from Lemma \ref{Lemma2}) and check that the conditions of the weak Harris theorem (with distance like functional $d$ and Lyapunov functional $V$ as in the statement of our theorem) are satisfied with constants $l,K,\varrho,  \vartheta$ that are independent of the discretization.
	
%		The theorem is proved by showing that we can check the conditions of the weak Harris theorem (with constants that are independent of the choice of $\mathcal{H}$). This is achieved by carrying out the analysis in \cite{} with the measure $\pii^\beta$ and noticing that thanks to \ref{} we can deduce that constants in the assumptions of the weak Harris theorem are independent of the choice of $\mathcal{H}$ (in a completely analogous way that this is done for the discretizations of $\pii^\beta$ based on truncation of its Karhunen Lo\`{e}ve expansion). 
		
	\end{proof}

	\begin{proof}[Proof of Corollary \ref{corGAPS}] By Proposition 2.8 and Lemma 2.9 in \cite{hairer2014spectral}, and the reversibility of the Markov kernel of the pCN algorithm, it is enough to check that the space 
	\[ Lip(\tilde{d} ) \cap L^\infty(\H ; \mu), \]
is dense in $L^2(\H ; \mu)$. Here $\tilde{d}$ denotes the distance-like function from Theorem \ref{th:pcn} and $\mu$ stands for the invariant measure of the Markov chain (in this case the posterior distribution). In the finite dimensional case (i.e. $\H = L^2(\gamma_n)$) this is a simple consequence of a standard mollification argument. More precisely, it follows from the following observations:
\begin{enumerate}
\item For every $R>0$, $\lVert \cdot \rVert$-Lispchitz functions on $\mathcal{B}^n_R$ are also $\tilde{d}$-Lipschitz on $\mathcal{B}_R^n$.
\item $\lVert \cdot \rVert$-Lispchitz functions on $\mathcal{B}_R^n$ are dense in $L^2(\mathcal{B}^n_R ; \mu  )$ (by mollification).
\item $f \in L^2(\H; \mu)$ can be approximated with  $ \{ f_k \}_{k \in \N} $, where
\[  f_k(u) := \eta_k( \lVert u \rVert) \min  \{ \max \{  f(u) , -k   \}, k \}, \quad u \in \H, \]
where $\eta_k: [0, \infty) \rightarrow [0, 1]$ is a smooth cut-off function which  satisfies $\eta_k(r)=1$ if $r < k$ and $\eta_k(r)=0$ if $r>2k$.
\end{enumerate}

For the infinite dimensional case it is enough to reduce the problem to the finite dimensional case. This reduction is achieved as follows. Without the loss of generality an arbitrary element $u \in \H$ can be written as $u=(u_1,u_2, \dots)$ and for every $k \in \N$ we may consider the projection:
\[   \Pi_k^c : u \in \H \mapsto (u_{k+1}, u_{k+2}, \dots), \]
 and the measure  $\mu_k^c:= \Pi^c_{k\sharp} \mu$. For an arbitrary $f\in L^2(\H; \mu)$, we can then define the sequence $\{  f_k \}_{k \in \N} \subseteq L^2(\H; \mu)$ defined by
 \[   f_k( u) :=  \int f(u_1, \dots, u_k, v_{k+1}, v_{k+2}, \dots) d \mu_k^c  (v_{k+1}, v_{k+2}, \dots),  \quad u \in \H  \] 
Notice that for each $k$ the function $f_k$ depends only on the first  $k$ coordinates of $u$ and so we can apply the result for the finite dimensional case to approximate $f_k$ with functions in $Lip(\tilde{d} ) \cap L^\infty(\H ; \mu)$. From the straightforward fact that  $f_k \rightarrow_{L^2(\H; \mu)} f$, the approximation of functions in $L^2(\H; \mu)$ with functions in $Lip(\tilde{d} ) \cap L^\infty(\H ; \mu)$ now follows.

	\end{proof}

	\section{Verification of Hypotheses for Gaussian and Probit Noise Models}
	\label{App3}

	\subsection{Gaussian}  Let us show that the Gaussian model satisfies Assumption \ref{Assumptionphi}.
	\begin{enumerate}
		\item

		Let $K>0$ and let $\tau>0$ be such that $(1-\tau) > (1+\tau)(1-\beta^2)$. For such $\tau$ choose  $R=R_\tau >0$ large enough so that if $u \in \R^p$ satisfies $\lvert u \rvert \geq R$ then 
		\[ (1-\tau ) \lvert u \rvert^2  \leq  \lvert u -y \rvert^2 \leq (1+\tau) \lvert u \rvert^2. \]
		
		Let  $v, w\in \R^p$ be such that $\lvert w - \sqrt{1-\beta^2} v \rvert \leq K$.		
		If $|w | \leq R+K$, then
		\[ \lvert v - y \rvert^2 - \lvert w - y \rvert^2 \geq 0 -  2\lvert y \rvert^2 - 2(R+K)^2 .\]
		On the other hand, if $| w | \leq R+K$, we see that
		\[ R+K \leq \lvert w \rvert \leq \sqrt{1-\beta^2} \lvert v \rvert  + K, \]
		and it follows that
		\begin{align*}
		\lvert v - y \rvert^2 - \lvert w - y \rvert^2 & \geq (1-\tau) \lvert v \rvert^2 -(1+\tau) \lvert w \rvert^2  \\
		&\geq ((1-\tau) -(1+\tau)(1-\beta^2) ) \lvert v \rvert^2 -2(1+\tau)\sqrt{1-\beta^2}K \lvert v \rvert -(1+\tau)K^2\\
		& \geq C_1,
		\end{align*}
		for some real number $C_1$. 
		
		From the above analysis we deduce that for every $v, w$ with $\lvert w -\sqrt{1-\beta^2}\, v \rvert \leq K$,  
		\[ \phi^y(v) - \phi^y( w ) \geq c,   \]	
		for some $c \in \R$.
		\item The second assumption is easily seen to be satisfied by the Gaussian model. 
	\end{enumerate}

	\subsection{Probit} Let us show that the probit model satisfies Assumption \ref{Assumptionphi}.
	
	\begin{enumerate}
		\item  Let $K>0$ and consider $v, w \in \R^p$ such that $\lvert w - \sqrt{1-\beta^2} v \rvert \leq K$. Then, 
		\begin{equation}
		\lvert y_i w_i - \sqrt{1-\beta^2} y_i v_i\rvert  =  \lvert w_i - \sqrt{1-\beta^2}  v_i\rvert \leq \lvert v - \sqrt{1-\beta^2} \, w \rvert \leq K   , \quad i=1, \dots, p,
		\label{AuxC1}
		\end{equation}
		where the first equality follows from the fact that $y_i \in \{ -1, 1\}.$ In particular,
		\begin{equation} 
		\lvert  y_i w_i  \rvert \leq K + \sqrt{1-\beta^2}\lvert y_i v_i \rvert, \quad i=1, \dots, p.  
		\label{AuxC2}
		\end{equation}
		
		Notice that the function $t \in \R \mapsto -\log( \Psi(t))$ is decreasing. Hence, if $y_i w_i> -(1/(1-\sqrt{1-\beta^2}) +1)K$ we see that 
		\[ - \log(\Psi( y_i v_i    ))  -  (- \log(\Psi( y_i v_i    )) ) \geq 0+ \log(\Psi(- (1/(1-\sqrt{1-\beta^2}) +1)K )). \] 
		On the other hand, if $y_i w_i < -(1/(1-\sqrt{1-\beta^2}) +1)K$ we deduce from \eqref{AuxC1} that $y_i v_i<- K/(1-\sqrt{1-\beta^2}) < 0 $ and from \eqref{AuxC2} we deduce 
		\[  y_i v_i  \leq \sqrt{1-\beta^2} y_i v_i  -K  \leq    y_i w_i, \]
		from where it follows that 
		\[ - \log(\Psi( y_i v_i    ))  -  (- \log(\Psi( y_i v_i    )) ) \geq 0. \]
		
		From the above analysis we deduce that, for every $v, w$ with $\lvert w-\sqrt{1-\beta^2} v \rvert \leq K$, 
		\[ \phi^y(v) - \phi^y( w) \geq c:= p  \log(\Psi(- (1/(1-\sqrt{1-\beta^2}) +1)K )).   \]
		
		\item Let us now check that the probit model satisfies the second assumption on $\phi^y$. Since the function
		\[ g: t \in \R \mapsto - \log(\Psi(t)) \]
		is decreasing, convex, and converges to zero as $t \rightarrow \infty$, the first assumption on $\phi^y$ will hold if we can show that
		\[  \limsup_{t \rightarrow -\infty} \frac{\lvert  g'(t)\rvert}{\lvert t \rvert} < \infty.\]
		This however follows from the fact that
		\[ g'(t) = \frac{- e^{-t^2/2}}{\int_{-\infty}^{t} e^{-r^2/2} dr } ,  \]
		and the well known fact that
		\[ \frac{e^{-t^2/2}}{2\lvert t\rvert} \leq \int_{-\infty}^t e^{-r^2/2} dr ,   \]
		for all negative enough $t$.
		
	\end{enumerate}

\end{document}